\documentclass{article}
\usepackage[margin=1.2in]{geometry}
\usepackage[affil-it]{authblk} 
\usepackage{latexsym, amsthm, amscd, amsfonts, mathrsfs, amsmath, amssymb, stmaryrd, tikz-cd, mathrsfs, bbm, url, esint, mathtools, enumerate, caption, subcaption}
\usepackage{dsfont}
\usepackage{blkarray}
\usepackage[utf8]{inputenc} 
\usepackage[T1]{fontenc}    
\usepackage{hyperref}       
\usepackage{url}            
\usepackage{booktabs}       
\usepackage{nicefrac}       
\usepackage{microtype}      
\usepackage{xcolor}  
\usepackage{algorithm}
\usepackage{algpseudocode}
\usepackage{enumitem}
\usepackage{comment}

\newcommand{\tr}{\mathrm{Tr}}
\newcommand{\He}{\mathrm{He}}

\newcommand{\Cov}{\mathrm{cov}}

\newcommand{\pr}{\mathbb{P}}
\newcommand{\E}{\mathbb{E}}

\DeclareMathOperator{\Var}{Var}

\DeclareMathOperator{\argmin}{argmin}

\DeclareMathOperator{\Maj}{Maj}

\DeclareMathOperator{\ReLU}{ReLU}
\DeclareMathOperator{\Unif}{Unif}
\DeclareMathOperator{\Rad}{Rad}

\DeclareMathOperator{\sign}{sgn}

\DeclareMathOperator{\NN}{NN}

\DeclareMathOperator{\poly}{poly}

\newcommand{\cD}{\mathcal{D}}
\newcommand{\cN}{\mathcal{N}}
\newcommand{\cE}{\mathcal{E}}
\newcommand{\cP}{\mathcal{P}}

\newcommand{\cI}{\mathcal{I}}
\newcommand{\bR}{\mathbb{R}}

\newcommand{\bN}{\mathbb{N}}

\newcommand{\cC}{\mathcal{C}}
\newcommand{\cL}{\mathcal{L}}
\newcommand{\cB}{\mathcal{B}}

\definecolor{mydarkblue}{rgb}{0,0.08,0.45}
  \hypersetup{ %
    colorlinks=true,
    linkcolor=mydarkblue,
    citecolor=mydarkblue,
    filecolor=mydarkblue,
    urlcolor=mydarkblue,
    }
\definecolor{DSgray}{cmyk}{0,0,0,0.7}
\definecolor{DSred}{cmyk}{0,0.7,0,0.7}

\theoremstyle{plain}
\newtheorem{definition}{Definition}
\theoremstyle{plain}
\newtheorem{ass}{Assumption}
\theoremstyle{plain}
\newtheorem{theorem}{Theorem}
\theoremstyle{plain}
\newtheorem{informaltheorem}{Informal Theorem}
\theoremstyle{plain}
\theoremstyle{plain}
\newtheorem{example}{Example}
\theoremstyle{plain}
\theoremstyle{plain}
\newtheorem{proposition}{Proposition}
\theoremstyle{plain}
\newtheorem{claim}{Claim}
\theoremstyle{plain}
\newtheorem{lemma}{Lemma}
\theoremstyle{plain}
\theoremstyle{plain}
\theoremstyle{remark}
\newtheorem{remark}{Remark}
\theoremstyle{remark}
\theoremstyle{plain}

\title{Learning with Shallow Neural Networks\\ on
Cluster-Structured Features}

\author[]{
Elisabetta Cornacchia}
\author[]{Laurent Massoulié}

\affil[]{\small INRIA, DI/ENS, PSL. }

\date{}

\begin{document}

\maketitle
\begingroup
    \renewcommand\thefootnote{}  
\footnotetext{\small 
\noindent
Email: \texttt{elisabetta.cornacchia@unibocconi.it}, \texttt{laurent.massoulie@inria.fr}.}
\endgroup

\begin{abstract}
  The success of deep learning in high-dimensional settings is often attributed to the presence of low-dimensional structure in real-world data. While standard theoretical models typically assume that this structure lies in the target function—projecting unstructured inputs onto a low-dimensional subspace—data such as images, text or genomic sequences exhibit strong spatial correlations within the input space itself. In this paper, we propose a tractable model to study how these correlations affect the sample complexity of learning with gradient descent on shallow neural networks. Specifically, we consider targets that depend on a small number of latent Boolean variables, and input features grouped into clusters and correlated with the latent variables. Under an identifiability assumption, we show that for a layerwise gradient-descent variant, the sample complexity scales with the number of hidden variables and, when the signal-to-noise ratio is sufficiently high, is independent of the input dimension, up to logarithmic terms. We empirically test our theoretical findings on both synthetic and real data. 
\end{abstract}

\section{Introduction}
A common theme in deep learning is that high-dimensional problems encountered in practice often have an underlying low-dimensional structure, a property widely regarded as central to the success of modern algorithms. Theoretical analyses typically incorporate this assumption by modeling the labels as functions of a low-dimensional projection of the input (e.g., multi-index target functions), while treating the inputs themselves as unstructured—often drawn from an isotropic Gaussian or uniform Boolean distribution~\cite{arous2021online,bietti2022learning,abbe2023sgd}. This simplification makes the analysis tractable, since the training dynamics can be described in terms of a small number of low-dimensional \textit{order parameters} (or \textit{sufficient statistics}) whose evolution can be tracked analytically. Within this framework, several works have shown that the sample complexity to learn with shallow neural networks scales as $O(d^{\cI})$, where $d$ is the input dimension and $\cI$ is an exponent depending on the target function—capturing, for example, the information~\cite{arous2021online,bietti2022learning}, generative~\cite{damian2024computational,dandi2024benefits}, or leap exponent~\cite{abbe2023sgd,abbe2022merged}.

Real-world data, by contrast, rarely resemble such unstructured inputs. 
Images, text, and biological data exhibit strong correlations among input features, so the relevant low-dimensional structure is often encoded in the input distribution itself, not only in the target function. 
Empirical estimates illustrate this phenomenon: for instance, the intrinsic dimension of MNIST, despite its $784$-dimensional pixel representation, is estimated to be on the order of $15$, while that of CIFAR-10 is estimated around $50$~\cite{camastra2003data,costa2004learning,levina2004maximum,facco2017estimating,ansuini2019intrinsic}. 
Several works have studied how correlations among input coordinates can reduce the sample complexity of learning compared with unstructured product distributions~\cite{goldt2020modeling,mousavi2023gradient}. 
However, these analyses typically remain in regimes where the latent dimension diverges, and the resulting complexity still depends critically on the ambient dimension $d$.

In this work we focus on a different regime: the number of latent degrees of freedom remains bounded while the number of observed features grows. 
This is a natural regime for highly redundant data, where many observed coordinates provide noisy views of a small number of underlying factors. 
For example, in genomics, large collections of genes may be co-regulated by a much smaller number of latent biological programs, producing high-dimensional observations with strong feature redundancy (see experiments in Section~\ref{sec:experiments}). 
From a statistical perspective, this is precisely the regime in which one might hope for sample complexity governed by the intrinsic dimension rather than the ambient dimension. 
From an analytical perspective, however, it is not covered by existing high-dimensional asymptotic theories: since the latent dimension is fixed, there is no large-latent-dimensional or thermodynamic limit from which to derive closed dynamical equations or self-averaging order parameters.

Motivated by this setting, we introduce a tractable model of clustered, correlated features. 
Labels are determined by a bounded set of hidden binary \emph{latent variables}, while the high-dimensional \emph{observable variables} are generated from them through a sub-Gaussian noisy process, subject to an essential \emph{identifiability} condition. We prove that a simplified version of gradient descent---with layer-wise training--on a two-layer fully connected network can leverage these correlations to learn any target function of the latent variables. Importantly, in high signal-to-noise (SNR) regimes, the sample complexity becomes independent of the input dimension $d$, up to logarithmic terms.

This perspective is complementary to works showing depth separations for hierarchical data models~\cite{mossel2016deep,cagnetta2024deep,dandi2025computational,ren2026provable,tabanelli2026deep}. 
Rather than asking whether deep networks outperform shallow ones on complex hierarchical structures, we ask a more basic question: when the relevant correlations are already accessible to a shallow architecture, can gradient-based training exploit them without being explicitly given the latent representation? 
Our results give a positive answer in a bounded-latent-dimensional regime, showing that the relevant complexity can be governed by the intrinsic rather than ambient dimension.

\section{Related Work}
\paragraph{Gradient Descent on Product Measures.}
While task learnability with neural networks has been widely studied, most analyses remain within the classical framework where inputs are sampled from an unstructured distribution—typically uniform Boolean or isotropic Gaussian. In this setting, precise characterizations of the sample and time complexities needed to learn single and multi-index models with gradient descent on shallow networks have been obtained in terms of the information~\cite{arous2021online,bietti2022learning}, generative~\cite{damian2024computational,dandi2024benefits,troiani2024fundamental,joshi2025learning}, and leap \cite{abbe2023sgd,abbe2022merged,glasgow2023sgd,kou2024matching,barak2022hidden,joshi2024complexity} exponents of the target function. A few works have shown that shifting the first moment of an isotropic Gaussian (or Boolean) distribution can break symmetries and thereby reduce learning complexity~\cite{malach2021quantifying,cornacchia2023mathematical,daniely2020learning,joshi2024complexity,cornacchia2025low}. \cite{dandi2025computational,tabanelli2026deep} further study the computational benefits of deep versus shallow architectures for learning hierarchical targets, with Gaussian inputs of identity covariance. However, all these works consider product measures and do not address correlations among input features.


\paragraph{Gradient Descent on Structured Inputs.}
Beyond product measures, \cite{mousavi2023gradient,ba2023learning} studied single-index models on Gaussian inputs with spike-covariance structure, showing that when the spike aligns with the target, sample complexity is drastically reduced compared to the unstructured case. \cite{goldt2020modeling} proposed the Hidden Manifold Model (HMM), a generative model in which high-dimensional inputs lie on a low-dimensional manifold and labels depend on their position within it. \cite{bardone2024sliding,szekely2024learning} study detection problems where the signal is encoded into high-order moments of the input, and show separation between neural networks and random feature models. These analyses are in the limit of large input and manifold dimensions, and the complexity still scales with the ambient dimension, as opposed to our analysis which focuses on small latent dimension ($O_d(1)$).
Moreover, \cite{arnal2024learning} empirically study how neural networks learn tasks with hidden factorial structure. 
\cite{abbe2024far} establish a lower bound for weak learning tasks with low globality-degree, where signal resides in global rather than local correlations. 

An active line of work studies models of tree-like correlation structures among features ~\cite{mossel2016deep,poggio2017and}. 
In this direction, \cite{cagnetta2024deep} introduce the Random Hierarchy Model (RHM), where a root label recursively generates hidden variables on a tree and the observed input is given by the leaves. 
They show that deep architectures aligned with the hierarchy can exploit this compositional structure, while shallow networks suffer from the curse of dimensionality. 
More recently, \cite{ren2026provable} proved that layerwise gradient descent on an $L$-layer convolutional network can efficiently learn RHMs, with sample complexity essentially $m^L$, where $m$ is the number of production rules per token and $L$ is the hierarchy depth. 
Subsequent works have explored related hierarchical frameworks in next-token prediction~\cite{cagnetta2024towards}, Transformer architectures~\cite{garnier2024transformers}, and diffusion models~\cite{favero2025compositional,sclocchi2025phase,sclocchi2024probing}. Our work is complementary: we do not study depth separation or architectures matched to a growing hierarchy. 
Instead, we ask whether shallow networks can exploit correlations among observed features in a bounded-latent-dimensional regime. 


A related line of work studies learning in structured Gaussian models in the thermodynamic regime, where the number of samples scales linearly with the input dimension. In particular, \cite{mignacco2020role,mignacco2020dynamical} analyze the learning of mixtures of two Gaussians using a single-layer perceptron, while \cite{refinetti2021classifying} study the Gaussian XOR problem with a two-layer architecture. Furthermore, \cite{loureiro2021learning} extend the teacher–student framework to more realistic data distributions by introducing Gaussian covariate generalization. Our work departs from these approaches in two key respects: (i) we consider a broader class of structured models, beyond purely Gaussian inputs; and (ii) our results extend beyond the thermodynamic regime, with sample complexity scaling at most logarithmically with the input dimension.








\paragraph{Identifiability of Latent Structure.}
Beyond gradient descent learning, some works studied parameters learning in latent variable models, considering settings similar to ours~\cite{allman2009identifiability,anandkumar2014tensor,baik2005phase,benaych2011eigenvalues,arora2012learning,hyvarinen2024identifiability}. In particular, 
\cite{anandkumar2015overcomplete} study latent variable models with topic persistence, where a single topic is assumed to generate a sequence of $n$ consecutive observations—an assumption that resembles our feature model. They provide sufficient conditions for identifiability based on observations of high-order moments of the observations. 
Instead, we focus on supervised learning tasks involving latent representations, using gradient descent on neural networks, rather than just identifiability of the hidden structure. 





\section{Summary of Results}
\begin{figure}[t]
    \centering
    \includegraphics[width=0.5\linewidth]{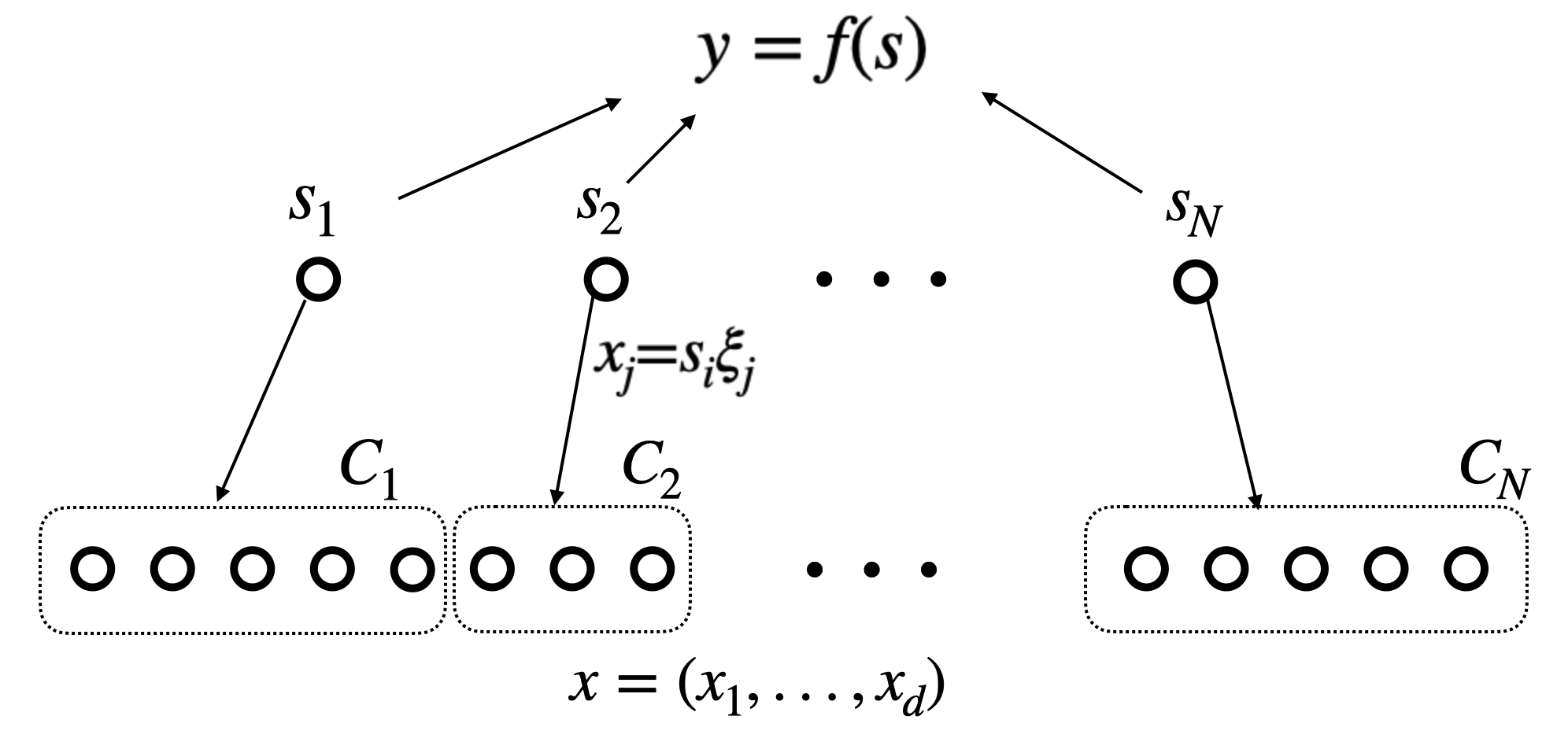}
    \caption{Our clustered feature model, with \textit{latent variables} $s \in \{ \pm 1 \}^N$, \textit{noise variables} $\xi \in \bR^d$, and \textit{observable variables} $x \in \bR^d$.}
    \label{fig:model_features}
\end{figure}
We study a \emph{clustered feature model}, in which the $d$ observable coordinates are partitioned into $N$ disjoint clusters $C_1,\dots,C_N$. Each cluster $C_i$ is associated with a hidden binary \emph{latent (or topic) variable} $s_i\in\{\pm 1\}$, so that all features in the same cluster share a common signal component. Concretely, an input $x\in\bR^d$ is generated by first sampling the latent variables $s=(s_1,\dots,s_N) \in \{ \pm 1 \}^N$, then setting each feature $x_j$ with $j\in C_i$ to  
\(
x_j = s_i \xi_j,
\)
where $\xi_j \in\bR$ is a sub-Gaussian noise variable, independent of $s$. Each $\xi_j$ has mean $m_j := \E[\xi_j]$ (interpreted as the \emph{signal strength}), and is independent across samples. The label is given by an arbitrary function $f$ of the latent variables: $y=f(s)$. See Figure~\ref{fig:model_features} for an illustration of our data model and Def~\ref{def:datamodel} for a formal definition. Throughout, $N$ is treated as fixed while $d$ grows.

This model captures high-dimensional data in which many observed coordinates are noisy, redundant measurements of a much smaller number of latent factors. 
A natural example is genomics, where groups of genes may be co-regulated by a small number of latent biological programs, producing clustered covariance patterns and high feature redundancy. 
We refer to Section~\ref{sec:experiments} for examples of real datasets with structure consistent with this model.

Our main structural assumption ensures that the latent variables are statistically recoverable. 
For each cluster, define
\[
v_i:=\sum_{j\in C_i}m_j^2,
\qquad
v_{\rm sum}:=\sum_{i=1}^N v_i=\|m\|_2^2.
\]
Here $v_i$ is the total signal power carried by cluster $C_i$, while $v_{\rm sum}$ is the total signal power across all observed coordinates. 
We assume that no cluster is vanishingly weak: there exists a constant $c>0$ such that
\[
\frac{v_i}{v_{\rm sum}}\ge c,
\qquad \forall i\in[N].
\]
This condition ensures that each latent variable leaves a detectable footprint in the observed features. 
When the noise variables have bounded variance, $v_{\rm sum}$ naturally plays the role of an aggregate signal-to-noise ratio. 
Our main theorem focuses on the high-SNR regime
\[
v_{\rm sum}=\Theta_d(d),
\]
where the total signal grows linearly with the ambient dimension.


Let us give few examples of simple models captured by this framework. 

\begin{example}[Mixture of Gaussians]  \label{exa:mixture_Gauss}
If the noise variables are Gaussian, our model recovers standard Gaussian mixture examples. 
For instance, let $N=1$, with $s_1\sim\Unif\{\pm1\}$, $y=s_1$, and
\(
\xi\sim\mathcal N(m,\sigma^2 I_d)
\)
for some $m\in\mathbb R^d$ and $\sigma>0$. 
Then $x=s_1\xi$ is distributed as a symmetric mixture of two Gaussians,
\[
x\sim \frac12\mathcal N(m,\sigma^2 I_d)
+\frac12\mathcal N(-m,\sigma^2 I_d),
\]
as studied in~\cite{mignacco2020dynamical,mignacco2020role}. 
Similarly, when $N=2$, with label $y=s_1s_2$, the model gives a Gaussian XOR problem~\cite{refinetti2021classifying}. 
Previous works typically study the low-SNR regime $\|m\|_2=\Theta_d(1)$, where the sample complexity remains linear in the ambient dimension. 
By contrast, our analysis focuses on the high-SNR regime
\(
v_{\rm sum}=\|m\|_2^2=\Theta_d(d),\)
\(\sigma^2=\Theta_d(1).
\)

\end{example}

\begin{example}[Binary Symmetric Cluster (BSC) Model]  \label{exa:treemodel}
A particularly simple instance of our framework is obtained when the observed variables are Boolean noisy copies of the latent variables. 
For each cluster $C_i$ and each coordinate $j\in C_i$, let
\(
\xi_j\sim \Rad(1-\delta),
\) \(\delta\in[0,1],
\)\footnote{Rademacher distribution: $z \sim \Rad(p)$ means $\pr(z=1) = 1-\pr(z=-1) = p$.}
independently across coordinates, so that
\(
x_j=s_i\xi_j, j\in C_i.
\)
Equivalently, each observable coordinate is obtained by passing the latent bit $s_i$ through a binary symmetric channel (BSC) with flip probability $\delta$. 
In this case,
\(
m_j=\mathbb E[\xi_j]=1-2\delta,
\)
and therefore
\(
v_i=|C_i|(1-2\delta)^2,
v_{\rm sum}=d(1-2\delta)^2.
\)
When the clusters have comparable sizes, the identifiability condition is satisfied. When the clusters have the same size, we refer to this setting as the \emph{homogeneous BSC model}.
\end{example}

Our goal is to characterize the number of samples required for a two-layer neural network 
\(
\NN(x;\theta) := \sum_{h=1}^n a_h \, \sigma(w_h^\top x + b_h),
\)
trained by stochastic gradient descent (SGD), to achieve small test error:
\(
\E_{(x,y)}\!\left[ \big(y - \NN(x;\hat\theta)\big)^2 \right]
\). We will consider the latent dimension $N$ and the target function $f$ to be fixed, and we  study the scaling of the complexity as $d$ grows.

Our main theorem reads as follows (see Theorem~\ref{thm:clusters} for a formal statement).
\begin{informaltheorem} \label{thm:clusters_informal}
For any $\epsilon,\delta>0$, under the clustered feature model and assuming identifiability and high-SNR conditions, layerwise-SGD on a two-layer network with $O(\log(1/\epsilon))$ hidden neurons, requires at most $O(\frac{\log(d)^2d}{v_{\rm sum}\delta^2})$ samples and $O(1/\delta^2)$ steps to achieve error at most $\delta$ with probability at least $1-\poly(\epsilon)$.
\end{informaltheorem}
In particular, since the overall signal satisfies $v_{\rm sum}= \theta(d)$, the sample complexity depends on the ambient dimension $d$ only logarithmically. 
This result holds for Gaussian 
initialization and some polynomial activation functions 
(Assumption~\ref{ass:activation}); the formal statement is given in 
Theorem~\ref{thm:clusters}.

We then specialize to the homogeneous BSC model and obtain a result that holds in the wider SNR regime $v_{\rm sum} = \Omega_d(\log(d)^4)$, 
under a mild non-degeneracy condition on the target function $f$ 
(Theorem~\ref{thm:homogeneous_clusters}).

\section{Setting and Formal Result}


\paragraph{Data Model.} Let us give a formal definition of our data model. As we mentioned earlier, we assume $N$ and $f$ to be fixed, and we study the scaling of the sample complexity as $d$ grows.

\begin{definition}[Clustered-Structured Features] \label{def:datamodel}
Let $(x,y) \in \bR^d \times \bR$ and $f: \{\pm 1\}^N \to \bR$.
Assume that:
\begin{itemize}
    \item $s \sim \Unif\{\pm 1\}^N$;
    \item There exists a partition $\cC = (C_1,...,C_N)$ of $[d]$ into $N$ disjoint clusters:
    $[d] = \cup_{i \in [N]} C_i$, such that for all $i \in [N]$ and $j \in C_i$,
    $$x_j = s_i \xi_j,$$
    where $(\xi_j)_{j \in [d]}$ are mutually independent,
    independent of $s$, sub-Gaussian random variables with
    $\|\xi_j\|_{\psi_2} \le \psi = O_d(1)$, finite means
    $m_j := \E[\xi_j] = O_d(1)$, variances $\tau^2_j := \Var(\xi_j)$ such that $ \frac{1}{d} \sum_{j=1}^d \tau_j^2 = \theta_d(1)$,
    and bounded third moments $\sup_j \E|\xi_j - m_j|^3 = O_d(1)$;
    \item $y = f(s)$.
\end{itemize}
We say that $(x,y) \sim \cD_{N,f,\cC,\xi}$.
\end{definition}
We call the $s \in \{\pm 1 \}^N$ the \emph{latent} (or \emph{topic}) variables, the $x \in \bR^d$ the \emph{observable} variables, and the $\xi \in \bR^d$ the \emph{noise} variables. Crucially, we assume that the size of the topic space $N$ is constant, as the observable dimension $d$ grows, i.e.:
    $N = O_d(1).$
Notice that, under this model, the covariance structure of $x$ is block diagonal with respect to the partition $\mathcal C$, with within-cluster correlations induced by the shared latent variable $s_i$ and no cross-cluster correlations.
Throughout the paper, we assume that the following identifiability assumption is satisfied.
\setcounter{ass}{0}
\begin{ass}[\textbf{(A1)}, Identifiability] \label{ass:data}
    For each cluster $C_i$, $i \in [N]$, define: $ v_i := \sum_{j \in C_i} m_j^2  $, $v_{\min} := \min_{i \in [N]} v_i$ and $v_{\rm sum} := \sum_{i \in [N]}v_i = \| m\|_2^2 $, where $m_j:= \E[\xi_j]$, for $j \in [d]$. We assume that there exists a $c>0$ such that 
    \begin{align*}
        a) \quad \frac{v_{\min}}{v_{\rm sum}} \geq c, \qquad  \qquad  b) \quad \frac{v_{\rm sum}}{d} \geq c.
    \end{align*}
\end{ass}
Assumption~\ref{ass:data}a) states identifiability of individual clusters. It prevents degenerate cases in which one cluster contributes vanishingly small signal compared to the others, making the corresponding latent variable statistically unrecoverable. On the other hand, Assumption~\ref{ass:data}b) is a lower bound on the signal-to-noise (SNR) ratio, which is required for our main Theorem (Thm.~\ref{thm:clusters}) to hold. In fact, under our data model assumption $ \sum_{j=1}^d \tau_j^2 = \theta_d(d)$, Assumption~\ref{ass:data}b) is equivalent to $\|m\|_2 / \| \tau\|_2 \geq c'>0$. In Theorem~\ref{thm:homogeneous_clusters} we show that for the specific case of homogeneous BSC models, Assumption~\ref{ass:data}b) can be released to $v_{\rm sum} = \omega_d(\log(d)^2)$.

\paragraph{Architecture.} We adopt standard choices for both architecture and optimizer, since our focus is on the impact of the data structure. Concretely, we use a 2-layer fully connected network: 
\begin{align}
    \NN(x;\theta)=\sum_{h=1}^n a_h \sigma(w_h^T x + b_h),
\end{align} where $\theta = (w,a,b) \in \bR^{d\times n}\times \bR^{n} \times \bR^n$ are the network's trainable parameters. 
In our main Theorem~\ref{thm:clusters}, we assume the activation to be a polynomial, satisfying the following technical assumption.

\begin{ass}[\textbf{(A2)}, Activation]\label{ass:activation}
Let $\sigma(x)=\sum_{\ell=0}^P c_\ell x^\ell$ be a polynomial activation of degree
$P=\Theta_d(1)$, with $P\ge 2^N$. Let
\(
    \mu := \frac1d\sum_{j=1}^d \tau_j^2,
    \) and \(
    v_{\rm sum}:=\|m\|_2^2,
\)
and for $v>0$ define
\[
    S_v(t):=\mathbb E_{G\sim\mathcal N(0,v)}[\sigma'(t+G)].
\]
We assume that there exists $c_0>0$ such that, for all $0\le k\le N$,
\[
    \left|S_{\mu+v_{\rm sum}/d}^{(k)}(0)\right|\ge c_0.
\]
\end{ass}

\begin{remark}
We note that in Assumption~\ref{ass:activation} the Hermite non-degeneracy condition is tied to the data model through the scalars
\(\mu \) (the reference smoothing scale) and \(v_{\rm sum}\) (from Assumption~\ref{ass:data}). This dependence is for convenience of the analysis: it allows us to match the Gaussian smoothing scale to the input signal-to-noise and is precisely what we need to certify linear separability after the first gradient step on the first layer (see the Algorithm description below). We remark that this condition is satisfied by most polynomial activations of degree at least $2^N$ (see Lemma~\ref{lem:quantitative-beta} in Appendix for details). 
\end{remark}


\paragraph{Algorithm.} As training algorithm, we take layerwise stochastic gradient descent (SGD): first we take one gradient step on the first layer (holding the second layer fixed), then we train the second layer until convergence (keeping the first layer frozen). We take the standard square loss. We take a Gaussian initialization for the first layer's weights $w$, with normalized variance, while the second layer's weights are initialized to a small enough value, so that the interaction term in the square loss is negligible with respect to the correlation term (similarly to e.g.~\cite{abbe2023sgd,mousavi2023gradient}). 
The bias weights are set to zero for the first phase, and then sampled uniformly at random in a given interval, and kept fixed during training, for the second phase. This stylized layerwise simplification of gradient descent matches several recent theoretical studies in deep learning~(\cite{damian2022neural,dandi2023two,ba2022high,barak2022hidden,cornacchia2023mathematical}). We consider the online setting, where fresh batches of samples are generated at each training step. Further details on the algorithm can be found in Appendix~\ref{sec:proof_clustered} (Algorithm~\ref{alg:layerwise-sgd}). In our experiments in Section~\ref{sec:experiments} we use standard SGD, with both layers trained jointly.

\paragraph{Main Result.} Let us state our main theorem.
\begin{theorem} \label{thm:clusters}
Let $f:\{ \pm 1 \}^N \to \bR$ be a target function such that $\Var(f) =1$ and let $(x,y) \in \bR^d \times \bR$ be drawn from $\cD_{N,f,\cC,\xi}$, according to Def.~\ref{def:datamodel}.
    Assume A\ref{ass:data} is satisfied with $v_{\min}: = \min_{i \in [N]}\sum_{j \in C_i}m_j^2$ and $ v_{\rm sum} := \| m\|_2^2$. 
    Then, there exist $c_1,c_2>0$ such that for any $\delta,\epsilon \in (0,1/4)$, a two-layer network with activation satisfying A\ref{ass:activation}, $n =\theta(2^N \log(1/\epsilon)) $ hidden neurons and Gaussian initialization with variance $1/d$ for the first layer's weights, trained by layerwise-SGD with the squared loss, batch size $B = \theta(N \log(d \log(1/\epsilon))^2d/v_{\rm sum})$ for $T_1 =1$ steps on the first layer, with learning rate $\gamma_1=\theta(1/\sqrt{v_{\rm sum}})$ and $T_2 = \theta(2^{2N}/\delta^2)$ steps, with $\gamma_2 = \theta(v_{\min}^{-2P}\delta/2^N)$ on the second layer, with probability $1-c_1\epsilon^{c_2}$, over the initialization and the training samples, will output a network $\NN(x;\theta^{T_1+T_2})$ such that
       $ \E_{x,y} \Big| y- \NN(x;\theta^{T_1+T_2})\Big| \leq \delta.$
\end{theorem}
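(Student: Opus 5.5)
The proof follows the standard two-phase template for layerwise training. First we show that one gradient step on the first layer turns the hidden representation into a set of features from which $f(s)$ is linearly representable. Then we show that training the second layer is a convex problem that online SGD solves.

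\textbf{Phase 1 (one step on $w$).} With $a_h^0$ small and $b_h=0$, the square-loss gradient on $w_h$ is dominated by the correlation term $-a_h^0\,\E[y\,\sigma'(w_h^{0\top}x)\,x]$. Fix $j\in C_i$. Write $w_h^{0\top}x=\sum_{i'} s_{i'}\langle w_{h,C_{i'}},\xi_{C_{i'}}\rangle$. Each block $\langle w_{h,C_{i'}},\xi_{C_{i'}}\rangle$ is a sum of many independent terms. Its mean is $\langle w_{h,C_{i'}},m_{C_{i'}}\rangle$, which is $\approx\mathcal N(0,v_{i'}/d)$ over the initialization. Its variance is $\approx\sum_{j\in C_{i'}} w_{hj}^2\tau_j^2$. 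Conditionally on $w^0$, a Berry--Esseen step (this is where the bounded third moments enter) replaces these blocks by Gaussians, with an error of order $d^{-1/2}$ up to logs.

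Expanding $f(s)=\sum_S\hat f(S)\chi_S(s)$ and using the Fourier expansion in $s$ of the smoothed derivative, the population gradient coordinate becomes $m_j\,G_{h,i}$ plus lower-order terms. Here $G_{h,i}$ is a polynomial in the projections $z_{h,i'}:=\langle w_{h,C_{i'}},m_{C_{i'}}\rangle$, whose coefficients involve $\hat f(S)$ and derivatives of $S_{\mu+v_{\rm sum}/d}$. After the step with $\gamma_1=\theta(1/\sqrt{v_{\rm sum}})$, the updated weight is $w_h^1\approx w_h^0+\gamma_1 a_h^0\sum_i G_{h,i}\,m_{C_i}$. Hence
\[
w_h^{1\top}x\approx \sum_i s_i\bigl(\alpha_{h,i}+\text{noise}_{h,i}\bigr),\qquad \alpha_{h,i}=\Theta(1)\ \text{random over init},
\]
with noise of constant variance. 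The batch size $B=\theta(N\log(d\log(1/\epsilon))^2d/v_{\rm sum})$ is chosen to control the empirical gradient. Coordinatewise sub-Gaussian concentration with a union bound over $d\cdot n$ coordinates gives error $O(\log(dn)/\sqrt B)$ per coordinate. Summed against $x$, this is $O(\sqrt{d}\log(dn)/\sqrt{B})$, which is $o(\sqrt{v_{\rm sum}}\gamma_1^{-1}\cdot)$ at that $B$.

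\textbf{Phase 2 (representation and convex training).} With fresh uniform biases $b_h$, each neuron computes $\sigma(\sum_i \alpha_{h,i}s_i+b_h+\text{Gaussian-ish noise})$. Its expectation over the noise is the smoothed polynomial $\tilde\sigma_h(\langle\alpha_h,s\rangle+b_h)$. The main step is to show that, with $n=\theta(2^N\log(1/\epsilon))$ neurons, the $2^N\times n$ matrix $[\tilde\sigma(\langle\alpha_h,s\rangle+b_h)]_{s,h}$ has rank $2^N$, with least singular value bounded below by $\mathrm{poly}(v_{\min})$, with probability $1-\epsilon^{c}$. Expanding in monomials, $\tilde\sigma(t)=\sum_k \beta_k t^k$ with $\beta_k$ tied to $S^{(k-1)}$. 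The non-degeneracy $|S^{(k)}(0)|\ge c_0$ for $k\le N$ together with $P\ge 2^N$ makes each multilinear monomial $\chi_S(s)$, $|S|\le N$, appear with a nonvanishing coefficient polynomial in $(\alpha_h,b_h)$. Since these coefficients are nonzero polynomials, anti-concentration (Carbery--Wright) over the random $(\alpha_h,b_h)$ gives a quantitative full-rank bound. Repeating over $\log(1/\epsilon)$ independent groups of $2^N$ neurons boosts the success probability.

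Hence there exists $a^*$ with $\|a^*\|\lesssim 2^N v_{\min}^{-P}$ achieving small population loss. The residual noise contribution is bounded using $\E[\text{noise}^2]=O(1)$ and the degree-$P$ moment bounds. Training $a$ with fixed features is online SGD on a convex quadratic with bounded, sub-Gaussian-moment features. The standard online convex SGD bound gives excess risk $\lesssim \|a^*\|^2/(\gamma_2T_2)+\gamma_2 L^2$. The choices $\gamma_2=\theta(v_{\min}^{-2P}\delta/2^N)$ and $T_2=\theta(2^{2N}/\delta^2)$ make this at most $\delta^2$, and Jensen converts the bound to the $L^1$ error $\delta$.

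\textbf{Main obstacle.} The hard step is the quantitative linear-independence (full-rank) argument in Phase 2. It must hold for random $\alpha_h$ produced by Phase 1, and not for freely chosen ones. It requires showing that the polynomial map from $(\alpha_h,b_h)$ to the monomial coefficients is nondegenerate using only Assumption~\ref{ass:activation}. I would first reduce to a Vandermonde-type determinant in the bias $b_h$ for fixed $\alpha_h$, and use the nonvanishing derivatives $S^{(k)}(0)$ to certify that each of its terms is nonzero.
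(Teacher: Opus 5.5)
Your Phase 1 matches the paper's. Both reduce $\bar G(w_j)$ to $m_j\alpha_i$ by moment comparison, with $\alpha_i$ a polynomial in the Gaussian $Z=\Delta_m(w^0)$, and both use coordinatewise concentration of the minibatch gradient. The problem is the Phase 2 target you set yourself. A $2^N\times n$ feature matrix of rank $2^N$ with a quantitative least singular value is not available in general, so the obstacle you flag cannot be overcome as posed.

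\textbf{Counterexample to the rank-$2^N$ target.} Take $N=2$ with equal cluster signals $v_1=v_2$ (allowed by A1, e.g.\ the homogeneous BSC model) and $f(s)=(s_1+s_2)/\sqrt2$, so $\Var(f)=1$. More generally, take any $f$ vanishing on $\{s_1\neq s_2\}$. Then $f(s)s_1=f(s)s_2$ for every $s$, so $\E_s[f(s)s_1F(s)]=\E_s[f(s)s_2F(s)]$ for any $F$. Hence the leading-order learned coefficients coincide, $\alpha_{h,1}=\alpha_{h,2}$, for every neuron and every initialization. The learned projection is then proportional to $s_1+s_2$, and $s=(1,-1)$, $s=(-1,1)$ give identical rows for all $h$. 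Only the $O(1)$ initialization component $\langle w^0_{h,C_i},m_{C_i}\rangle$, which is of the same size as the noise, can separate them. So the determinant you want to feed into Carbery--Wright vanishes identically at leading order. A Vandermonde argument in $b_h$ cannot help, since it only separates distinct values of $\langle\alpha_h,s\rangle$.

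\textbf{What the paper does instead.} It never needs rank $2^N$; it only proves \emph{projection-consistency}: $\tilde\alpha^\top(t-y)\neq0$ whenever $f(t)\neq f(y)$. The colliding points above both have $f=0$, so they are allowed to collide. The paper applies Carbery--Wright over $Z$ to the polynomial $\cP_Z(t-y)=\E_s[f(s)h_{t-y}(s)\psi(s^\top Z)]\propto\tilde\alpha^\top(t-y)$, where $h_c(s)=\sum_i v_ic_is_i$. It lower-bounds $\E_Z[\cP_Z(t-y)^2]$ through the Hermite expansion $\E_Z[\psi(s^\top Z)\psi(s'^\top Z)]=\sum_k\beta_k^2k!\rho(s,s')^k$, with A2 giving $\beta_k\neq0$ for $k\le N$. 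It combines this with the fact that $f\,h_{t-y}$ is nonzero at $t$ or $y$, which is exactly where $f(t)\neq f(y)$ enters. The certificate then interpolates $f$ only on the $M\le 2^N\le P+1$ distinct projection values. It uses linear independence of the shifts $b\mapsto\sigma(u_m+b)$ and a matrix Chernoff bound over the random biases.

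\textbf{Scaling gap.} Second, your scaling does not give error $\delta$. With $\alpha_{h,i}=\Theta(1)$ and noise of constant variance, smoothing only ensures $\E[\NN(x)\mid s]=f(s)$ for the certificate. The residual $\sum_h a_h^*\bigl(\sigma(\langle\alpha_h,s\rangle+b_h+g_h)-\tilde\sigma_h(\langle\alpha_h,s\rangle+b_h)\bigr)$ has conditional variance of order $\|a^*\|_2^2$. This is because the dominant noise $g_h$ comes from $w_h^{0\top}x$ and is nearly uncorrelated across neurons, the $w_h^0$ being nearly orthogonal. That variance does not shrink with $d$, and an $O(1)$ bound on the noise second moment cannot make it small. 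Your bound $\|a^*\|\lesssim v_{\min}^{-P}$ is also inconsistent with $\Theta(1)$ features, since it presupposes features of size $v_{\min}^{P}$.

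The paper relies on scale separation rather than smoothing. The perturbation $(w^1)^\top x-\tilde\alpha^\top s$ stays $O(1)$ in $L^2$, while the projection values, their gap $\Delta$ and the bias range $A$ are large. By the mean value theorem and $\|a^*\|_1\lesssim\Delta^{-P}$, the error is then $\lesssim\Delta^{P-1}/\Delta^{P}$, which the paper bounds by $O(1/v_{\min})$. So you must choose $\tau$ so that the learned component $\gamma_1\tau\alpha_iv_i$ dominates the $O(1)$ term $w_h^{0\top}x$. At the same time $\tau$ must stay small enough that the interaction term in the first-layer gradient remains negligible.
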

Since Assumption~\ref{ass:data} implies $v_{\rm sum}\gtrsim d$ and $N=O_d(1)$,
for fixed $\delta,\epsilon>0$ Theorem~\ref{thm:clusters} gives
$T_1+T_2=O_d(1)$ and
\(
B=O_d(\log^2 d).
\)
Thus the sample complexity $B(T_1+T_2)$ scales polylogarithmically in $d$.
On the other hand, the time complexity (measured as $ \dim(\theta)\cdot (T_1+T_2) \cdot B$, where $\dim(\theta)$ denotes the number of trainable parameters in the network) depends linearly
on $d$, up to logarithmic factors. 


\begin{remark}
We assume a finite number of clusters $N$ because a two-layer network imposes a
representational bottleneck. In our construction, the first layer learns the
topic variables $s\in\{\pm1\}^N$, and the second layer fits $f(s)$ (see
Sec.~\ref{sec:proof_outline}). Since the space of all functions on
$\{\pm1\}^N$ has dimension $2^N$, our construction uses width of order $2^N$ in
the second layer. Thus, we keep $N$ fixed. Although $N$ could scale as
$\log d$, we do not analyze that regime here. It is natural to ask whether
adding depth, for instance using three layers, could reduce the exponential
dependence on $N$ to a polynomial one, allowing $N$ to grow with the input
dimension; we leave this direction to future work. Finally, the assumption
$s\sim\mathrm{Unif}\{\pm1\}^N$ is made for simplicity of exposition, and we
expect that the analysis can be extended to more general zero-mean
distributions.
\end{remark}


\begin{remark}
Crucially, we exploit the random Gaussian initialization of the first layer to
induce diversity among hidden units. The required non-degeneracy follows from
the Carbery--Wright anti-concentration inequality for polynomials under
log-concave measures; see Lemma~\ref{lem:non_degeneracy} and
\cite{carbery2001distributional}. This motivates our use of polynomial
activations and Assumption~\ref{ass:activation}. Gaussian initialization also
allows us to control the second moments of the resulting random polynomial
features via Hermite expansions. We expect that similar arguments extend to
other log-concave initializations. Finally, analogous extensions to common
non-polynomial activations, such as $\mathrm{ReLU}$, may be possible by
truncating their Hermite expansions, at the cost of additional approximation
and tail-control arguments.
\end{remark}

In the next section we give a proof outline for Theorem~\ref{thm:clusters}. We refer to Appendix~\ref{sec:proof_clustered} for the complete proof.

\subsection{Proof Outline for Theorem~\ref{thm:clusters}}
\label{sec:proof_outline}
The training proceeds in two phases. With a small second-layer scale at initialization, a single layerwise-SGD step on the first layer moves each weight toward
a random linear combination of the cluster mean directions, so each hidden unit effectively computes a one-dimensional \emph{projection} of the input onto a latent direction $u(s)=s^\top\tilde\alpha(w^0)$, where $\tilde \alpha(w^0)$ are random variables that depend on the first layer's initialization. In particular, using a Berry-Esseen type bound (Lemma~\ref{lem:pop_grad}), we show that:
\begin{align} \label{eq:alpha_main}
(\tilde\alpha(w^0))_i=\frac{v_i}{v_{\rm sum}}\;\E_{s}\!\big[f(s)\,s_i\,\cP_s(G_{w^0})\big],\quad i\in[N],
\end{align}
where $v_i:=\sum_{j\in C_i}m_j^2$, $v_{\rm sum}:=\|m\|_2^2=\sum_{i\in[N]}v_i$, $G_{w^0}\sim\mathcal N(0,v_{\rm sum}/d)$, and $\cP_s$ are polynomials in $G$ of degree $\deg(\sigma)-1$ determined by the activation, that depend on $s \in \{ \pm 1 \}^N$. 

To conclude that the first phase has produced useful features, we need to show
that the target function can be expressed using the hidden units obtained after
this update. Since these hidden units depend on the latent variables only through
the scalar projection $u(s)$, this reduces to showing that the projection retains
all information relevant to the label. In other words, we need $f$ to be
projection-consistent: for all $s,t\in\{\pm1\}^N$, $f(s)\neq f(t)$ implies
$u(s)\neq u(t)$. Crucially, we exploit the random Gaussian initialization, which induces diversity
across hidden units. Using the Carbery--Wright anti-concentration inequality
(Lemma~\ref{lem:non_degeneracy}), we show that the projected values
$\{u(s):s\in\{\pm1\}^N\}$ are well separated with high probability over the
initialization. Assumption~\ref{ass:data} ensures that all latent coordinates
$i\in[N]$ contribute non-negligible signal, while the condition
$v_{\rm sum}/d=\Theta_d(1)$ keeps the Gaussian argument $G_{w^0}$ at a
nondegenerate variance scale, which is needed for the anti-concentration
argument.

Then, by sampling sufficiently many hidden-unit biases uniformly over a wide
interval, an interval-hitting argument ensures that at least one bias lies near
each distinct projection value. The resulting hidden responses form a
well-conditioned system. Solving this system yields a \emph{certificate}, i.e.,
a choice of second-layer weights that exactly interpolates $f(s)$ on the
projection grid (Lemma~\ref{lem:certificate}). In the second phase, we freeze
the first layer and fit the output weights by SGD on the square loss; standard
convergence results for convex objectives then drive the error to the
target level (Lemma~\ref{lem:generalization}).


\section{Special Case: Homogeneous Clusters}
Let us consider for simplicity the homogeneous BSC model, where clusters have equal size (see Example~\ref{exa:treemodel}). In particular, let $\xi_j \overset{iid}{\sim} \Rad(1-\delta)$, for some $\delta \in (0,1)$, for all $j \in [d]$, and $|C_i|=k =d/N$, for all $i \in [N]$, where for simplicity we assume that $d$ is a multiple of $N$.
Note that in this case $v_{\rm sum} = d (1-2\delta)^2$ and $v_{\min}/v_{\rm sum} =1/N$.  

If $|\delta-\frac 12 | = \theta_d(1)$, and therefore $v_{\rm sum} = \theta_d(d)$, then Theorem~\ref{thm:clusters} applies, and for fixed $\epsilon>0$, the number of samples needed to achieve $\epsilon$ error scales only logarithmically in $d$.

Specializing to the homogeneous BSC model, allows us to obtain a sharper guarantee, valid as soon as $v_{\rm sum} =\Omega_d(\log(d)^4)$, rather than the high-SNR condition $v_{\rm sum} = \theta(d)$ required by our general result. To this end, we consider a deterministic initialization of $w_j^0 = 1/\sqrt{d}$, for all $j \in [d]$, and we focus on the simpler $\ReLU$ activation function. Because of the deterministic initialization, we cannot exploit the randomness of the initialization as in eq.~\eqref{eq:alpha_main} and Lemma~\ref{lem:non_degeneracy} to show that the target function is projection consistent. Thus, we impose the following non-degeneracy assumption on the target $f$.
\begin{ass}[Majority Margin]
\label{ass:nondegeneracy_f}
    For $T \subseteq [N]$ and $i \in [N]$ let 
    \begin{align} \label{eq:cTi}
        c_{T,i} := \left\{
                \begin{array} {ll}
                 1 \quad &\text{ if  } \quad T =\{ i \} , \\
                \hat \Maj_N(|T|-1) \quad &\text{ if }\quad  \{i \} \subset T,\\
                \hat \Maj_N(|T|+1) \quad &\text{ if }\quad  \{i \} \not\subset T,
                \end{array} \right.
    \end{align}
    where $\hat \Maj_N(k)$ are the Fourier-Walsh coefficients of the Majority function in dimension $N$~\cite{o'donnell_2014}, see Appendix~\ref{app:majority_Fourier}. We say that $f$ has $\Delta$-\emph{margin}, for $\Delta>0$, if for all $s,t \in \{ \pm 1 \}^N$ such that $f(s) \neq f(t)$,
    \begin{align} \label{eq:assumption}
         \Big|\sum_{i=1}^N (s_i-t_i) \sum_{T \subseteq [N]} \hat f(T) c_{T,i}  \Big| \geq \Delta.
    \end{align}
\end{ass}
\noindent 
Our Theorem reads as follows.
\begin{theorem}
\label{thm:homogeneous_clusters}
    Assume data are sampled from the homogeneous BSC model with $N =O_d(1)$ clusters, noise parameter $\delta \in [0,1/2]$ such that $|\delta-1/2| =\Omega(\log(d)^2/\sqrt{d})$ and target function $f$ with $\Delta$-margin (according to Assumption~\ref{ass:nondegeneracy_f}). Then, there exists a constant $C>0$ such that for all $\epsilon>0$, layerwise-SGD with the squared loss, batch size $B = O(\frac{\log(d)^2}{(1-2\delta)^2})$ on a two-layer network with $n =\Omega(2^N\log(d)/\Delta) $ hidden neurons, after $T_1 =1, T_2 = O(\log(d)^2 2^N\epsilon^{-2}\Delta^{-4})$ steps with probability $1-d^{-C}$ will output a network $\NN$ such that
        $\E_{(x,y)} \left[(y- \NN(x))^2 \right] \leq \epsilon.$
\end{theorem}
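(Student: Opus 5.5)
The proof follows the same two-phase structure as Theorem~\ref{thm:clusters}, but with the randomness of the initialization replaced by the explicit Majority structure that the deterministic initialization $w_j^0=1/\sqrt d$ and ReLU create.

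\textbf{Step 1: population gradient.} At $w^0$ with zero bias, the preactivation is
\[
w^{0\top}x=\frac{1}{\sqrt d}\sum_{i=1}^N s_i S_i, \qquad S_i:=\sum_{j\in C_i}\xi_j .
\]
Each $S_i$ has mean $k(1-2\delta)$ and fluctuations of order $\sqrt k$. For $j\in C_i$ the population gradient of the correlation term is $\E[f(s)\, s_i\xi_j\,\mathbbm 1\{w^{0\top}x>0\}]$. I will condition on $s$ and compute the conditional mean of $\xi_j$ times the indicator. When $(1-2\delta)\sqrt{k}\gg 1$, the indicator concentrates on $\mathbbm{1}\{\sum_i s_i>0\}$, i.e.\ on $\Maj_N(s)$. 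Expanding $\E_s[f(s)s_i\Maj_N(s)]$ in the Fourier–Walsh basis gives
\[
\sum_T \hat f(T)\,\widehat{\Maj}_N(T\triangle\{i\}),
\]
which is exactly $\sum_T\hat f(T)c_{T,i}$ by \eqref{eq:cTi}. The $T=\{i\}$ term comes from the mean part $m_j=1-2\delta$ contributing independently of the indicator. Since $\widehat{\Maj}_N$ depends only on set size, the sizes $|T|\pm1$ appear.

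Ties and near-ties in $\sum_i s_i$, where the sign is decided by the noise, I plan to handle with a Berry–Esseen / Hoeffding bound. This requires $|\delta-1/2|\gtrsim \log(d)^2/\sqrt d$ so that the error is $o(\Delta)$. For even $N$ a symmetrized tie convention is needed; I would check that it matches the paper's definition of $\hat\Maj_N$.

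\textbf{Step 2: the updated weights.} The resulting weight is approximately $w^1_j\approx \gamma_1 (1-2\delta)\,\alpha_i$ for $j\in C_i$, up to a common $\pm$ term, where
\[
\alpha_i:=\sum_T \hat f(T)c_{T,i}.
\]
The projection is then
\[
w^{1\top}x\approx \gamma_1(1-2\delta)\sum_i \alpha_i s_i S_i \approx \gamma_1 k(1-2\delta)^2\, u(s), \qquad u(s)=\sum_i\alpha_i s_i,
\]
plus fluctuation of order $\gamma_1(1-2\delta)\sqrt k\,\|\alpha\|$.

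I will control the empirical-batch error with batch size $B=O(\log(d)^2/(1-2\delta)^2)$. Each coordinate's gradient error is then $O(\log d/\sqrt B)$, uniformly over $j$ by a union bound. The randomness in $j$ averages out in $w^{1\top}x$ because it is a sum over $k$ coordinates. Assumption~\ref{ass:nondegeneracy_f} says that whenever $f(s)\ne f(t)$ we have $|u(s)-u(t)|\ge\Delta$. After rescaling by $k(1-2\delta)^2$, this separation must dominate the input fluctuations, which scale as $\sqrt{k}/(k(1-2\delta))$. That is again guaranteed by the SNR condition with the $\log$ slack, with probability $1-d^{-C}$.

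\textbf{Step 3: second layer.} Next I sample biases uniformly on an interval covering $\{u(s)\}$ at scale $\Delta$. With $n=\Omega(2^N\log d/\Delta)$ neurons, interval hitting places a ReLU kink between every pair of consecutive distinct projection values, with high probability. This yields a piecewise-linear certificate interpolating $f$ on the at most $2^N$ values, with output weights of norm $\mathrm{poly}(2^N/\Delta)$, as in Lemma~\ref{lem:certificate}. Then convex online SGD on the second layer, as in Lemma~\ref{lem:generalization}, gives excess error $\epsilon$ after $T_2=O(\|a^*\|^2 R^2/\epsilon^2)$ steps. With feature bound $R=O(\log d)$, this matches the stated $T_2$.

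\textbf{Main obstacle.} The hardest part is Step~1: showing that the ReLU gradient reduces to the Majority Fourier coefficients with error $o(\Delta)$ uniformly. The difficulty is the coupling between $\xi_j$ and the indicator, since $\xi_j$ itself enters the preactivation. I would first remove coordinate $j$ from the preactivation (its effect is $O(1/\sqrt d)$) and bound the resulting indicator change by anticoncentration of the remaining sum. After that, $\E[\xi_j\mid\cdot]=m_j$ factorizes and the Majority expansion follows.
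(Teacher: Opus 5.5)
Your proposal is essentially the paper's proof: Hoeffding concentration of the ReLU indicator onto $\mathds{1}\{\sum_i s_i>0\}=(1+\Maj_N(s))/2$, then the projection $w^{1\top}x\approx c\sum_i \alpha_i s_i$, interval-hitting biases with a lower-triangular ReLU certificate, and convex SGD on the output layer. The paper decouples $\xi_j$ by conditioning on $\xi_j=b$, which is equivalent to your deletion-plus-anticoncentration step; note that $c_{\{i\},i}=1$ comes from the constant $\tfrac12$ in that identity, not from a separate mean term.

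The one place you differ is in controlling the mini-batch error through its aggregated effect on $w^{1\top}x$, rather than by a coordinatewise $\ell_\infty$ bound multiplied by $d$. This is the more robust choice as $1-2\delta\to 0$. Note, though, that these errors are correlated across $j\in C_i$ through the shared batch. The cluster-level part is small because it is itself a batch average of relative size $O(1/\sqrt{B})$, not because of $\sqrt{k}$ cancellation.
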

Thus, at the threshold $|\delta-1/2|=\Omega(\log d^2/\sqrt d)$, equivalently
$v_{\rm sum}=\Omega_d(\log^4 d)$, the theorem still guarantees learning with
polynomial-in-$d$ sample size.
In Appendix~\ref{sec:clustering_algo} we compare the sample complexity guarantee in Theorem~\ref{thm:homogeneous_clusters} with the performance of classical clustering methods.

\section{Experiments} \label{sec:experiments}
\label{sec:hierarchical}

\begin{figure*}[t]
    \centering
    \includegraphics[width=0.49\linewidth]{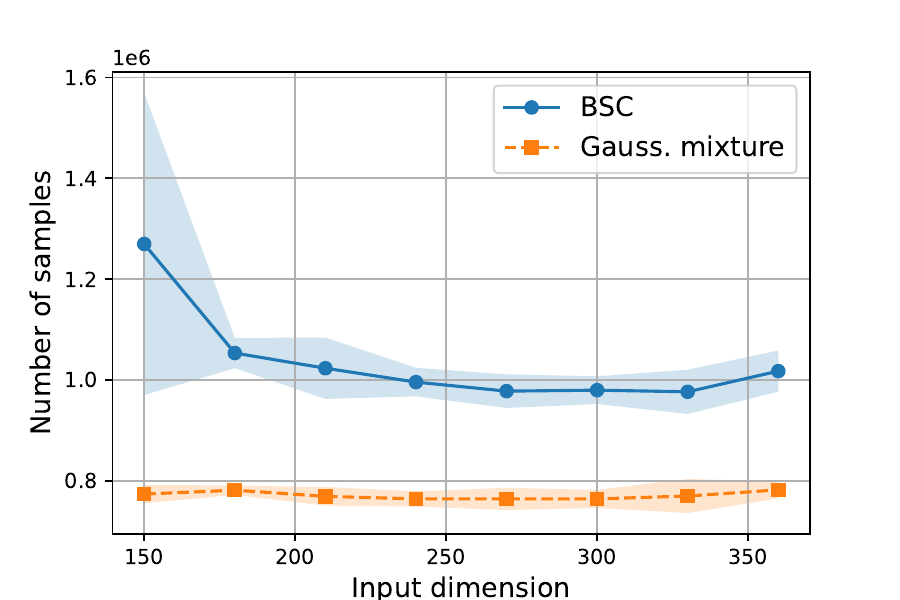}
    \includegraphics[width=0.49\linewidth]{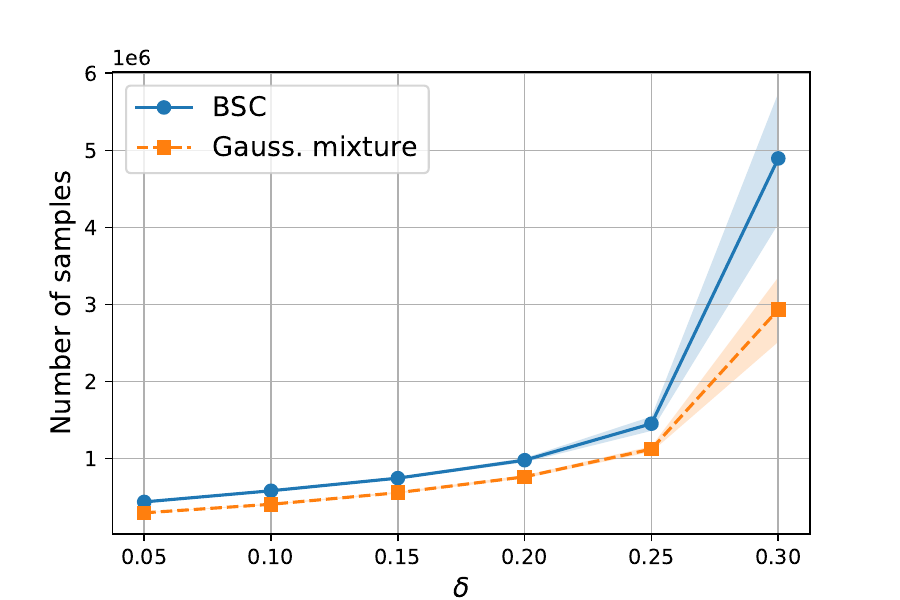}
    \caption{Learning parity functions of $N=3$ latent bits, under the homogeneous BSC model (blue) and the Gaussian mixture model (orange), using a two-layer ReLU network with $1000$ hidden neurons, trained by SGD (square loss, batch size $64$, learning rate $0.001$). We plot the number of fresh samples required to achieve test loss below $0.05$: (Left) as a function of the input dimension $d$ for $\delta=0.2$, and (Right) as a function of $\delta$ for $d=300$.}
    \label{fig:clusters}
\end{figure*}

\paragraph{Experiments on Synthetic Data.}
We empirically verify whether our theoretical results hold in practice, for standard SGD training (both layers trained jointly). We consider learning parity targets: for latent dimension $N =3$, we sample $s \sim \Unif \{\pm 1 \}^N$ and set $y = \prod_{i=1}^N s_i$. Parities under uniform inputs are notoriously hard for neural networks: they are uncorrelated with every strict subset of coordinates, so successful learning requires identifying all relevant hidden variables. This makes them a natural choice for testing whether a method can recover the latent structure in our setting.
In all experiments, we train a two-layer $\ReLU$ network with $1000$ hidden neurons and standard uniform initialization, using SGD (batch size $64$) and generating fresh samples at each step. Training stops when the in-distribution test error falls below $0.05$, and each experiment is repeated $5$ times. We report mean and $95 \%$ confidence intervals.

We consider two data-generating models.
\textbf{(i)} The \emph{BSC homogeneous cluster} (Example~\ref{exa:treemodel}), with noise parameter
\(
\delta \in \{0.05, 0.1, 0.15, 0.2, 0.25, 0.3\}.
\)
\textbf{(ii)} A \emph{Gaussian mixture model}, where the noise variables satisfy
\(
\xi_j \overset{iid}{\sim} \mathcal{N}(m,\sigma^2)
\), for $j \in [d]$,
with mean \(m = 1-2\delta\) and variance \(\sigma^2 = 4\delta(1-\delta)\), using the same values of \(\delta\) as the tree model.
These choices ensure that the two models are directly comparable, as their noise variables have matching first and second moments. In the \emph{left panel}, we plot the number of fresh samples required for learning as a function of the input dimension \(d\), fixing \(\delta=0.2\). We observe that, for sufficiently large \(d\), the sample complexity remains essentially constant in \(d\), in agreement with our theoretical predictions (up to logarithmic factors). In the \emph{right panel}, we plot the number of samples required for learning as a function of \(\delta\), for a fixed input dimension \(d=300\). As \(\delta\) approaches \(1/2\), learning becomes progressively less efficient, again consistently with the theoretical analysis.

\paragraph{Experiments on Real Data.}
We consider the public single-cell RNA-seq dataset GSE96583, hosted on GEO,
using batch-2 cells from control (unstimulated) samples~\cite{kang2018multiplexed}.
Each observation is a cell, and each feature is the measured expression level of one gene;
cells are annotated with their type.
A clustered feature structure is expected because the ambient dimension is large yet
many genes are redundant: multiple genes act as correlated noisy readouts of a smaller
number of latent biological programs, such as cell state.

As a supervised learning task we consider multiclass prediction of cell type from
gene expression.
We exclude rare cell types (Dendritic cells, Megakaryocytes) and merge
CD4~T, CD8~T, and NK cells into a single T/NK class, yielding a three-class problem:
B cells, Monocytes, and T/NK.
After standard quality-control filtering (minimum number of expressing cells per gene), approximately $11{,}990$ cells
and $10{,}500$ genes remain.
Figure~\ref{fig:RNAdata}(left) shows the t-SNE visualization of the three classes,
which are well separated in gene-expression space. We select the top-$d$ most variable genes by dispersion (variance/mean),
using no class-label information, and compute the gene-gene Pearson correlation matrix
across all cells.
Reordering genes by hierarchical clustering reveals a block structure,
shown in Figure~\ref{fig:RNAdata}(center).
We estimate $\mathrm{SNR} \approx \widehat{v}_{\mathrm{sum}} /
\mathrm{Tr}(\widehat{\Sigma}) \approx 0.29$, where $\widehat{\Sigma}$ denotes
the empirical covariance matrix of the selected features. We then train a two-layer $\mathrm{ReLU}$ network with $128$ hidden units to
predict the three cell types; implementation details are given in
Appendix~\ref{app:experiments_details}.
Figure~\ref{fig:RNAdata}(right) shows test accuracy as a function of training
size $n$ for varying input dimension $d$: curves for $d \geq 200$ nearly coincide,
consistent with a sample complexity that is essentially independent of $d$.



\begin{figure*}[t]
\centering

    \centering
    \includegraphics[width=0.31\linewidth]{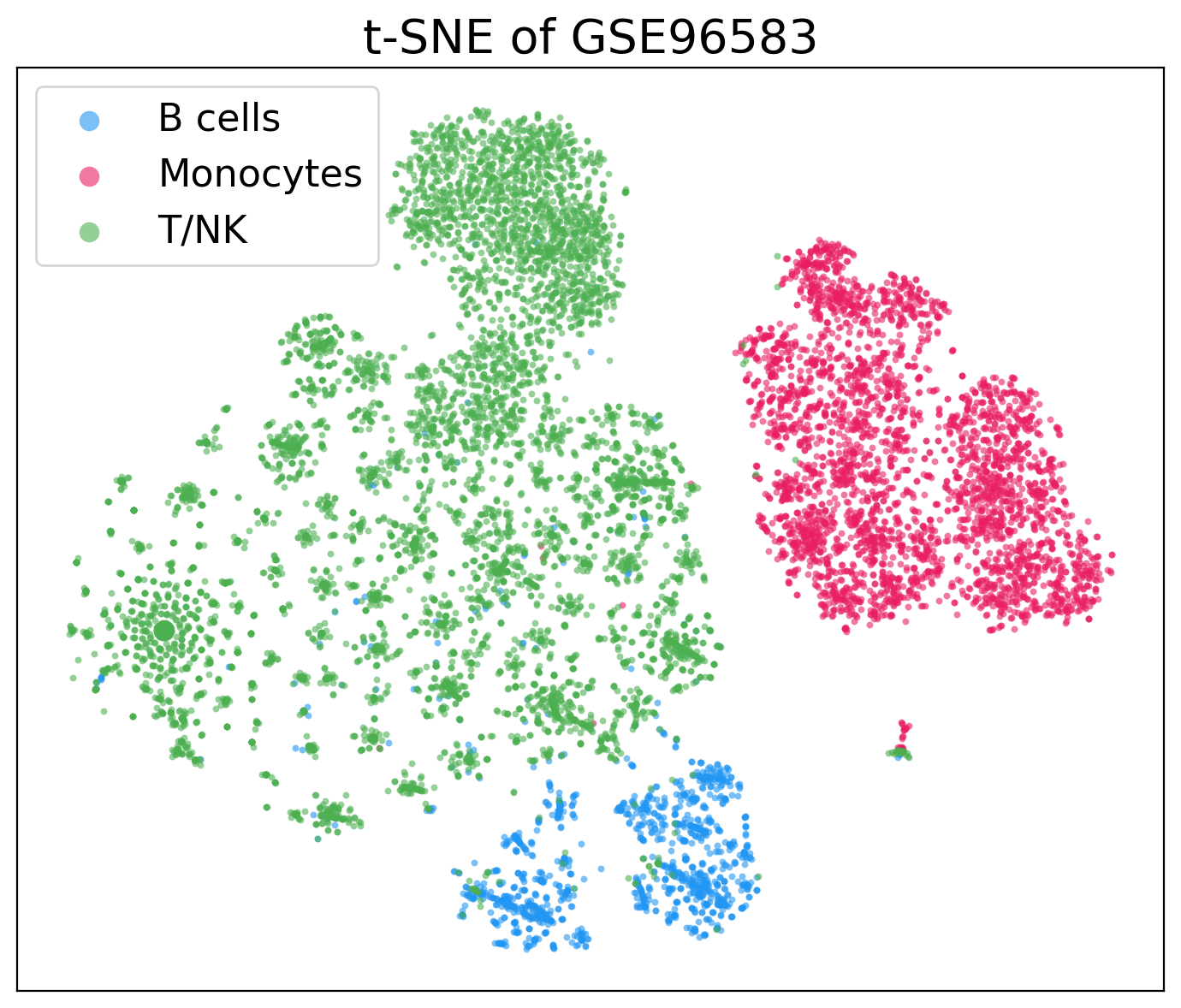}
    \includegraphics[width=0.32\linewidth]{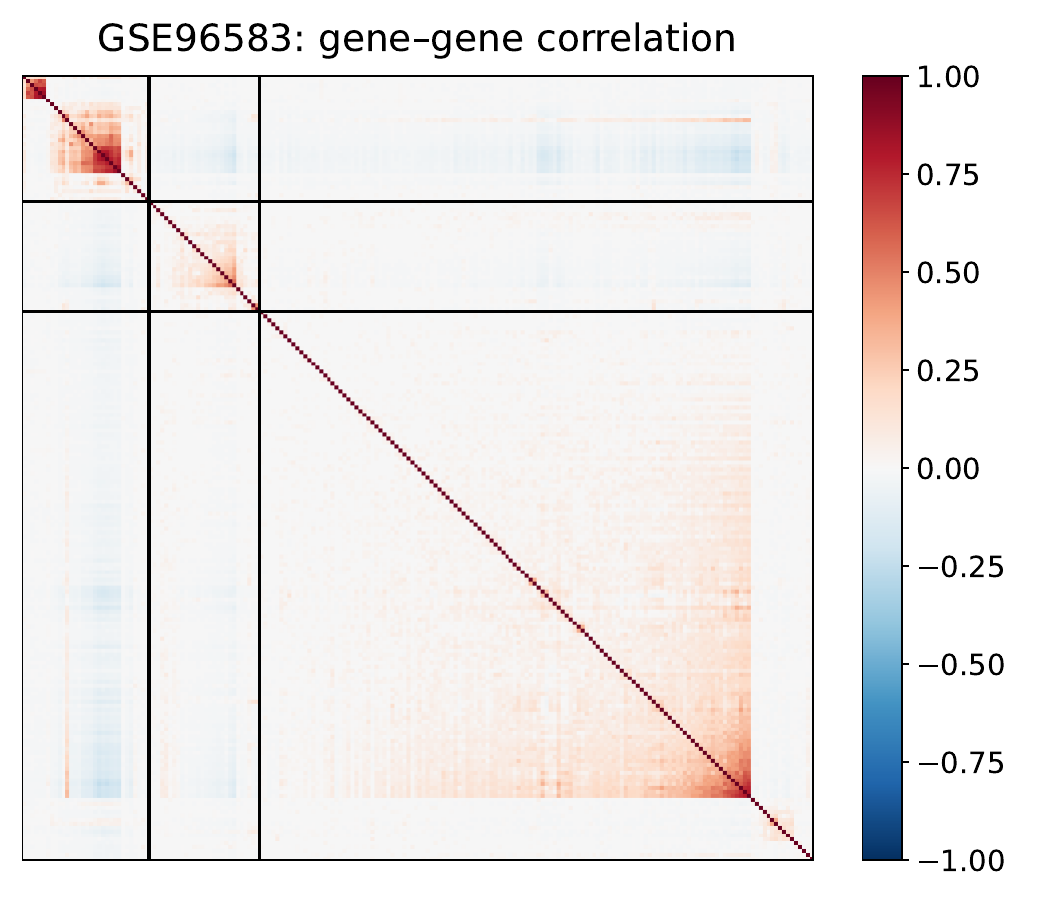}
    \includegraphics[width=0.35\linewidth]{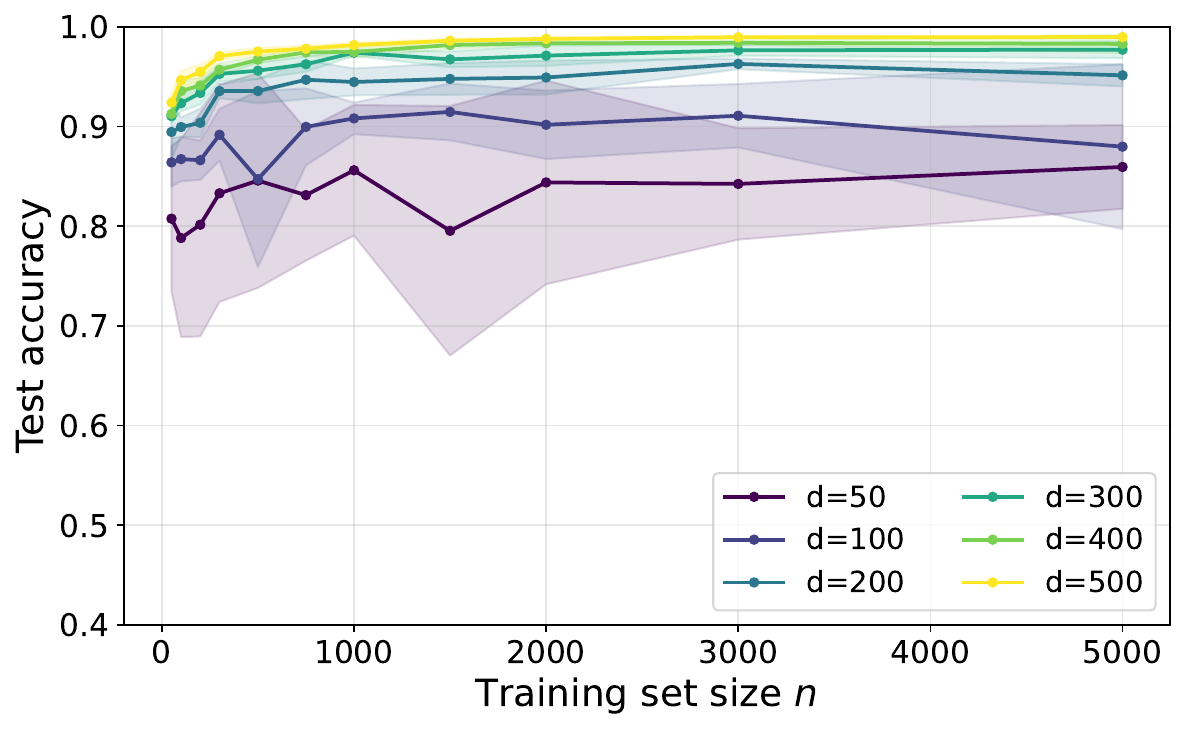}

\caption{Cell-type classification on the single-cell RNA-seq dataset GSE96583. (left) t-SNE visualization of the three cell types (B cells, Monocytes, T/NK). (center) Gene-gene Pearson correlation matrix, with genes reordered by hierarchical clustering; diagonal blocks reveal a latent cluster structure. (right) Test accuracy as a function of training size $n$ for varying input dimension $d$; curves for $d \geq 200$ nearly coincide, consistent with dimension-independent sample complexity.}
\label{fig:RNAdata}
\end{figure*}

\section{Conclusion}
In this paper, we introduce a tractable model of correlated, clustered inputs with a small number of latent binary variables and proved that, under identifiability and technical conditions, layerwise SGD on a two-layer network can learn any latent target with a sample complexity that is independent of the ambient dimension, up to logarithmic terms. Our experiments with standard SGD confirm these results. Looking forward, several natural extensions arise. First, deeper architectures may help overcome the exponential dependence on the number of latent variables and improve computational efficiency. Second, it would be interesting to extend the framework to larger alphabets and richer channels: we expect similar guarantees for finite alphabets, while handling continuous latent variables will likely require new technical tools. Third, one could allow cluster memberships to vary across samples.


\section*{Acknowledgement}
EC was supported by the French government under management of Agence Nationale de la Recherche as part of the “Investissements d’avenir” program, reference ANR19-P3IA-0001 (PRAIRIE 3IA Institute). LM was supported by the PR[AI]RIE-PSAI – Paris School of Artificial Intelligence, reference: ANR-23-IACL-0008.

\bibliography{references}
\bibliographystyle{alpha}


\newpage 

\appendix

\section{Proof of Theorem~\ref{thm:clusters}}
\label{sec:proof_clustered}


\begin{algorithm}[H]
\caption{\textit{Layerwise SGD with Gaussian initialization}\\ Init. scale $\kappa$, bias range $A$, learning rates $\gamma_1, \gamma_2$, step counts $T_1=1, T_2$, batch size $B$.} 
\label{alg:layerwise-sgd}
\begin{algorithmic}[1]
\Require Training data $\{(x_t, y_t)\}_{t \ge 1}$, model: $\NN(x;(w,a,b)) = \sum_{i=1}^n a_i \sigma(w_i^\top x +b_i)$.
\State {\bfseries Initialize}: first-layer weights $w^{(0)} \sim \mathcal{N}(0, I_d/ d)$ (normalized variance), $a^{(0)} = \tau$, $b^{(0)} =0$.
\vspace{0.5em}
\Statex \textbf{Phase 1: Train the first layer (second layer frozen)}
\For{$t=0$ {\bfseries to} $T_1-1$ {\bfseries :}} 
    \State Sample a fresh mini-batch $S^{(t)}=\{(x_s, y_s)\}_{s=1}^B$
    \State Update 
    \(
    w^{(t+1)} \leftarrow w^{(t)} - \gamma_1 \nabla_{w^{(t)}} L(S^{(t)};w^{(t)},a^{(0)},b^{(0)})
    \)
\EndFor
\vspace{0.5em}
\Statex \textbf{Phase 2: Train the second layer (first layer frozen)}
\State Draw random biases: $\hat b_1,\dots,\hat b_n \sim \Unif[-A,A]^{\otimes n}$
\For{$t=0$ {\bfseries to} $T_2-1$ {\bfseries :}}
    \State Sample a fresh mini-batch $S^{(t)}=\{(x_s, y_s)\}_{s=1}^B$
    \State Update $a^{(t+1)} \leftarrow a^{(t)} - \gamma_2 \nabla_{a^{(t)}} L(S^{(t)};w^{(T_1)},a^{(t)},\hat b)$
\EndFor
\vspace{0.5em}
\State \textbf{Output:} Trained model $\NN(x;(w^{(T_1)},a^{(T_2)},\hat b))$
\end{algorithmic}
\end{algorithm}
\noindent 

\paragraph{Layerwise-SGD with square loss.} We consider learning with a 2-layer neural network:
\[
\NN(x;\theta^t) = \sum_{h=1}^n a_{h} \sigma\bigl((w_{h}^t)^\top x +b_h\bigr),
\]
with $n$ hidden units, and polynomial activation $\sigma$, satisfying Assumption~\ref{ass:activation}. We initialize $w_{hj} \sim \cN(0, 1/d)$ and $a_h =\tau$, for all $h \in [n], j \in [d]$.  We perform layerwise training: i.e., during a first phase we train the first layer's weights for one step, keeping the second layer's weights fixed, and during a second phase we train the second layer's weights, keeping the $w_h$ fixed. The bias neurons are initialized at $0$ and kept fixed during the first step. Before the second phase, we draw $b_h \overset{iid}{\sim} \Unif[-A,A]$, for a fixed $A>0$, and keep them frozen during training. We adopt this choice by convenience of the analysis. We note, however, that with further technical work, one could consider random biases for the first part of training as well. We use the squared loss: $\frac 12 ( y - \hat y)^2$. 
We refer to Algorithm~\ref{alg:layerwise-sgd} for a detailed description of the layerwise SGD algorithm that we use.
For each hidden unit $h \in [n]$,
\begin{align}
    w_h^{t+1} & =  w_h^t - \frac{\gamma}{2} \frac{1}{B}\sum_{b =1}^B \nabla_{w_h^t} \big(y^b - \NN(x^b;\theta^t)\big)^2
\end{align}
where $B \in \bN$ denotes the batch size. 
Note that $|\NN(x;\theta^t)| = \tau\,\mathrm{polylog}(d)$ with high probability
over the initialization and the mini-batch, for $B = \mathrm{poly}(d)$. Indeed,
$x$ is centered (by symmetry of $s$) and sub-Gaussian (since $|s_i|=1$ and
$\xi_j$ is sub-Gaussian), so $(w_{h}^0)^\top x$ is sub-Gaussian with variance
$O(1)$, hence $|\sigma((w_{h}^0)^\top x)| = O(\mathrm{polylog}(d))$ and
$|a_{h}| = O(\tau)$; a union bound over $h \in [n]$ and $b \in [B]$ gives the
polylog factor (using $n, P = O_d(1)$).
Thus, for $\tau$ small enough, we have
\begin{align}
   - \gamma \nabla_{w_h^t} \frac{1}{2} (y - \NN(x;\theta^t))^2
   & =  \gamma a_h y \sigma'(w_h^t x) x 
       - \gamma \NN(x;\theta^t) a_h \sigma'(w_h^t x) x \\
   & =  \gamma \tau y \sigma'(w_h^t x) x 
       - \gamma O\!\left(n \tau^2 \poly\log(d)\right),
\end{align}
with high probability.
Thus, for $\tau$ small, the interaction term
$\NN(x;\theta^t)a_h\sigma'(w_h^t x)x$ is negligible, and one can track only the correlation term. A similar approach is used in e.g.~\cite{abbe2023sgd,mousavi2023gradient}.

\subsection{First layer's training}
\label{sec:proof_first_layer}

\paragraph{Initial population gradient.} In the following, let us fix a hidden neuron $h \in [n]$, and for simplicity let us denote $w:=w_h^0$. Recall, we initialize $a_h=\tau$ and $b_h =0$ for all $h \in [n]$. For $i \in [N]$, let us denote:
\begin{align}
    (\Delta_\xi(w))_i := \sum_{j \in C_i} \xi_{j}w_{j}.
\end{align}
For $j \in C_i$, the correlation term of the initial population gradient reads:
\begin{align} \label{eq:pop_gradient}
    \bar G(w_{j}) :& = \frac 1 \tau \E_{x,y} [ y \cdot \partial_{w_j}\NN(x;\theta^0) ] \\
    & = \E_s [f(s) s_i \E_{\xi}[ \xi_{j} \sigma'(s^\top \Delta_{\xi}(w))] ].
\end{align}

In the following, we show that $\overline G(w_j)$ concentrates around a quantity that depends only on the cluster's latent $i$.

\begin{lemma}[Initial Population Gradient]\label{lem:pop_grad}
Assume the conditions of Theorem~\ref{thm:clusters}; in particular, the noise
variables $\{\xi_j\}_{j \in [d]}$ are mutually independent, sub-Gaussian with
$\|\xi_j\|_{\psi_2} \le \psi$, with means $m_j := \E[\xi_j]$, variances
$\sigma_j^2 := \Var(\xi_j)$, and bounded third moments
$\sup_j \E|\xi_j - m_j|^3 = O_d(1)$.
Fix a neuron $w \in \bR^d$ and define
\[
(\Delta_\xi(w))_i := \sum_{j\in C_i} \xi_j w_j,
\qquad
(\Delta_m(w))_i := \sum_{j\in C_i} m_j w_j,
\qquad
V^2 := \sum_{j=1}^d w_j^2\,\sigma_j^2.
\]
Then for every $i \in [N]$ and every $j \in C_i$,
\[
\overline G(w_j) \;=\; m_j\,\alpha_i(w) \;+\; \eta_j,
\]
where
\begin{equation}\label{eq:alpha}
\alpha_i(w) \;:=\; \E_s\!\Big[f(s)\,s_i\,
\E_G\,\sigma'\!\big(s^\top \Delta_m(w) + G\big)\Big],
\qquad G \sim \cN(0, V^2),
\end{equation}
and, on with high probability,
\[
|\eta_j| \;\le\; C_1\,|w_j| \;+\; C_2\,w_j^2 \;+\; C_3 \sqrt{\frac{\log d}{d}}.
\]
The constants $C_1, C_2, C_3 > 0$ depend only on the activation $\sigma$ and on
the sub-Gaussian parameters and third moments of $\xi$.
\end{lemma}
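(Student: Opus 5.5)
Fix $i\in[N]$, $j\in C_i$ and condition on $s$. Write $Z:=s^\top\Delta_\xi(w)=\sum_{k}s_{c(k)}\xi_k w_k$, where $c(k)$ is the cluster of $k$. The plan is to remove coordinate $j$ from $Z$ and then replace the remaining centred noise by a Gaussian. To that end, set $Z_{-j}:=Z-s_i\xi_jw_j$. Its mean is $\mu_{-j}(s):=s^\top\Delta_m(w)-s_im_jw_j$, and its variance is $V_{-j}^2=V^2-w_j^2\sigma_j^2$. Since $\sigma'$ is a polynomial of degree $P-1$, Taylor expansion around $Z_{-j}$ gives
\[
\E_\xi[\xi_j\sigma'(Z)]=\sum_{r=0}^{P-1}\frac{(s_iw_j)^r}{r!}\,\E[\xi_j^{r+1}]\,\E\big[\sigma^{(r+1)}(Z_{-j})\big].
\]
This uses the independence of $\xi_j$ from $Z_{-j}$.

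The $r=0$ term is $m_j\E[\sigma'(Z_{-j})]$. For $r\ge1$, the sub-Gaussian moments of $\xi_j$ are $O(1)$, and $\E|\sigma^{(r+1)}(Z_{-j})|=O(1)$. The latter holds because $Z_{-j}$ is sub-Gaussian with parameter $O(\|w\|_2+|s^\top\Delta_m(w)|)$, and both quantities are $O(1)$ with high probability over $w\sim\cN(0,I_d/d)$. This uses $\sum_k m_k^2w_k^2=O(v_{\rm sum}/d)=O(1)$ and a Gaussian tail bound for $\Delta_m(w)_i$. So these terms contribute $C_1|w_j|+C_2w_j^2$. The higher powers of $|w_j|$ are absorbed, since $|w_j|\le\sqrt{\log d/d}\cdot O(1)$ with high probability after a union bound over $j$; alternatively, constants can be taken after this union bound.

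Next, put the index $j$ back in the mean and the variance. The mean shift $m_jw_j$ and the variance change $w_j^2\sigma_j^2$ alter $\E\sigma'(\cdot)$ by $O(|w_j|)+O(w_j^2)$, because $\sigma'$ is a polynomial and its derivatives have bounded Gaussian moments. Multiplied by $m_j=O(1)$, this again lands in $C_1|w_j|+C_2w_j^2$.

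The main step is a Berry–Esseen-type comparison of smooth test functions. We must show
\[
\big|\E\,\sigma'\big(\mu+\textstyle\sum_k s_{c(k)}w_k(\xi_k-m_k)\big)-\E_G\,\sigma'(\mu+G)\big|\le C_3\sqrt{\log d/d},\qquad G\sim\cN(0,V^2),
\]
uniformly over $s\in\{\pm1\}^N$ (there are only $2^N=O(1)$ values). Since the test function is a polynomial rather than a bounded Lipschitz one, I would use a Lindeberg replacement argument instead of the Kolmogorov-distance Berry–Esseen bound. Replace each summand $s_{c(k)}w_k(\xi_k-m_k)$ one at a time by an independent Gaussian with the same variance. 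Expand $\sigma'$ to third order around the partial sum. The first and second moments match, so each swap costs at most
\[
C\,|w_k|^3\,\big(\E|\xi_k-m_k|^3+\E|g|^3\big)\cdot\sup\E|\sigma^{(4)}(\cdot)|.
\]
The third-derivative factor is bounded by the polynomial moment bounds above, which hold because all interpolating sums are sub-Gaussian with $O(1)$ parameter. The total error is therefore $O(\sum_k|w_k|^3)$. With high probability over $w$, $\sum_k|w_k|^3\le\max_k|w_k|\,\|w\|_2^2=O(\sqrt{\log d/d})$. Note that $s_{c(k)}^2=1$, so the Gaussian variance is exactly $V^2$ irrespective of $s$.

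Finally, average over $s$ against $f(s)s_i$, using $|f|=O(1)$ since $N$ is fixed. This yields $\overline G(w_j)=m_j\alpha_i(w)+\eta_j$ with the stated bound on $\eta_j$.

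I expect the main obstacle to be the Lindeberg step. It requires uniform polynomial-moment control of the interpolated sums. Those moments depend on $\mu=s^\top\Delta_m(w)$, which is only $O(1)$ on a high-probability event for $w$. The dependence of the constants on that event must be handled carefully, so that $C_1,C_2,C_3$ depend only on $\sigma$ and the noise parameters.
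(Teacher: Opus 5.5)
Your proposal is correct and follows essentially the same route as the paper. You use a leave-one-out Taylor expansion in coordinate $j$, relying on $\xi_j$ being independent of $Z_{-j}$, to get the $C_1|w_j|+C_2w_j^2$ terms, and then a third-moment Gaussian comparison for the polynomial $\sigma'$, closed by $\sum_k|w_k|^3\le\max_k|w_k|\,\|w\|_2^2=O(\sqrt{\log d/d})$ on a high-probability event for $w$ and uniformly over the $2^N$ values of $s$. The only differences are cosmetic: the paper first splits $\xi_j=m_j+(\xi_j-m_j)$, so its main term $m_j\E[\sigma'(\mu+U)]$ already contains the full sum and needs no ``put-back'' step, and it cites a moment-comparison bound $|\E U^r-\E G^r|\le C_r\sum_k|a_k|^3\E|X_k|^3$ that your Lindeberg swap effectively proves (with higher sub-Gaussian moments entering for $r\ge 4$, which the paper glosses over).
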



\noindent
The proof of lemma~\ref{lem:pop_grad} is deferred to Appendix~\ref{app:proof_pop_grad}.

\paragraph{Initial estimated gradient.} 
We now consider the gradient estimated through $B$ i.i.d.\ samples:
\begin{align}
    G_B(w) := \frac{1}{B} \sum_{b=1}^B y^b \, \sigma'(w x^b) \, x^b,
\end{align}
where we recall that $\sigma$ is a polynomial activation of degree $P$ such that $2^N \leq P = O_d(1)$. 
If $B$ is large enough, the estimated gradient is close to the population gradient $\bar G(w)$, as formalized by the following Lemma.

\begin{lemma}\label{lem:estimated_G_poly}
Let $M_{f,N} = \max_{s \in \{\pm 1\}^N} |f(s)|$. 
There exists a constant $K_N > 0$ (depending only on the polynomial coefficients of $\sigma$ 
and on the subgaussian parameter of $x$) such that, if
\begin{align}
    B \ge C \, \frac{K_N^2 M_{f,N}^2}{\zeta^2} \, \log(dn)
\end{align}
for some universal constant $C > 0$, then with probability at least $1 - d^{-C^*}$, for some $C^* > 0$, 
we have, for all $h \in [n]$,
\begin{align}
    \| G_B(w_h^0) - \bar G(w_h^0) \|_{\infty} \le \zeta.
\end{align}
\end{lemma}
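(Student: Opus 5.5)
The plan is a coordinatewise concentration bound followed by a union bound. Fix a hidden unit $h$ and a coordinate $j\in[d]$, and write $w:=w_h^0$. The $j$-th coordinate of $G_B(w)$ is an average of $B$ i.i.d.\ copies of
\[
Z_j := y\,\sigma'(w^\top x)\,x_j .
\]
By the definition of the correlation term in \eqref{eq:pop_gradient}, $Z_j$ has mean $\bar G(w_j)$. So it suffices to show that $Z_j$ has a uniformly bounded Orlicz norm,
\[
\|Z_j\|_{\psi_{\beta}} \le K_N M_{f,N}
\]
for some $\beta=\beta(P)>0$, conditionally on an initialization event of high probability. A Bernstein-type inequality then gives $|G_B(w)_j-\bar G(w_j)|\le\zeta$ with failure probability $\exp(-cB\zeta^2/(K_N^2M_{f,N}^2))$ in the small-$\zeta$ regime. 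A union bound over the $dn$ pairs $(h,j)$, with $B\ge C K_N^2M_{f,N}^2\zeta^{-2}\log(dn)$, gives total failure probability $d^{-C^*}$.

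First I would control the factors of $Z_j$, working conditionally on $w$. Three facts do the work:
\begin{itemize}
\item $|y|\le M_{f,N}$.
\item $x_j=s_i\xi_j$ is sub-Gaussian with $\|x_j\|_{\psi_2}\le\psi$.
\item $w^\top x=\sum_i s_i(\Delta_\xi(w))_i$ is a sum of independent sub-Gaussian terms.
\end{itemize}
Conditionally on $s$, the third fact gives $\|w^\top x-\E[w^\top x\mid s]\|_{\psi_2}\lesssim \psi\|w\|_2$. The conditional mean is $s^\top\Delta_m(w)$, which is at most $\sum_i|(\Delta_m(w))_i|$ in absolute value. Under $w\sim\cN(0,I_d/d)$, the standard high-probability events $\|w\|_2\le 2$ and $|(\Delta_m(w))_i|\le \|m\|_\infty\sqrt{\log(dn)}\cdot O(1)$ hold simultaneously for all $h$ (each $(\Delta_m(w))_i$ is Gaussian with variance $\sum_{j\in C_i}m_j^2/d=O(1)$). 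On these events, $w^\top x$ is sub-Gaussian around a bounded shift.

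Next I would bound the product. The derivative $\sigma'$ is a polynomial of degree $P-1$, so $|\sigma'(u)|\le \sum_\ell \ell|c_\ell||u|^{\ell-1}$. A product of a degree-$(P-1)$ polynomial in a sub-Gaussian variable with another sub-Gaussian variable lies in $\psi_{2/P}$, with norm controlled by the coefficients $c_\ell$ and by $\psi$. This is where the constant $K_N$ comes from; its dependence on $N$ enters through $P\ge 2^N$. The deviation inequality for $\psi_{2/P}$ variables (e.g.\ the Kuchibhotla–Chakrabortty form of Bernstein) gives a tail of the form
\[
\exp\!\bigl(-c\min\{B\zeta^2/K^2,\;(B\zeta/K)^{2/P}\}\bigr).
\]
For $\zeta<1$ and the stated $B$, the first term dominates up to a constant change in $C$, possibly after slightly inflating $B$ by polylog factors. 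This is consistent with the batch size $B$ in Theorem~\ref{thm:clusters}, which already carries a $\log(\cdot)^2$ factor.

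The main obstacle is the heavy tail of $\sigma'(w^\top x)x_j$: it is not sub-Gaussian, so plain Hoeffding fails. If the Orlicz–Bernstein route turns out to be messy, my fallback is truncation. On the event that every $|w_h^\top x^b|$ and every $|x^b_j|$ is at most $O(\sqrt{\log(dnB)})$, which holds with probability $1-d^{-C}$, apply Hoeffding to the bounded truncated variables, whose range is $\mathrm{polylog}(d)\cdot M_{f,N}$. Then bound the bias from truncation by Cauchy–Schwarz together with the polynomial moment bounds, which makes it $d^{-\omega(1)}$. The $\mathrm{polylog}$ from the range is absorbed into $K_N$ and the logarithmic factor of $B$. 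Finally, intersect this event with the initialization events above.
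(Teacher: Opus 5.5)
Your skeleton matches the paper's: an Orlicz-norm bound on $Z_j=y\,\sigma'(w^\top x)\,x_j$, then a Bernstein-type inequality, then a union bound over the $dn$ pairs $(h,j)$. Your tail class is in fact more accurate than the paper's. The paper asserts $\|Z_j\|_{\psi_1}\le K_NM_{f,N}$ and applies sub-exponential Bernstein. For $P\ge 3$, however, $\sigma'(w^\top x)x_j$ has total degree $P$ in sub-Gaussian variables and is only $\psi_{2/P}$; already $g^2g'$ with $g,g'$ independent Gaussians has tail $e^{-ct^{2/3}}$.

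\textbf{The gap is your last step.} Write $K:=K_NM_{f,N}$. At $B=CK^2\log(dn)/\zeta^2$ the heavy-tail exponent is $(B\zeta/K)^{2/P}=(CK\log(dn)/\zeta)^{2/P}$. For $P\ge 3$ this is $o(\log d)$ unless $\zeta\lesssim(\log dn)^{-(P/2-1)}$. So the minimum in your tail bound is attained by the second term, not the first. The resulting failure probability is only $\exp(-c(\log dn)^{2/P})$, not $d^{-C^*}$. Theorem~\ref{thm:clusters} uses $\zeta\asymp\sqrt{v_{\rm sum}/d}=\Theta(1)$, which is exactly this regime.

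This is not an artifact of the inequality. Suppose $B\zeta^2\ge 2\Var(Z_j)$, which holds at the stated $B$ for large $d$. Chebyshev on the other $B-1$ samples gives $\Pr(G_B(w)_j-\bar G(w_j)>\zeta)\ge\tfrac12\Pr(Z_j-\E Z_j>2B\zeta)$. For Gaussian noise $\xi_j$ one has $\Pr(Z_j>t)\ge ce^{-Ct^{2/P}}$: take $w^\top x$ and $\xi_j$ both of order $t^{1/P}$ with suitable signs. Hence the lemma with $B\propto\log(dn)/\zeta^2$ is false for $P\ge 3$ and constant $\zeta$, and no $d$-independent choice of $C,K_N$ can rescue it.

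What your argument does prove is a corrected lemma. With $t=(C^*+1)\log(dn)$, the $\psi_{2/P}$ Bernstein inequality gives the conclusion once $B\gtrsim K^2\log(dn)/\zeta^2+K(\log dn)^{P/2}/\zeta$.

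The proposal breaks at the phrases ``up to a constant change in $C$'' and ``absorbed into $K_N$''. Since $K_N$ must not depend on $d$, the extra logarithmic factors have to appear explicitly in the batch size. Your truncation fallback does no better: its range is $M_{f,N}(\log dnB)^{P/2}$, so Hoeffding costs an extra factor $(\log dnB)^{P}$.

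The corrected batch size is also not ``consistent with'' Theorem~\ref{thm:clusters}. Since $P\ge 2^N$, we have $P/2>2$ for $N\ge 3$. The theorem's $\log^2$ batch size must therefore be raised to a polylog whose degree grows with $P$. This is still polylogarithmic in $d$ for fixed $N$, so the qualitative conclusion survives.

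A smaller issue of the same type concerns the shift. On a $1-d^{-C}$ initialization event you only get $\max_{h,i}|(\Delta_m(w_h))_i|=O(\sqrt{\log(dn)})$. That is not a bounded shift, and it would put a further factor $(\log dn)^{(P-1)/2}$ into the Orlicz norm. To keep $K_N$ constant, work on an initialization event of $d$-independent probability, as in Lemma~\ref{lem:preliminaries}(ii) with $\delta$ tied to $\epsilon$, and take the $d^{-C^*}$ only over the batch.
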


\begin{proof}
Fix $h \in [n]$ and a coordinate $j \in [d]$. 
Define
\begin{align}
    Z_b := y^b \, \sigma'(w_h^0 x^b) \, x_j^b.
\end{align}
Since $|y^b| \le M_{f,N}$ and $x$ is subgaussian while $w_h^0 \sim \mathcal{N}(0, I_d/d)$ 
is independent of $x$, the inner product $w_h^0 x$ is subgaussian. 
A polynomial function of a subgaussian variable 
multiplied by a subgaussian coordinate is subexponential. 
Hence, there exists a constant $K_N > 0$ such that 
$\| Z_b \|_{\psi_1} \le K_N M_{f,N}$.

Applying Bernstein's inequality for subexponential random variables yields, 
for some constants $c, C > 0$,
\begin{align}
    \Pr\!\left(
        \Big| \frac{1}{B} \sum_{b=1}^B Z_b - \mathbb{E}[Z_b] \Big| > \zeta
    \right)
    \le 2 \exp\!\left(
        -c B \min\!\left\{
            \frac{\zeta^2}{K_N^2 M_{f,N}^2},
            \frac{\zeta}{K_N M_{f,N}}
        \right\}
    \right).
\end{align}
Choosing 
$B \ge C \, \frac{K_N^2 M_{f,N}^2}{\zeta^2} \, \log(dn)$ 
makes the right-hand side at most $(dn)^{-C^*}$ for some $C^* > 0$. 
A union bound over all $j \in [d]$ and $h \in [n]$ 
then gives the desired result.
\end{proof}

\noindent
By the above, it follows that:
\begin{align} \label{eq:first_layer}
    w_j^1 = w^0 + \gamma \tau  (m_j \alpha_i + \eta_j + \omega_j + \zeta),
\end{align}
where $\alpha_i$ is defined in~\eqref{eq:alpha}, $ |\eta_j | \leq C_1|w_j| + C_2w_j^2 + C_3 \sqrt{\log(d)/d}$ (Lemma~\ref{lem:pop_grad}), $\omega_j = O(n\tau \poly \log(d))$ (control of the interaction component of the gradient), and $\zeta$ follows by Lemma~\ref{lem:estimated_G_poly}, assuming the batch size $B =\Omega(\log(dn)/\zeta^2)$.

\subsection{Second layer's training}
We now show that training the second layer, while keeping the first layer fixed to~\eqref{eq:first_layer}, allows to achieve small generalization error. the proof follows by the following steps:
\begin{enumerate}
    \item We show that the $(\alpha_i)_{i \in [N]}$ are such that any $f: \{ \pm 1 \}^N \to \bR$ is projection-consistent. Specifically, we show that for all $s,t \in \{ \pm 1 \}^N$ such that $f(s) \neq f(t)$, and for all $\epsilon>0$, we have 
    \begin{align}
        |\alpha^T (s-t)| > \epsilon,
    \end{align}
    with probability at least $1-\poly(\epsilon)$ over the initialization of the first layer's weights (Lemma~\ref{lem:non_degeneracy}). This makes use of the well-known Carbery-Wright inequality~\cite{carbery2001distributional}.
    \item Next, we show that the property above, and randomly drawn bias neurons, guarantee that there exists an assignment of the second layer's weights such that 
    \begin{align}
        f(s) \approx \sum_{h=1}^n a_h^* \sigma(\gamma v_i s^T \alpha +b_h)  
    \end{align}
    for all $s \in \{\pm 1 \}^N$ (Lemma~\ref{lem:certificate}). 
    \item Then, we show that the generalization error of the certificate of the previous point is low (Lemma~\ref{lem:generalization}). This requires bounding the errors appearing in~\eqref{eq:first_layer}.
    \item Lastly, we use standard results on the convergence of SGD on convex losses to conclude that the second layer's training will convergence to small generalization error (Theorem~\ref{thm:convergence_SGD_martingale}).
\end{enumerate}

Let us start with the first step. 

\begin{lemma}[Non-degeneracy] \label{lem:non_degeneracy}
Let $Z \sim \cN(0,{\rm diag}(v_i)_{i \in [N]}/d)$, and let $v_{\rm sum}: =  \sum_{i \in [N]} v_i$. Assume $v_i \geq v_{\min}>0$, for all $i \in [N]$. Let $\sigma$ be a polynomial of degree $P$ that satisfies assumption~\ref{ass:activation} and let $\sigma'$ be its first derivative. Let $f: \{ \pm 1 \}^N \to \bR$ be a target function and assume $\Var(f)> \zeta>0$. For $c \in \{0, \pm 2 \}^N$, define:
\begin{align}
    \cP_Z(c):=\E_s \Big[ f(s)\cdot \Big( \sum_{j=1}^N \tilde C_j v_j c_j s_j \Big) \cdot \E_G \sigma'(s^T Z+G)\Big],
\end{align}
where $G \sim \cN(0,V^2)$, for some $V>0$ and where $\tilde C_j>0$ for $j \in [N]$.
 Let $\cL_f := \{ (t,y)\in \{ \pm 1\}^N \times \{ \pm 1 \}^N : f(t) \neq f(y) \}$.
Then, there exist constants $C_1,C_2>0$ and $k_0 \in [P]$, such that for all $\epsilon>0$, 
\begin{align}
    \pr_Z \Big( \min_{(t,y) \in \cL_f} \Big| \cP_Z(t-y) \Big| > \epsilon \cdot  v_{\min} \left(\frac{v_{\min}}{d}\right)^{k_0/2} \Big) > 1-C_1P | \cL_f | \left(\frac{\epsilon}{C_2}\right)^{1/P}.
\end{align}
\end{lemma}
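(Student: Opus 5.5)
For each fixed pair $(t,y)\in\cL_f$, I view $Q(Z):=\cP_Z(t-y)$ as a polynomial in the Gaussian vector $Z$ and apply the Carbery--Wright inequality. A union bound over the $|\cL_f|$ pairs then gives the statement. Since $\sigma'$ has degree $P-1$, the map $u\mapsto \E_G\sigma'(u+G)=S_{V^2}(u)$ is a polynomial of degree $P-1$. Hence $Q$ is a polynomial of degree at most $P-1\le P$ in $Z\in\bR^N$. Rescale $Z=\sqrt{v_{\min}/d}\,\tilde Z$ with $\tilde Z\sim\cN(0,{\rm diag}(v_i/v_{\min}))$, which is log-concave with covariances bounded below by $1$. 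Carbery--Wright then gives
\[
\pr\big(|Q(Z)|\le \epsilon\,\|Q\|_{L^2}\big)\le C P\,\epsilon^{1/P}.
\]
So the whole task reduces to a lower bound $\|Q\|_{L^2(Z)}\gtrsim v_{\min}(v_{\min}/d)^{k_0/2}$ for some $k_0\le P$, uniform over pairs in $\cL_f$.

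To get this lower bound, expand $S(u):=S_{V^2}(u)=\sum_k S^{(k)}(0)u^k/k!$ and write
\[
Q(Z)=\sum_{k}\frac{S^{(k)}(0)}{k!}\,\E_s\Big[f(s)\Big(\sum_j\tilde C_jv_jc_js_j\Big)(s^\top Z)^k\Big],\qquad c=t-y.
\]
The degree-$k$ homogeneous part is
\[
Q_k(Z)=\frac{S^{(k)}(0)}{k!}\sum_{|\beta|=k}\binom{k}{\beta}Z^\beta\,\E_s\big[f(s)g_c(s)s^{\beta}\big],\qquad g_c(s)=\sum_j\tilde C_jv_jc_js_j .
\]
Since $s_i^2=1$, $s^\beta$ reduces to $\chi_{T}(s)$ with $T=\{i:\beta_i\text{ odd}\}$. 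Therefore $Q_k$ is nonzero as soon as some Fourier coefficient $\widehat{fg_c}(T)$ is nonzero with $|T|\le k$ and $|T|\equiv k \pmod 2$, and with no cancellation among the $\beta$'s mapping to the same $T$.

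Next I argue that $fg_c\not\equiv 0$. Because $c\neq0$, $g_c$ is a nonzero linear form, and $f$ is nonconstant since $\Var(f)>0$. I take $T^\star$ to be a set with $\widehat{fg_c}(T^\star)\neq0$ and minimal size $k_0=|T^\star|\le N+1\le P$, and isolate the monomial $Z^{\mathbf 1_{T^\star}}$. Its coefficient in $Q$ receives contributions only from $k=k_0$, because higher $k$ with $\beta$ reducing to $\mathbf{1}_{T^\star}$ produce different monomials. This coefficient equals $\frac{S^{(k_0)}(0)}{k_0!}k_0!\,\widehat{fg_c}(T^\star)$. The factor $|S^{(k_0)}(0)|\ge c_0$ comes from Assumption~\ref{ass:activation}, taking $V^2$ to be the reference scale. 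The factor $\widehat{fg_c}(T^\star)$ is a combination of terms $\tilde C_jv_jc_j\hat f(T^\star\triangle\{j\})$, so it is of order $v_{\min}$ times a quantity depending only on $f$ and $N$, of which there are finitely many choices. By orthogonality of distinct monomials in $L^2$ for a product Gaussian, $\|Q\|_{L^2}$ is at least this coefficient times $\|Z^{\mathbf 1_{T^\star}}\|_{L^2}\ge(v_{\min}/d)^{k_0/2}$. This gives the claimed scale. The orthogonality claim is imprecise because monomials are not orthogonal; I would instead use Hermite polynomials $\prod_{i\in T^\star}Z_i$, whose top Hermite coefficient is unchanged by lower-order terms.

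The main obstacle is to make sure $\widehat{fg_c}(T^\star)$ cannot vanish identically and is bounded below by $v_{\min}$ times a constant. Cancellations between the $v_j$-weighted terms could, for special values of the weights, make every Fourier coefficient of $fg_c$ at the relevant level vanish. My first attempt is to exploit the $j$-dependent weights. Choose $j$ with $c_j\neq0$, and a set $U$ maximal in $\hat f$'s support with respect to the relevant ordering. Then the coefficient at $T=U\triangle\{j\}$, for $|U\triangle\{j\}|$ maximal, picks up essentially a single term $\tilde C_jv_jc_j\hat f(U)$. This suggests selecting $T^\star$ of maximal rather than minimal size, and taking $k_0=|T^\star|$. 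If some residual cancellation remains, the fallback is to treat the $v_j$ as generic and absorb the resulting constants into $C_2$. Finally, the constant $C_2$ depends on $f,N,c_0$, and the union bound contributes the factor $|\cL_f|$.
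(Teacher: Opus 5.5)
Your outer structure (Carbery--Wright for each pair, union bound over $\cL_f$, reduction to a lower bound on $\|\cP_Z(t-y)\|_{L^2(Z)}$) matches the paper. The $L^2$ lower bound, however, is the real content of the lemma, and it has two genuine gaps.

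The first gap is the coefficient you extract. The projection of $Q$ onto $\prod_{i\in T}Z_i$ is \emph{not} the monomial coefficient $S_{V^2}^{(|T|)}(0)\,\widehat{fg_c}(T)$. Lower-degree terms indeed do not contribute, but every higher-degree monomial $Z^\beta$ with $\beta\equiv\mathbf{1}_T \pmod 2$ does (e.g.\ $Z_1^3$ has nonzero projection on $Z_1$), and $S_{V^2}$ has degree $P-1$, well above $|T|$. The correct computation uses Gaussian integration by parts: $\E_Z\big[S_{V^2}(s^\top Z)\prod_{i\in T}Z_i\big]=\chi_T(s)\prod_{i\in T}\frac{v_i}{d}\,\E_Z\big[S_{V^2}^{(|T|)}(s^\top Z)\big]$. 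Since $s^\top Z\sim\cN(0,v_{\rm sum}/d)$ for every $s$, the last factor equals $S^{(|T|)}_{V^2+v_{\rm sum}/d}(0)$. Cauchy--Schwarz then gives, for \emph{every} $T$, $\|Q\|_{L^2}\ge \big|S^{(|T|)}_{V^2+v_{\rm sum}/d}(0)\big|\,|\widehat{fg_c}(T)|\,(v_{\min}/d)^{|T|/2}$. That smoothed derivative is exactly what Assumption~\ref{ass:activation} controls: its scale $\mu+v_{\rm sum}/d$ is $V^2+\zeta^2$ with $V^2\approx\mu$, cf.\ Lemma~\ref{lem:activation-stability}. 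The assumption says nothing about $S_{V^2}^{(k)}(0)$, so ``taking $V^2$ to be the reference scale'' is not licensed. The corrected computation is precisely the paper's identity $\beta_k(\zeta)=\frac{\zeta^k}{k!}S^{(k)}_{V^2+\zeta^2}(0)$ in the one-dimensional Hermite expansion of $\psi(s^\top Z)$, followed by tensorization.

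The second gap is the one you flag yourself and leave open: a lower bound on $|\widehat{fg_c}(T^\star)|$ that is uniform in the weights. The coefficient $\sum_j\tilde C_jv_jc_j\hat f(T\triangle\{j\})$ depends on the continuous parameters $\tilde C_jv_j$, so there are not ``finitely many choices''. Selecting $T^\star$ by minimal or maximal size does not isolate a single term; for example, $(U,j)=(\{1,2\},3)$ and $(\{1,3\},2)$ give the same $T$. Treating the $v_j$ as generic yields no uniform constant either. Even $fg_c\not\equiv0$ does not follow from $g_c\not\equiv0$ and $f$ nonconstant.

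The missing idea is to evaluate $u:=fg_c$ at the pair itself. Here $g_c(t)=2\sum_{i:t_i\neq y_i}\tilde C_iv_i=-g_c(y)$ involves no cancellation, so $|g_c(t)|\ge2\tilde C_{\min}v_{\min}$. Moreover, $f(t)\neq f(y)$ forces one of $f(t),f(y)$ to be nonzero. Hence $\max_s|u(s)|\ge2\tilde C_{\min}v_{\min}\min\{|f(s)|:f(s)\neq0\}$. Since $\max_s|u(s)|\le\sum_T|\hat u(T)|\le2^N\max_T|\hat u(T)|$, take $T^\star$ to be the \emph{maximizer} of $|\hat u(T)|$; any $|T^\star|\le N$ is admissible, since the assumption covers all $k\le N$. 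This gives $|\hat u(T^\star)|\ge2^{1-N}\tilde C_{\min}v_{\min}\min_{f(s)\neq0}|f(s)|$. Together with the corrected coefficient, it yields the scale $v_{\min}(v_{\min}/d)^{k_0/2}$ with $k_0=|T^\star|$.
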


\begin{proof} 
    Write $\sigma'(x):=\sum_{n=1}^P a_n x^n$.
    Define $\psi(t):=\E_{G\sim \cN(0,V^2)}[\sigma'(t+G)]$. This is a polynomial of degree $\leq P$, and for fixed $c$, $\cP_Z(c) = \E_s[f(s)h_c(s) \psi(s^TZ)]$, $h_c(s) = \sum_{j=1}^N \tilde C_j v_j c_j s_j$. Note that also $\cP_Z(c)$ is a polynomial in $Z$ of degree at most $P$.
    Let us state the following lemma, which is a restatement of the Carbery-Wright inequality~\cite{carbery2001distributional}. 
    \begin{lemma}[\cite{carbery2001distributional}] \label{lem:CW}
        Let $\phi: \bR^N \to \bR$ be a non-zero polynomial of degree at most $P=O(1)$, and let $\mu$ be a log-concave probability measure over $\bR^N$. Then, for all $\alpha>0$,
        \begin{align}
            \mu \left( \{ x: | \phi(x)| \leq \alpha \| \phi\|_{L^2(\mu)}\} \right) \leq C P \alpha^{1/P},
        \end{align}
        where $C>0$ is an absolute constant.
    \end{lemma}
    \noindent 
    By applying Lemma~\ref{lem:CW} to our setting with $\mu$ being the Gaussian law of $Z$, and by union bound, we get that there exists a constant $C$ such that with probability at least $1- CP|\cL_f|\epsilon^{1/P}$ over $Z$,
    \begin{align}
        \min_{(t,y) \in \cL_f } | \cP_Z(t-y) | > \epsilon \cdot \sqrt{M_{\min}}, 
    \end{align}
where $ M_{\min} := \min_{(t,y) \in \cL_f } \E_{Z} [\cP_{Z}(t-y)^2] $. 

Thus everything reduces to a lower bound on $M_{\min} $. Fix a pair $(t,y)$ and let $c:=t-y$. Recall that $h_c(s):=\sum_{j=1}^N \tilde C_j v_j c_j s_j $. Then,
\begin{align}
    \E_{Z} [\cP_{Z}(c)^2] & = \E_Z \left[ \E_s[f(s)  h_c(s)  \psi(s^TZ) ]^2 \right]\\
    & = \E_{s,s'}\left[ f(s)f(s') h_c(s) h_c(s') \E_Z[\psi(s^TZ)\psi((s')^TZ)] \right].
\end{align}
Let $\zeta^2 := \Var (s^TZ) = v_{\rm sum}/d$ be the variance of $s^TZ$. Let $x \sim \cN(0,1)$. As usual, we can write the Hermite expansion,
\begin{align}
    \psi(\zeta x) = \sum_{k=1}^P \beta_k(\zeta) H_k(x), \qquad \beta_k(\zeta) = \frac{1}{k!} \E_x[\psi(\zeta x) H_k(x)].
\end{align}
Note that:
    \begin{align}
        \beta_k(\zeta):&=  \frac{\zeta^k}{k!} S^{(k)}_{V^2+\zeta^2}(0), 
    \end{align}
    where $S_{V^2+\zeta^2}(t) :=\E_{G \sim \cN(0,V^2+\zeta^2)}[\sigma'(t+G)]$. By Assumption~\ref{ass:activation}, for each $1 \leq k \leq N$, $S^{(k)}_{V^2+\zeta^2}(0)\geq c_0>0$. For two vectors $s,s' \in \{ \pm 1 \}^N$, write the correlation:
\begin{align}
    \E_Z[\psi(s^TZ)\psi((s')^TZ)]& = \sum_{k=0}^P \beta_k(\zeta)^2 k! \rho(s,s')^k,
\end{align}
where $\rho(s,s'):=\frac{1}{v_{\rm sum}} \sum_{i=1}^N s_i s'_i v_i $.
Plugging it in the above, and using a tensorization trick, we can thus write:
\begin{align}
    \E_{Z} [\cP_{Z}(c)^2] & = \sum_{k=0}^P \beta_k(\zeta)^2 k! \frac{1}{v_{\rm sum}^k} \sum_{i_1,...i_k} v_{i_1}...v_{i_k}\cdot \E_s[ f(s) h_c(s) s_{i_1}...s_{i_k} ]^2\\
    & \geq \sum_{k=0}^P \beta_k(\zeta)^2 k! \left(\frac{v_{\min} }{v_{\rm sum}}\right)^k \max_{T \subseteq[N]: |T|\leq k} \E_s[f(s)h_c(s) \chi_T(s)]^2
\end{align}
By Assumption~\ref{ass:activation}, there exists some fixed $k_0\in [P]$ and $c_1>0$ such that 
\begin{align}
    \E_{Z} [\cP_{Z}(c)^2] \geq c_1 \left(\frac{v_{\rm sum}}{d}\right)^{k_0} \left( \frac{v_{\min}}{v_{\rm sum}} \right)^{k_0}\max_{T \subseteq[N]: |T|\leq k_0} \E_s[f(s)h_c(s) \chi_T(s)]^2.
\end{align}
The following claim bounds the right-most term.
\begin{claim}\label{cl:fourier-lower-bound}

There exists a subset $T\subseteq[N]$ and a constant $C>0$ such that 
\[
\mathbb{E}_s\big[f(s)h_c(s)\chi_T(s)\big]^2
\;\ge\;
2^{2-2N}\, C \,v_{\min}^2.
\]
\end{claim}

\begin{proof}[Proof of Claim~\ref{cl:fourier-lower-bound}]
Fix $(t,y)$ with $f(t)\neq f(y)$. Let $D := \{i\in[N] : t_i \neq y_i\}$.
Since $t\neq y$, we have $D\neq\emptyset$.
For $i\in D$, we have $t_i-y_i \in \{\pm2\}$ and for $i\notin D$ the coefficient is zero. Compute $h_c$ at $t$:
\begin{align*}
h_c(t)
&= \sum_{i=1}^N \tilde C_i v_i (t_i - y_i) t_i
 = \sum_{i\in D} \tilde C_i v_i 2 t_i^2 = 2 \sum_{i\in D} \tilde C_i v_i.
\end{align*}
Using $v_i \ge v_{\min}$ and $\tilde C_i \ge \tilde C_{\min}$, and $D\neq\emptyset$, we obtain
$|h_c(t)|
= 2\Big|\sum_{i\in D} \tilde C_i v_i\Big|
\;\ge\; 2\,\tilde C_{\min}\,v_{\min}$. Similarly, $h_c(y) = -h_c(t)$, so $|h_c(y)| = |h_c(t)| \ge 2\tilde C_{\min}v_{\min}$. Since $f(t)\neq f(y)$, at least one of $f(t), f(y)$ is nonzero.
Define
\[
s_0 \in \{t,y\} \quad\text{such that}\quad |f(s_0)| = \max\{|f(t)|,|f(y)|\}\geq c_0>0.
\]
Thus, denoting $u(s)=f(s) h_c(s)$, we have
$\max_{s\in\{\pm1\}^N} |u(s)|
\;\ge\;
2\tilde C_{\min} v_{\min} c_0.$
Now consider the Fourier-Walsh expansion of $u$: $u(s) = \sum_{T\subseteq[N]} \hat u(T)\,\chi_T(s),$ with $\hat u(T) = \mathbb{E}_s[u(s)\chi_T(s)].$ For any fixed $s$,
$|u(s)|
= \Big|\sum_{T} \hat u(T)\chi_T(s)\Big|
\le \sum_{T} |\hat u(T)|.$
Taking the maximum over $s$ gives
\[
\sum_{T} |\hat u(T)|
\;\ge\;
\max_s |u(s)|
\;\ge\;
2\tilde C_{\min} v_{\min} c_0.
\]
On the other hand,
\[
\sum_{T} |\hat u(T)|
\le 2^N \max_{T} |\hat u(T)|.
\]
Combining these two inequalities yields
\[
\max_{T\subseteq[N]} |\hat u(T)|
\;\ge\;
\frac{2\tilde C_{\min} v_{\min} c_0}{2^N}
= 2^{1-N}\tilde C_{\min} v_{\min} c_0.
\]
Hence there exists some $T\subseteq[N]$ (depending on $t,y$) with
\[
|\hat u(T)|
= \Big|\mathbb{E}_s[f(s)h_c(s)\chi_T(s)]\Big|
\;\ge\;
2^{1-N}\tilde C_{\min} v_{\min} c_0,
\]
and therefore
\[
\mathbb{E}_s[f(s)h_c(s)\chi_T(s)]^2
= \hat u(T)^2
\;\ge\;
2^{2-2N}\,\tilde C_{\min}^2\,v_{\min}^2\,c_0.
\]
This proves the claim.
\end{proof}
The lemma follows by direct application of the claim.

\end{proof}

Now, we proceed to building a certificate for our target function $f$, as linear combination of the representation learned by the first layer.

In order to do that, let $(\tilde \alpha_i)_{i \in [N]}$ be such that  $\tilde \alpha_i := \gamma \tau \alpha_i v_i$, and define the projection values
\(
u(s):=s^\top \tilde\alpha,\) \(s\in\{\pm1\}^N,
\)
and let
\(
U:=\{u_1<\cdots<u_M\}
\)
be the set of distinct values taken by $u(s)$, so that $M\le 2^N$. Define also the minimum gap
\[
\Delta:=\min_{m\neq m'} |u_m-u_{m'}|.
\]
On the event of Lemma~\ref{lem:non_degeneracy}, $f$ is projection-consistent, i.e.
\(
u(s)=u(t)\;\Longrightarrow\; f(s)=f(t).
\)

\begin{lemma}[Certificate]
\label{lem:certificate}
Assume that Assumptions~\ref{ass:data} and~\ref{ass:activation} are satisfied.
Let $\alpha \in \mathbb{R}^N$ be defined by~\eqref{eq:alpha}, and define
$(\tilde \alpha_i)_{i \in [N]}$ by $\tilde \alpha_i := \gamma \tau \alpha_i v_i$.
Let $b_h \overset{iid}{\sim} \Unif[-A,A]$ for $h \in [n]$, with \(
A\in[\Delta/C_A,\; C_A\Delta]
\),
for some fixed constant $C_A\ge 1$, and $ A \geq 
\max_m |u_m|$.
Fix $\epsilon \in (0,1)$.
Then there exist constants $C_0,C_1,C_2,C_{\sigma,P,N,C_A}>0$ such that if
\[
n \;\ge\; C_0\, 2^N \big(\log 2^N + \log(1/\epsilon)\big),
\]
there exists $a^\star \in \mathbb{R}^n$ with: 
\begin{align}
    \|a^\star\|_2^2
\;\le\;
C_{\sigma,P,N,C_A}\,
\frac{M_{f,N}^2}{n\,\Delta^{2P}} .
\label{eq:certificate-a2}
\end{align}
such that on the event of Lemma~\ref{lem:non_degeneracy}, for all $s \in \{ \pm 1 \}^N$:
\begin{align}
f(s) = \sum_{h=1}^n a_h^\star \sigma(s^\top \tilde \alpha + b_h).
\label{eq:certificate-approx}
\end{align}
\end{lemma}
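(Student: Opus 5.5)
The function to be represented depends on $s$ only through the scalar $u(s)=s^\top\tilde\alpha$, because $f$ is projection-consistent on the event of Lemma~\ref{lem:non_degeneracy}. It therefore suffices to find $a^\star$ with
\[
\sum_{h=1}^n a_h^\star \sigma(u_m+b_h)=g(u_m),\qquad m=1,\dots,M,
\]
where $g(u_m):=f(s)$ for any $s$ with $u(s)=u_m$; this is well defined by projection consistency. That is $M\le 2^N$ linear equations in $n$ unknowns. The plan is to select $M$ "good" neurons, one for each interpolation value, solve an $M\times M$ system exactly, and spread the resulting weights over many copies to obtain the $1/n$ factor in~\eqref{eq:certificate-a2}.

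\textbf{Step 1 (reduction to polynomial interpolation).} Since $\sigma$ has degree $P\ge 2^N\ge M$, the map $b\mapsto(\sigma(u_1+b),\dots,\sigma(u_M+b))$ traces a polynomial curve in $\bR^M$. Writing $\sigma(u_m+b)=\sum_{\ell=0}^P \frac{\sigma^{(\ell)}(b)}{\ell!}u_m^\ell$, the matrix $\Phi_{mh}=\sigma(u_m+b_h)$ factors as $V\cdot D(b)$. Here $V=(u_m^\ell)_{m,\ell}$ is an $M\times(P+1)$ Vandermonde-type matrix, and $D(b)$ collects the derivatives $\sigma^{(\ell)}(b_h)/\ell!$. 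Since the $u_m$ are distinct with gaps at least $\Delta$, the first $M$ columns of $V$ give an invertible Vandermonde matrix. The smallest singular value of this matrix is at least of order $\Delta^{M-1}$, after normalizing by $A\asymp\Delta$ and using $\max_m|u_m|\le A$. The row space of $\Phi$ is generated by the $b$-polynomials, and the column vectors $\sigma(u_\cdot+b)$ span $\bR^M$ as $b$ ranges over any interval: otherwise a nonzero $\lambda$ would satisfy $\sum_m\lambda_m\sigma(u_m+b)\equiv0$, and comparing top-degree terms together with the distinctness of the $u_m$ gives a contradiction.

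\textbf{Step 2 (interval hitting).} Partition $[-A,A]$ into $M$ subintervals $I_1,\dots,I_M$ of length $2A/M$. Fix reference points $\beta_k\in I_k$ chosen so that the $M\times M$ matrix $\Phi^{\rm ref}=(\sigma(u_m+\beta_k))$ has determinant bounded below by $c\,\Delta^{P M}$, up to constants depending on $\sigma,P,N,C_A$. Rescaling $u=\Delta\tilde u$, $b=\Delta\tilde b$ reduces this to a compact family of configurations with $\tilde u$ $1$-separated in $[-C_A,C_A]$. A continuity and compactness argument then gives a uniform lower bound on the determinant in a neighbourhood of the $\beta_k$. Each neighbourhood has length $\Omega(\Delta)$, hence probability $\Omega(1/2^N)$ under $\Unif[-A,A]$. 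A coupon-collector argument shows that with $n\ge C_0 2^N(\log 2^N+\log(1/\epsilon))$ samples, with probability $1-\epsilon$ each neighbourhood receives at least $n/(C 2^N)$ biases.

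\textbf{Step 3 (certificate and norm bound).} Solve $\Phi^{\rm ref}c=g$. Cramer's rule gives $\|c\|_2\lesssim M_{f,N}\,\Delta^{-P}$, with the cofactors contributing only polynomial factors in $A\asymp\Delta$. For each $k$, split $c_k$ equally among the $\asymp n/2^N$ biases landing in the $k$th neighbourhood; an exact version is obtained by solving the system with the chosen $M$ columns and then averaging over disjoint groups. All other $a_h$ are set to $0$. Then~\eqref{eq:certificate-approx} holds exactly, and
\[
\|a^\star\|_2^2\lesssim \frac{2^N}{n}\|c\|_2^2\lesssim \frac{M_{f,N}^2}{n\Delta^{2P}}.
\]

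\textbf{Main obstacle.} The hardest step is the uniform quantitative lower bound on the conditioning of $\Phi$ in terms of $\Delta^{P}$. Exact averaging requires that every group of columns spans $\bR^M$ with controlled condition number. The plan for this is the rescaling and compactness argument of Step 2, which makes the constant $C_{\sigma,P,N,C_A}$ depend only on the listed quantities.
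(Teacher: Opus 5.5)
Your argument is correct in outline, but it takes a genuinely different route from the paper's.

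\textbf{The paper's route.} The paper never selects individual neurons. It normalizes the features as $\psi(b)=A^{-P}(\sigma(u_m+b))_{m\le M}$. It then shows that the population Gram matrix $K=\E_{b\sim\Unif[-A,A]}[\psi(b)\psi(b)^\top]$ has $\lambda_{\min}(K)\ge c_{\sigma,P,N,C_A}$ (Claim~\ref{claim:kappa}: rescale $t_m=u_m/A$, pass to the homogeneous limit $a_P(t_m+z)^P$ as $A\to\infty$, and use compactness of the separated configurations). This is transferred to $\widehat K=\frac1n\Psi\Psi^\top$ by a matrix Chernoff bound. Finally it takes the minimum-norm interpolant $a^\star=\Phi^\top(\Phi\Phi^\top)^{-1}F$, whose norm is controlled by $\lambda_{\min}(\Phi\Phi^\top)\gtrsim nA^{2P}$.

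\textbf{Your route.} You localize instead. You need a stronger, pointwise statement: $|\det(\sigma(u_m+b_{h_k}))_{m,k}|\gtrsim\Delta^{PM}$ for \emph{every} transversal $(b_{h_1},\dots,b_{h_M})$ of a fixed product of neighbourhoods. In exchange, you only need scalar Chernoff bounds on occupation counts and Cramer's rule, and the certificate is explicit, much as in the paper's interval-hitting proof of Lemma~\ref{lem:certificate_homogeneous}. Averaging the exact solutions $c^{(g)}$ of the $M\times M$ systems over $L\gtrsim n/2^N$ disjoint transversals is the right way to make Step 3 exact; splitting $c_k$ among the biases near $\beta_k$ is not exact. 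The averaging gives $\|a^\star\|_2^2\le\max_g\|c^{(g)}\|_2^2/L$, which is the same $M_{f,N}^2/(n\Delta^{2P})$ scaling.

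\textbf{Trade-offs.} The paper's route uses all neurons and the optimal-norm solution, and never has to choose reference points. Yours avoids matrix concentration, at the price of a slightly more delicate compactness step. Concretely: maximize $|\det|$ over $\prod_k\bar I_k$. This maximum is positive because the determinant is a nonzero polynomial in $(\beta_1,\dots,\beta_M)$ and cannot vanish on a box. Then a uniform Lipschitz bound gives neighbourhoods of uniform length.

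\textbf{Points to tighten.}
\begin{enumerate}[label=(\roman*)]
\item The rescaling $u=\Delta\tilde u$, $b=\Delta\tilde b$ does not by itself produce a $\Delta$-free compact family, because $\sigma$ is not homogeneous: $\Delta^{-P}\sigma(\Delta x)=a_Px^P+\sum_{k<P}a_k\Delta^{k-P}x^k$. You must carry $\eta=1/\Delta\in[0,\eta_0]$ as an extra compact parameter and check non-degeneracy at $\eta=0$. There the shifted monomials $(\tilde u_m+\tilde b)^P$ remain independent because $M\le P+1$. This is exactly the $A\to\infty$ step of Claim~\ref{claim:kappa}.
\item The argument in (i) requires $\Delta$ (equivalently $A$) to be bounded below. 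As $\Delta\to0$ the lower-order coefficients dominate and the $\Delta^{-2P}$ bound can fail. The paper's proof makes the same restriction tacitly, through $|\Psi_{m,h}|\le C_\sigma$ and the range $A\ge1$ in the Claim.
\item The Vandermonde singular-value estimate in Step 1 is never used afterwards. Read literally, it cannot hold for $\Delta\gg1$, since the all-ones column caps the smallest singular value at $\sqrt M$. Only the linear independence of $b\mapsto\sigma(u_m+b)$ is needed.
\end{enumerate}
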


\begin{proof}
Let
\(
M_{f,N}:=\sup_{s\in\{\pm1\}^N}|f(s)|.
\)
Since $|u_m|\le \|\tilde\alpha\|_1$ and
$|b_h|\le A$, and since $\sigma$ is a degree-$P$ polynomial, there exists a constant
$C_\sigma>0$ such that
\[
|\sigma(u_m+b_h)| \le C_\sigma A^P
\qquad \forall\, m\in[M],\ \forall\, h\in[n].
\]

Define the feature matrix $\Phi\in\mathbb R^{M\times n}$ by
\(
\Phi_{m,h}:=\sigma(u_m+b_h),
\)
and its normalized version
\(
\Psi:=A^{-P}\Phi,
\)
so that $\Phi=A^P\Psi$ and
\(
|\Psi_{m,h}|\le C_\sigma.
\)

Now let
\[
K:=\E_b[\psi(b)\psi(b)^\top]\in\mathbb R^{M\times M},
\qquad
\psi(b):=\big(\sigma(u_1+b)/A^P,\dots,\sigma(u_M+b)/A^P\big),
\]
where $b\sim\Unif[-A,A]$.
Since $\sigma$ has degree $P$, nonzero leading coefficient, and $M\le 2^N\le P+1$,
the shifted polynomials
\(
b\mapsto \sigma(u_m+b),\) \( m=1,\dots,M,
\)
are linearly independent whenever the shifts $u_1,\dots,u_M$ are distinct. Therefore,
for every $c\in\mathbb R^M\setminus\{0\}$,
\[
c^\top K c
=
\E_b\!\left[\left(\sum_{m=1}^M c_m\,\frac{\sigma(u_m+b)}{A^P}\right)^2\right]
>0,
\]
and hence $K\succ 0$. We may therefore define
\(
\kappa:=\lambda_{\min}(K)>0.
\)

\begin{claim}\label{claim:kappa}
For every fixed constant $C_A\ge 1$, there exists a constant
$c_{\sigma,P,N,C_A}>0$ such that the following holds. If
\(
A\in[\Delta/C_A,\; C_A\Delta],
\)
then
\[
\kappa=\lambda_{\min}(K)\ge c_{\sigma,P,N,C_A}.
\]
\end{claim}
Let us first show how Claim~\ref{claim:kappa} implies the Lemma. 
We work on the event of Lemma~\ref{lem:non_degeneracy}, on which $f$ is projection-consistent.

Define the empirical Gram matrix
\[
\widehat K := \frac{1}{n}\Psi\Psi^\top
= \frac{1}{n}\sum_{h=1}^n \psi(b_h)\psi(b_h)^\top .
\]
Set $X_h := \psi(b_h)\psi(b_h)^\top \succeq 0$, so that $\widehat K = \frac{1}{n}\sum_{h=1}^n X_h$
and $\mathbb{E}[X_h]=K$.
Moreover, since $|\psi_m(b)|\le C_\sigma$ for all $m\in[M]$, we have
\[
\|X_h\|_2=\|\psi(b_h)\|_2^2 \le M C_\sigma^2.
\]
By a matrix Chernoff bound (e.g.\ \cite{tropp2012user}),
there exist a numerical constant $c>0$ such that for any $\delta\in(0,1)$,
\[
\Pr\!\left(\lambda_{\min}(\widehat K) \le (1-\delta)\kappa\right)
\le M\exp\!\left(-c\,\frac{\delta^2\,n\,\kappa}{M C_\sigma^2}\right),
\]
where $\kappa:=\lambda_{\min}(K)>0$. 
Choosing $\delta=\tfrac12$ and taking
\[
n \;\ge\; C\,\frac{M C_\sigma^2}{\kappa}\big(\log M+\log(1/\epsilon)\big),
\]
yields, with probability at least $1-C_1\epsilon^{C_2}$,
\[
\lambda_{\min}(\widehat K)\ge \frac{\kappa}{2}.
\]

Equivalently,
\begin{equation}
\lambda_{\min}(\Phi\Phi^\top)
= A^{2P}\lambda_{\min}(\Psi\Psi^\top)
\ge \frac{\kappa n}{2}\, A^{2P}.
\label{eq:gram-lower}
\end{equation}

By projection-consistency, define $F\in\mathbb{R}^M$ by $F_m= f(s_m)$, where $s_m$ is such that $u_m = u(s_m)$, and note that
$\|F\|_2 \le \sqrt{M}\,M_{f,N}$.
Let $a^\star$ be the interpolating solution:
\[
a^\star := \Phi^T (\Phi \Phi^T)^{-1} F.
\]
Then $\Phi a^\star = F$, thus
\[
\sup_{s\in\{\pm1\}^N}\Big|f(s)-\sum_{h=1}^n a_h^\star\sigma(s^\top\tilde\alpha+b_h)\Big|=0.
\]

Now, let us upper bound the norm of $a^\star$.
Using $\|\Phi\|_2 \le \|\Phi\|_F$ and the bound $|\sigma(u_m+b_h)|\le C_\sigma A^P$,
we have
\[
\|\Phi\|_2 \le C_\sigma \sqrt{Mn}\,A^P.
\]
Moreover, $\|F\|_2 \le \sqrt{M}\,M_{f,N}$. Therefore,
\[
\|a^\star\|_2
\le \|\Phi\|_2\,\|(\Phi\Phi^\top)^{-1}\|_2\,\|F\|_2
\le \frac{\|\Phi\|_2}{\lambda_{\min}(\Phi\Phi^\top)}\,\|F\|_2.
\]
Combining with~\eqref{eq:gram-lower}, namely
$\lambda_{\min}(\Phi\Phi^\top)\ge \frac{\kappa n}{2}A^{2P}$, yields
\[
\|a^\star\|_2^2
\le C_{\sigma,P,N,C_A}\,\frac{M^2\,M_{f,N}^2}{n\,A^{2P}} .
\]
Since $M\le 2^N$ and $N=O_d(1)$, we may absorb the factor $M^2$ into the constant, and therefore
\[
\|a^\star\|_2^2
\le C_{\sigma,P,N,C_A}\,\frac{M_{f,N}^2}{n\,A^{2P}} .
\]
Since \(A\in[\Delta/C_A,\; C_A\Delta]\), it follows that
\[
\|a^\star\|_2^2
\le C_{\sigma,P,N,C_A}\,\frac{M_{f,N}^2}{n\,\Delta^{2P}} .
\]

All is left is the proof of Claim~\ref{claim:kappa}.
\end{proof}

\begin{proof}[Proof of Claim~\ref{claim:kappa}]
Let us write
\[
\sigma(x)=a_P x^P+a_{P-1}x^{P-1}+\cdots+a_0
\]
with $a_P\neq 0$. Let $t_m := u_m/A$ and $z:=b/A \sim \Unif[-1,1]$.
Since
\[
A\in[\Delta/C_A,\; C_A\Delta],
\qquad
\Delta=\min_{m\neq m'}|u_m-u_{m'}|,
\]
we have
\[
|t_m-t_{m'}|=\frac{|u_m-u_{m'}|}{A}\ge \frac{1}{C_A},
\qquad \forall\,m\neq m'.
\]
Define
\[
\phi_A(z)\ :=\ \bigg(\frac{\sigma(A(t_1+z))}{A^P},\dots,\frac{\sigma(A(t_M+z))}{A^P}\bigg),
\qquad
K(A,t):=\E[\phi_A(z)\phi_A(z)^\top].
\]
Then, for each $m$,
\[
\frac{\sigma\!\big(A(t_m+z)\big)}{A^P}
=
a_P(t_m+z)^P
+
\sum_{k=0}^{P-1} a_k A^{k-P}(t_m+z)^k.
\]
Since $|t_m|\le 1$ and $|z|\le 1$, we have $|t_m+z|\le 2$, and therefore the remainder
term is uniformly $O(1/A)$ over $z\in[-1,1]$ and over all choices of
$(t_1,\dots,t_M)\in[-1,1]^M$ satisfying
\[
t_1<\cdots<t_M,
\qquad
t_{m+1}-t_m\ge \frac{1}{C_A}.
\]
Consequently,
\[
\phi_A(z)
\;\longrightarrow\;
\phi_\infty(z)
:=
\big(a_P(t_1+z)^P,\dots,a_P(t_M+z)^P\big)
\]
uniformly on $[-1,1]$ as $A\to\infty$, uniformly over all such separated tuples
$(t_1,\dots,t_M)$.
It follows that $K(A,t)$ converges in operator norm to
\[
K_\infty:=\E\big[\phi_\infty(z)\phi_\infty(z)^\top\big]
\]
uniformly over the same class of tuples.

Let $c\in\mathbb{R}^M\setminus\{0\}$ and define
\[
g_{\infty,c}(z):=\sum_{m=1}^M c_m\,a_P(t_m+z)^P.
\]
Since $M\le 2^N\le P+1$ and the shifts $t_1,\dots,t_M$ are distinct,
the functions $z\mapsto (t_m+z)^P$ are linearly independent.
Hence $g_{\infty,c}$ is not identically zero, and therefore
\[
c^\top K_\infty c
=\E\big[g_{\infty,c}(z)^2\big]
>0.
\]
Thus $K_\infty\succ 0$, and \(
\kappa_\infty:=\lambda_{\min}(K_\infty)>0.
\)

Since the set of separated tuples
\[
\mathcal T_{C_A}
:=
\Big\{(t_1,\dots,t_M)\in[-1,1]^M:\ t_1<\cdots<t_M,\ t_{m+1}-t_m\ge 1/C_A\Big\}
\]
is compact, and $t\mapsto \lambda_{\min}(K_\infty(t))$ is continuous and strictly positive
on $\mathcal T_{C_A}$, there exists
\[
\kappa_{\infty,C_A}>0
\]
such that
\[
\lambda_{\min}(K_\infty)\ge \kappa_{\infty,C_A}
\]
uniformly over $\mathcal T_{C_A}$.
Hence, by uniform convergence and Weyl's inequality, there exists $A_1\ge 1$ such that
for all $A\ge A_1$,
\[
\lambda_{\min}(K(A,t))\ge \frac{\kappa_{\infty,C_A}}{2}.
\]

Finally, for $A\in[1,A_1]$, the map $(A,t)\mapsto \lambda_{\min}(K(A,t))$ is continuous
on the compact set $[1,A_1]\times \mathcal T_{C_A}$ and is strictly positive there.
Therefore it attains a positive minimum on that set. Combining the two regimes yields
a constant $c_{\sigma,P,N,C_A}>0$ such that
\[
\kappa=\lambda_{\min}(K)\ge c_{\sigma,P,N,C_A},
\]
which proves the claim.
\end{proof}


\begin{lemma}[Generalization error] \label{lem:generalization}
Let $\hat \theta :=(w^1,a^*, b)$. Then, for any $\delta>0$,
under the assumptions of Lemma~\ref{lem:certificate}, 
\begin{align}
    \E_{x,y}[|y - \NN(x;\hat \theta)|] < \delta,
\end{align}
for sufficiently large $d$.
\end{lemma}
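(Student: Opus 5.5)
We compare the trained network on a fresh input $x$ with the certificate of Lemma~\ref{lem:certificate}. By~\eqref{eq:first_layer}, after the first step each hidden unit computes
\[
(w_h^1)^\top x + b_h = \sum_{i=1}^N s_i \sum_{j\in C_i} \xi_j w^1_{hj} + b_h,
\]
so we first decompose the pre-activation. The main term is $\gamma\tau\alpha_i\sum_{j\in C_i}\xi_j m_j$. It concentrates around $\gamma\tau\alpha_i v_i = \tilde\alpha_i$, with fluctuation of order $\gamma\tau|\alpha_i|\sqrt{\sum_{j\in C_i}\tau_j^2 m_j^2}=O(\gamma\tau\sqrt{v_i})$ by sub-Gaussianity. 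The remaining terms are:
\begin{itemize}
\item the initialization term $(w_h^0)^\top x$, which is $O(1)$ sub-Gaussian;
\item the error terms $\gamma\tau\sum_j s_i\xi_j(\eta_j+\omega_j+\zeta)$, where $\eta_j$, $\omega_j$ and $\zeta$ are controlled by Lemma~\ref{lem:pop_grad}, the interaction bound, and Lemma~\ref{lem:estimated_G_poly}.
\end{itemize}
Hence
\[
(w_h^1)^\top x = u_h(s) + R_h(x), \qquad u_h(s)=s^\top\tilde\alpha^{(h)}.
\]
With $\gamma_1=\theta(1/\sqrt{v_{\rm sum}})$, the scale of $u_h(s)$ is $\gamma\tau v_{\min}\asymp\tau\sqrt{d}$. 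This grows with $d$, so the relative size of $R_h$ compared to the gap $\Delta$ should vanish as $d\to\infty$.

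The key step is to bound $\E|R_h(x)|^q$ for $q\le 2P$ relative to $\Delta$. On the event of Lemma~\ref{lem:non_degeneracy}, $\Delta\gtrsim \gamma\tau\,\epsilon\, v_{\min}(v_{\min}/d)^{k_0/2}\asymp \gamma\tau\, d$ up to constants, since $v_{\min}\asymp d$ by Assumption~\ref{ass:data}. We bound the pieces of $R_h$ in turn:
\begin{itemize}
\item The main-term fluctuation is $O(\gamma\tau\sqrt d)$.
\item The $\eta_j$ piece contributes at most $\gamma\tau\sum_j|\xi_j|(C_1|w_j|+C_2w_j^2+C_3\sqrt{\log d/d})$. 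Since $|w_j|\lesssim\sqrt{\log d/d}$ w.h.p., this is $O(\gamma\tau\sqrt{d\log d})$.
\item The batch error $\zeta$ gives $O(\gamma\tau d\zeta)$. With $B\asymp \log(dn)^2 d/v_{\rm sum}$ we can take $\zeta=o(1)$; more carefully, we sum $\zeta_j$ with random signs via a second concentration step to get $O(\gamma\tau\sqrt{d}\,\mathrm{polylog})$.
\item The $\omega_j$ piece is made negligible by taking $\tau$ small.
\item The initialization term is $O(1)=o(\gamma\tau d)$ as long as $\gamma\tau\sqrt d\to\infty$, which we ensure by the choice of $\gamma_1$ relative to $\tau$.
\end{itemize}
Altogether, $|R_h|/\Delta\to 0$ in every $L^q$ as $d\to\infty$.

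Next we propagate through $\sigma$. Since $\sigma$ is a degree-$P$ polynomial,
\[
|\sigma(u+R+b)-\sigma(u+b)|\le C|R|\,(|u+b|+|R|+1)^{P-1},
\]
and $|u_m|,|b_h|\le A\asymp\Delta$. Therefore
\[
\E|y-\NN(x;\hat\theta)| \le \sum_h |a^\star_h|\,\E\bigl|\sigma(u_h(s)+R_h+b_h)-\sigma(u_h(s)+b_h)\bigr| \le \|a^\star\|_1\, C\,\Delta^{P-1}\,\E\bigl[|R|(1+|R|/\Delta)^{P-1}\bigr],
\]
where the exact-interpolation identity~\eqref{eq:certificate-approx} removes the bias term. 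By Cauchy--Schwarz and~\eqref{eq:certificate-a2}, $\|a^\star\|_1\le\sqrt n\|a^\star\|_2\lesssim M_{f,N}\Delta^{-P}$. The bound thus becomes $C M_{f,N}\,\E[|R|/\Delta\,(1+|R|/\Delta)^{P-1}]$, which tends to $0$. Choosing $d$ large then gives an error below $\delta$.

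One technicality remains: the certificate uses different $\tilde\alpha^{(h)}$ per neuron. Lemma~\ref{lem:certificate} is written for a single $\tilde\alpha$, so we need either the per-neuron version (the matrix $\Phi_{s,h}=\sigma(u_h(s)+b_h)$ with the same Gram argument) or a union of events over $h$. We also need Lemma~\ref{lem:non_degeneracy} to hold for all $n$ neurons, which costs a factor $n$ in the failure probability.

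The main obstacle is the ratio $R/\Delta$. The anti-concentration lower bound on $\Delta$ carries the factor $(v_{\min}/d)^{k_0/2}$ and $\epsilon$, so we must verify that the error terms, especially the $C_3\sqrt{\log d/d}$ part of $\eta_j$ summed over $d$ coordinates and the batch error $\zeta$, are truly $o(d)$ after multiplying by $\gamma\tau$. If the crude sum $\sum_j|\xi_j|\,|\eta_j|$ is too lossy, we would instead exploit that $\eta_j$ is deterministic given $w^0$, so $\sum_j \xi_j\eta_j$ concentrates around $\sum_j m_j\eta_j\le \|m\|_2\|\eta\|_2=O(\sqrt{d\log d})$.
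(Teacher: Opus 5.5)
Your proposal is correct and follows essentially the paper's proof: exact interpolation by the certificate of Lemma~\ref{lem:certificate} removes the approximation term, and the polynomial mean-value bound gives a factor $|R|\,O(\Delta^{P-1})$ that is offset by $\|a^\star\|_1\le\sqrt n\,\|a^\star\|_2\lesssim\Delta^{-P}$. The pre-activation error is then small relative to $\Delta\asymp\gamma\tau d$. The paper writes the final bound as $C/v_{\min}$ and controls the error via $\|\Sigma\|_{\rm op}\|\eta''\|_2^2$. Your coordinatewise $\ell_\infty$--$\ell_1$ bounds, which use $\zeta=o(1)$ from the $\log^2$ batch size, avoid that operator-norm step, which is delicate because $\|\mathrm{Cov}(x)\|_{\rm op}\asymp d$ due to the cluster-mean spikes.

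A few small corrections:
\begin{itemize}
\item The initialization term needs $\gamma\tau d\to\infty$, i.e.\ $\tau\sqrt d\to\infty$, which is compatible with small $\tau$. It does not need $\gamma\tau\sqrt d\to\infty$, which with $\gamma_1\asymp d^{-1/2}$ would force $\tau\to\infty$.
\item Your \emph{random signs} refinement of the batch error is invalid. The vector $G_B-\bar G$ has a coherent component along $m$ that contributes order $\gamma\tau d/\sqrt B$. You do not need this refinement, since the crude bound already suffices.
\item The per-neuron $\tilde\alpha^{(h)}$ issue you flag is real. The paper glosses over it as well, writing a single $\tilde\alpha$ for all neurons.
\end{itemize}
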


\begin{proof}
Recall that, for each $j \in C_i$, and for each $h \in [n]$:
\begin{align}
    w^1_{j,h} =w^0_{j,h} +\gamma \tau m_{j} \alpha_i + \gamma \tau \eta_{j,h}' ,
\end{align}
with
\begin{align}
    \eta_{j,h}' =  \zeta + \omega_{j,h} + \eta_{j,h},
\end{align}
where $\alpha_i$ is defined in~\eqref{eq:alpha}, $ |\eta_j | \leq C_1|w_j| + C_2w_j^2 + C_3 \sqrt{\log(d)/d}$ (Lemma~\ref{lem:pop_grad}), $\omega_j = O(n\tau \poly\log(d))$, and $\zeta$ follows by Lemma~\ref{lem:estimated_G_poly}, assuming the batch size $B =\Omega(\log(dn)/\zeta^2)$. 
Note that by Lemma~\ref{lem:preliminaries}, $|w_{j,h}^0|=O(\log(dn)/\sqrt{d})$ for all $j,h$.
Thus, for a fixed $h$,
\begin{align}
    (w^1_h)^Tx & = \gamma \tau \sum_{i=1}^N \alpha_i s_i \sum_{j \in C_i} m_j \xi_j + \sum_{j=1}^d (\gamma \tau \eta_j' + w_{j}^0) x_j \\
    & =\gamma \tau \sum_{i=1}^N \alpha_i s_i v_i + \gamma \tau  \sum_{i=1}^N \alpha_i s_i   \sum_{j \in C_i} m_j (\xi_j -m_j) + \sum_{j=1}^d (\gamma \tau  \eta_j' + w_{j}^0) x_j
\end{align}
Let $\NN(s;\theta^*) := \sum_{h=1}^n a_h^* \sigma(\tilde \alpha^T s + b_h) $, with $\tilde \alpha_i = \gamma \tau \alpha_i v_i$,
\begin{align}
   \E_{x,y}[|y - \NN(x;\hat \theta)|] & \leq   \E_{x,y}[|y- \NN(s;\theta^*)|] + \E_{x,y}[|\NN(s;\theta^*)- \NN(x;\hat \theta)|] \\
   & \overset{(a)}{\leq} \sum_{i=1}^n |a_i^*| \cdot  \E_{x,y} \left[ |\sigma(\tilde \alpha^Ts+b)  -\sigma((w^1)^T x+b) | \right]
\end{align}
where $(a)$ holds because, by Lemma~\ref{lem:certificate}, the first term is zero. By mean value theorem for polynomials,
\begin{align}
    |\sigma(\tilde \alpha^Ts+b)  -\sigma((w^1)^T x+b) |  &\leq | \sigma'(t)|\cdot  | \tilde \alpha^Ts- (w^1)^Tx|\\
    & \leq C_\sigma (1+|t|^{P-1}) \cdot  | \tilde \alpha^T s- (w^1)^Tx|
\end{align}
with $ t $ between $\tilde \alpha^Ts$ and $ (w^1)^Tx$ and $C_\sigma:=\max_{i \in [P]} P |\beta_k|$. By Cauchy-Schwartz and by sub-Gaussianity of the $x$, we get
\begin{align}
    \E_{x}[ |\sigma(\tilde \alpha^Ts +b)  -\sigma((w^1)^T x+b) | ] \leq C_\sigma \sqrt{\E(1+|t|^{P-1})^2}\cdot \sqrt{\E_{x,y}[( (w^1)^Tx - \tilde \alpha^Ts)^2]}
\end{align}
Thus,
\begin{align}
    \E_{x}[( (w^1)^Tx - \tilde \alpha^Ts)^2] &= \E_{s,\xi} [ \Big( \gamma \tau  \sum_{i=1}^N \alpha_i s_i \sum_{j \in C_i} m_j (\xi_j - m_j) + \sum_{j=1}^d (\gamma \tau \eta_j' +w_j^0) x_j \Big)^2 ] \\
    & \leq 2 \Big(\gamma^2 \tau^2 \E_{s,\xi}\Big[   \Big( \underbrace{\sum_{i=1}^N \alpha_i s_i \sum_{j \in C_i} m_j (\xi_j - m_j)}_{:=A}\Big)^2\Big] + \E_{s,\xi}\Big[ \Big( \underbrace{\sum_{j=1}^d (\gamma \tau \eta_j' +w_j^0) x_j }_{:=B}\Big)^2   \Big]\Big).
\end{align}

For the first term,
\begin{align}
   \E_{\xi}[A^2] 
   &= \E\!\left[\Big(\sum_{i=1}^N \alpha_i s_i \sum_{j\in C_i} m_j (\xi_j-m_j)\Big)^2\right] \\
   & = \sum_{i,k=1}^N \alpha_i\alpha_k\, m_{C_i}^\top \Sigma_{C_i,C_k}\, m_{C_k}\\
   & \leq \| \Sigma \|_{\rm op} \sum_{i} \alpha_i^2 \| m_{C_i}\|_2^2 \\
   &\leq \| \Sigma \|_{\rm op} \| \alpha \|_{\infty}^2 v_{\rm sum},
\end{align}
where $m_{C_i}=(m_j)_{j\in C_i}$ and $\Sigma_{C_i,C_k}$ denotes the corresponding block 
of $\Sigma$.

For the second term, write $B=\sum_{j=1}^d \eta'' x_j$, where $\eta_j'' := \gamma \tau  \eta_j' +w_j^0$ and note that $\Cov(x)=\Sigma$. Then
\begin{align}
   \E_{s,\xi}[B^2] 
   &= \eta''^\top \Sigma\, \eta'' + (\eta''^\top \E[x])^2 \\
   &\le \|\Sigma\|_{\rm op}\,\|\eta''\|_2^2 ,
\end{align}
since $s$ is symmetric, thus $\E[x]=0$.
Note that $\| w^0\|_2^2 = \theta_d(1)$ (see Lemma~\ref{lem:preliminaries}). Set $\gamma = 1/\sqrt{v_{\rm sum}}$. If $B\geq C \log(dn) M_{f,N}^2 d/v_{\rm sum} $, then $\zeta = O(\sqrt{v_{\rm sum}}/\sqrt{d}) $. Furthermore, if $\tau = O(\frac{\sqrt{v_{\rm sum}}}{\sqrt{d}\log(d)})$, then $ \omega = O(\frac{v_{\rm sum}}{\log(d)d})$. Thus, $\|\eta''\|_2^2=\theta_d(1)$.
Then,
\begin{align}
    \E_{x}[( (w^1)^Tx - \tilde \alpha^Ts)^2] & \leq C_0 
\end{align}

Then, since by Lemma~\ref{lem:certificate} $\| a^*\|_1 \leq \sqrt{n} \|a^*\|_2 \leq \frac{C'}{v_{\min}^{P}}  $, we have
\begin{align}
    \E_{x,y}[| y - \NN(x;\hat \theta)|] & \leq C \|a^*\|_{1}\cdot \sqrt{\E(1+|t|^{P-1})^2}\\
    & \leq C \frac{1}{v_{\min}^P} v_{\min}^{P-1} = C/v_{\min} \leq \delta,
\end{align}
for large enough $d$, since $v_{\min} = \theta(d)$ by assumption.
\end{proof}

\noindent 
To conclude, we use the following well known result on convergence of SGD on convex losses.
\begin{theorem}[\cite{shalev2014understanding}] \label{thm:convergence_SGD_martingale}
    Let $\cL$ be a convex function and let $a^* \in \argmin_{\|a\|_2 \leq \cB} \cL(a) $, for some $\cB>0$. For all $t$, let $\alpha^{t}$ be such that $\E\left[ \alpha^{t} \mid a^{t} \right] =  -\nabla_{a^{t}} \cL(a^{t}) $ and assume $\| \alpha^{t} \|_2 \leq \xi $ for some $\xi>0$. If $a^{(0)} = 0 $ and for all $t \in [T]$ $a^{t+1} = a^{t} +\gamma \alpha^{t} $, with $\gamma = \frac{\cB}{\xi \sqrt{T}}$, then
    \begin{align*}
        \frac{1}{T} \sum_{t=1}^T \cL(a^{t}) \leq \cL(a^*) + \frac{\cB \xi}{\sqrt{T}}.
    \end{align*}
\end{theorem}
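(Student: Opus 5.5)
The plan is to run the classical projected online subgradient descent argument (Zinkevich style), with the paper's notation: iterates $a^t$, stochastic directions $\alpha^t$ whose conditional mean is $-\nabla\cL(a^t)$, and step $\gamma$. One caveat up front: the update as written, $a^{t+1}=a^t+\gamma\alpha^t$, has no projection. I will either add a projection onto the ball $\{\|a\|_2\le\cB\}$ (projection is nonexpansive and $a^*$ lies in the ball, so the argument below goes through unchanged), or note that without it the bound still holds. The unprojected case works because $\|a^{t+1}-a^*\|^2$ is expanded directly and never needs the iterate to remain in the ball.

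The first step is the one-step potential identity. Since
\[
\|a^{t+1}-a^*\|_2^2=\|a^t-a^*\|_2^2+2\gamma\langle \alpha^t,a^t-a^*\rangle+\gamma^2\|\alpha^t\|_2^2,
\]
rearranging gives
\[
\langle -\alpha^t,a^t-a^*\rangle=\frac{\|a^t-a^*\|_2^2-\|a^{t+1}-a^*\|_2^2}{2\gamma}+\frac{\gamma}{2}\|\alpha^t\|_2^2 .
\]
Summing over $t$ telescopes the distance terms. I then use $a^{(0)}=0$, so the initial distance is $\|a^*\|_2^2\le\cB^2$, and drop the final nonnegative term. Together with $\|\alpha^t\|\le\xi$ this yields
\[
\sum_t\langle-\alpha^t,a^t-a^*\rangle\le \frac{\cB^2}{2\gamma}+\frac{\gamma T\xi^2}{2}.
\]

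The second step converts this into a bound on the loss. I condition on $a^t$ and apply the tower property: because $a^t$ is measurable with respect to the past, $\E\langle-\alpha^t,a^t-a^*\rangle=\E\langle\nabla\cL(a^t),a^t-a^*\rangle$. Convexity gives $\langle\nabla\cL(a^t),a^t-a^*\rangle\ge \cL(a^t)-\cL(a^*)$. Taking expectations, dividing by $T$, and substituting $\gamma=\cB/(\xi\sqrt T)$ balances the two terms, each equal to $\cB\xi\sqrt T/2$, and yields $\frac1T\sum_t\E\cL(a^t)\le\cL(a^*)+\cB\xi/\sqrt T$. A one-index shift (summing $t=0,\dots,T-1$ versus $1,\dots,T$) only affects constants; if needed I absorb it by running one extra step or using $\cL(a^0)$ in place of $\cL(a^T)$.

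The main subtlety is interpretive rather than technical. The stated inequality has no expectation, but it holds only in expectation over the stochastic gradients (or with high probability via Azuma's inequality). I will state explicitly that I prove the expectation version. For a high-probability version, I would use that the martingale differences $\langle\nabla\cL(a^t)+\alpha^t,a^t-a^*\rangle$ are bounded by $2\xi\cdot 2\cB$ under projection, and add an $O(\cB\xi\sqrt{\log(1/\epsilon)/T})$ term. Afterwards I will note that convexity of the second-layer square loss in $a$ and the certificate norm from Lemma~\ref{lem:certificate} supply $\cB$, which is how the theorem is used.
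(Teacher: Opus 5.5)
Your proposal is correct and is essentially the standard argument in \cite{shalev2014understanding}, which the paper cites without reproducing: telescope $\|a^t-a^*\|_2^2$ along the unprojected iterates, then use the tower property and convexity to bound $\frac{1}{T}\sum_t \E\,\cL(a^t)$. You are right that the bound holds only in expectation over the stochastic gradients. The index mismatch is resolved exactly, not merely up to constants, by labeling the zero initialization as the first summand ($a^1=0$, averaging over $t=1,\dots,T$) as the textbook does, whereas replacing $\cL(a^T)$ by $\cL(a^0)$ is not justified in general.
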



\noindent 
We take $\cB =\theta(v_{\rm min}^{-P}), \xi = \theta(n v_{\min}^{P})$, $T_2= \theta(n^2/\epsilon^2)$, and $\gamma_2 = \theta(v_{\rm min}^{-2P} \epsilon/n) $.

\section{Technical Lemmas}
\label{app:technical lemmas}
\begin{lemma} \label{lem:preliminaries}
    Let $w_j^h \overset{iid}{\sim} \cN(0,1/d)$, for $j \in [d], h \in [n]$. Let $\Delta_m(w^h) \in \bR^N$ be such that $(\Delta_m(w^h))_i = \sum_{j \in C_i} w_j m_j$. Then, for any $\delta>0$, with probability $1-2n\delta$, where $n$ is the number of hidden units, and for $d$ large enough, we have:
    \begin{itemize}
        \item[(i)] $\max_{h \in [n]} \|w^h\|_2 \leq 3/2$.
        \item[(ii)] $\max_{h \in [n]} \|\Delta_m(w^h)\|_{L_1} \leq C \sqrt{\log(2/\delta)} $.
        \item[(iii)] $\max_{h,j} |w_j^h| \leq O(\sqrt{\log(dn)}/\sqrt{d})$.
    \end{itemize}
\end{lemma}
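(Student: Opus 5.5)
The lemma has three independent items, and I will prove each by a Gaussian tail bound followed by a union bound. I then intersect the events, with a budget of at most $2n\delta$ for the total failure probability (for $d$ large enough). Fix a hidden unit $h$ and write $w:=w^h$, so that $w_j \overset{iid}{\sim}\cN(0,1/d)$.

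\textbf{Item (i).} Here $d\|w\|_2^2\sim\chi^2_d$. The Laurent--Massart inequality gives
\[
\pr\big(\|w\|_2^2 \ge 1+2\sqrt{t/d}+2t/d\big)\le e^{-t}.
\]
Take $t=\log(1/\delta)$. For $d$ large enough, depending on $\delta$, the threshold is below $9/4$, so $\|w\|_2\le 3/2$ with probability at least $1-\delta$. A union bound over $h\in[n]$ costs $n\delta$. If $\delta$ is allowed to shrink with $d$, as in the applications where $\delta=\poly(\epsilon)$ with $\epsilon$ fixed, I will simply require $\log(1/\delta)=o(d)$, which is what ``$d$ large enough'' means here.

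\textbf{Item (ii).} For each cluster $i$, the quantity $(\Delta_m(w))_i=\sum_{j\in C_i}m_jw_j$ is a centered Gaussian with variance $v_i/d\le v_{\rm sum}/d$. Since $m_j=O_d(1)$ by Def.~\ref{def:datamodel}, we have $v_{\rm sum}/d=O(1)$. The standard tail bound gives $|(\Delta_m(w))_i|\le \sqrt{2(v_i/d)\log(2N/\delta)}$ for all $i$ simultaneously, with probability $1-\delta$. Summing over the $N=O_d(1)$ coordinates,
\[
\|\Delta_m(w)\|_{L_1}\le N\sqrt{2\max_j m_j^2\,\log(2N/\delta)}\le C\sqrt{\log(2/\delta)},
\]
where $C$ absorbs $N$, $\sup_j|m_j|$ and $\log N$; this last absorption uses $\delta<1$. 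An alternative that avoids a union over $i$ is to note that $\sum_i|(\Delta_m)_i|\le\sqrt N\,\|\Delta_m\|_2$ and that $\|\Delta_m\|_2$ is a Lipschitz function of $w$. Either route works, and I will take the first. A union bound over $h$ costs another $n\delta$.

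\textbf{Item (iii).} Each $|w_j^h|$ satisfies $\pr(|w_j^h|>u/\sqrt d)\le 2e^{-u^2/2}$. A union bound over the $dn$ pairs $(j,h)$, with $u=\sqrt{2\log(2dn/\delta')}$, gives $\max_{h,j}|w_j^h|\le \sqrt{2\log(2dn/\delta')}/\sqrt d$ with probability $1-\delta'$. This is $O(\sqrt{\log(dn)}/\sqrt d)$ once $\delta'$ is polynomial in $1/(dn)$, or as long as $\log(1/\delta')=O(\log(dn))$.

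\textbf{Main obstacle: bookkeeping of the failure probability in (iii).} The failure probability of item (iii) should not be charged to the $2n\delta$ budget. I will choose $\delta'=(dn)^{-1}$, so that its failure probability is $O((dn)^{-1})$. This is below $n\delta$ for $d$ large when $\delta$ is fixed. Otherwise I will state the event in (iii) as holding with probability $1-(dn)^{-1}$, in line with the paper's other high-probability statements; either way the remaining steps are routine.
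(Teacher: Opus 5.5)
Your proposal is correct and takes the same route as the paper's proof. The paper also uses $\chi^2_d$ concentration for (i), a Gaussian tail bound with a union bound over $h$ for (ii), and a coordinatewise Gaussian tail with a union bound over the $dn$ entries for (iii); your version is slightly more careful in taking the union over clusters $i$ and summing to get the $L_1$ norm, and in noting that (iii) needs $\log(1/\delta)=O(\log(dn))$. The only slip is in the final accounting: $n\delta+n\delta+(dn)^{-1}$ slightly exceeds $2n\delta$. You can fix this, as the paper implicitly does, by taking $t=cd$ in Laurent--Massart, so that (i) fails with probability only $ne^{-cd}$, which is negligible for $d$ large.
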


\begin{proof}

(i) For each $h$, we have $\|w^h\|_2^2=(1/d)\sum_{j=1}^d Z_j^2=(1/d)\chi^2_d$. By standard $\chi^2$ concentration, $\Pr(\|w^h\|_2\ge 3/2)\le e^{-cd}$. Taking a union bound over $h\in[n]$ gives $\Pr(\max_h \|w^h\|_2 \le 3/2)\ge 1-ne^{-cd}$, which is at least $1-n\delta$ for $d$ large enough so that $e^{-cd}\le\delta$.

(ii) For each $i$, $(\Delta_m(w^h))_i=\sum_{j\in C_i} m_j w_j^h \sim \cN(0,(1/d)\sum_{j\in C_i} m_j^2)$, hence it is sub-Gaussian with variance proxy at most $C^2 = \max_i \| (m_j)_{j\in C_i}\|_2^2/d$. Note that $0<C^2<\infty$ by assumption. Therefore $\Pr(|(\Delta_m(w^h))_i|>t)\le 2e^{-t^2/(2C^2)}$. Choosing $t=C\sqrt{2\log(2/\delta)}$ yields probability at most $\delta$. By a union bound over $h$, we obtain $\max_h |(\Delta_m(w^h))_i|\le C\sqrt{2\log(2/\delta)}$ with probability at least $1-n\delta$. 

(iii) Each coordinate satisfies $\Pr(|w_j^h|>t)\le 2e^{-dt^2/2}$. A union bound over all $dn$ entries gives $\Pr(\max_{h,j}|w_j^h|>t)\le 2dn\,e^{-dt^2/2}$. Taking $t=\sqrt{\tfrac{2}{d}\log\!\tfrac{2dn}{\delta}}$ shows that $\max_{h,j}|w_j^h|\le \sqrt{(2/d)\log(2dn/\delta)}$ with probability at least $1-\delta$, i.e.\ $\max_{h,j}|w_j^h|=O(\sqrt{\log(dn)}/\sqrt d)$.

Combining the three parts and adjusting constants so that $e^{-cd}\le\delta$ shows that all items hold simultaneously with probability at least $1-2n\delta$.
\end{proof}


\begin{lemma}[Concentration of $V$]
\label{lem:V-concentration}
Let $w=(w_1,\dots,w_d)\sim \cN(0,I_d/d)$ be independent of the noise vector $\xi$, and suppose the coordinates of $\xi$ are jointly sub-Gaussian with covariance matrix $\Cov(\xi)=\Sigma\succeq 0$. Define
\[
V \;=\; w^\top \Sigma w \;=\; \frac{1}{d}\,Z^\top \Sigma Z, \qquad Z\sim\cN(0,I_d).
\]
Then $\E_w[V]=\mu:=\tfrac{1}{d}\tr(\Sigma)$. Moreover, for some universal constant $c>0$ and every $\epsilon\in(0,1]$,
\begin{equation}\label{eq:HW-relative-general}
\Pr\!\left(\,|V-\mu|\ge \epsilon \mu\,\right)
\;\le\;
2\exp\!\Big(-\,c\,\frac{d\,\mu}{\|\Sigma\|_{\mathrm{op}}}\,\epsilon^2\Big).
\end{equation}
\end{lemma}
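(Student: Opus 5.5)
The plan is to condition on nothing and work directly with the Gaussian vector $Z\sim\cN(0,I_d)$, since $V=\tfrac1d Z^\top\Sigma Z$ is a Gaussian quadratic form. The noise vector $\xi$ enters only through its covariance $\Sigma$; sub-Gaussianity of $\xi$ plays no role beyond making $\Sigma$ finite.

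\textbf{Mean.} First compute $\E_w[V]=\tfrac1d\E[Z^\top\Sigma Z]=\tfrac1d\tr(\Sigma)=\mu$.

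\textbf{Diagonalization.} Diagonalize $\Sigma=U\Lambda U^\top$ with eigenvalues $\lambda_1,\dots,\lambda_d\ge 0$. By rotation invariance of the standard Gaussian, $V\overset{d}{=}\tfrac1d\sum_k\lambda_k g_k^2$ with $g_k$ i.i.d.\ $\cN(0,1)$. Hence
\[
V-\mu=\tfrac1d\sum_k\lambda_k(g_k^2-1),
\]
which is a weighted sum of independent centered sub-exponential variables, each with $\|g_k^2-1\|_{\psi_1}\le C$.

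\textbf{Bernstein (or Hanson--Wright) bound.} Apply Bernstein's inequality for sub-exponential sums, or directly Hanson--Wright:
\[
\Pr(|V-\mu|\ge t)\le 2\exp\!\Big(-c\min\Big(\frac{d^2t^2}{\sum_k\lambda_k^2},\ \frac{dt}{\max_k\lambda_k}\Big)\Big).
\]
Now bound $\sum_k\lambda_k^2=\|\Sigma\|_F^2\le\|\Sigma\|_{\rm op}\tr(\Sigma)=\|\Sigma\|_{\rm op}\,d\mu$ and $\max_k\lambda_k=\|\Sigma\|_{\rm op}$. Setting $t=\epsilon\mu$, the first term in the minimum is at least $\tfrac{d\mu\epsilon^2}{\|\Sigma\|_{\rm op}}$. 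The second term equals $\tfrac{d\mu\epsilon}{\|\Sigma\|_{\rm op}}$, which is at least the first because $\epsilon\le1$. The minimum is therefore at least $d\mu\epsilon^2/\|\Sigma\|_{\rm op}$, which gives \eqref{eq:HW-relative-general}.

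\textbf{Main obstacle.} The argument is essentially routine. The only point requiring care is the relative (rather than absolute) form of the bound, namely the inequality $\|\Sigma\|_F^2\le\|\Sigma\|_{\rm op}\tr\Sigma$ together with the restriction $\epsilon\le1$ that selects the sub-Gaussian branch of Bernstein. Constants are universal because $\psi_1$ norms of $g^2-1$ are absolute. If degenerate $\Sigma=0$ arises, the statement is vacuous or trivial, so $\mu>0$ may be assumed.
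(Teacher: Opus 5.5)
Your proposal is correct and follows the paper's proof. Both apply Hanson--Wright (which you realize as Bernstein after diagonalizing $\Sigma$) with $t=\epsilon\mu$, then use $\|\Sigma\|_F^2\le\|\Sigma\|_{\mathrm{op}}\tr(\Sigma)=d\mu\|\Sigma\|_{\mathrm{op}}$; you also make explicit the step the paper leaves implicit, namely that $\epsilon\le 1$ makes the quadratic branch of the minimum the binding one.
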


\begin{proof}
By the Hanson--Wright inequality for Gaussian quadratic forms,
\[
\Pr\!\left(\left|Z^\top \Sigma Z - \tr(\Sigma)\right|\ge t\right)
\;\le\; 2\exp\!\left(-c\,\min\!\left\{\frac{t^2}{\|\Sigma\|_F^2},\ \frac{t}{\|\Sigma\|_{\mathrm{op}}}\right\}\right).
\]
Substituting $t=d\,\varepsilon \mu=\varepsilon \tr(\Sigma)$ and dividing by $d$ gives
\[
\Pr\!\left(\,|V-\mu|\ge \varepsilon \mu\,\right)
\;\le\; 2\exp\!\left(-c\,\min\!\left\{\frac{d^2 \varepsilon^2 \mu^2}{\|\Sigma\|_F^2},\ \frac{d\,\varepsilon \mu}{\|\Sigma\|_{\mathrm{op}}}\right\}\right).
\]
Since $\|\Sigma\|_F^2 \le \|\Sigma\|_{\mathrm{op}}\tr(\Sigma) = d\,\mu\,\|\Sigma\|_{\mathrm{op}}$, we obtain \eqref{eq:HW-relative-general}.
\end{proof}

\begin{lemma}[Stability under random variance]
\label{lem:activation-stability}
Let Assumption~\ref{ass:activation} hold, and let
\(
V \;=\; w^\top \Sigma w, \qquad w\sim\cN(0,I_d/d),
\)
independent of $\xi$, with $\mu=\tfrac{1}{d}\tr(\Sigma)$.  
Define, for $s\ge 0$,
\[
S_s(t) \;:=\; \E_{G\sim\cN(0,s)}[\sigma'(t+G)].
\]
Then there exists a constant $\delta>0$ (depending only on $\sigma$, $N$, and $\mu+v_{\rm sum}/d$) such that, on the event
\(
|V-\mu|\le \delta,
\)
the following holds simultaneously for all $k\le N$:
\[
|S_{V+v_{\rm sum}/d}^{(k)}(0) |\;\ge\; \frac{c_0}{2},
\]
with probability at least
\(
1-2\exp\!\Big(-c\,\frac{d\,\mu}{\|\Sigma\|_{\mathrm{op}}}\,\frac{\delta^2}{\mu^2}\Big).
\)

\end{lemma}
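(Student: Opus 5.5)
The argument is a continuity statement combined with the concentration bound of Lemma~\ref{lem:V-concentration}. Set $s_0:=\mu+v_{\rm sum}/d$. Since $\sigma$ is a polynomial of degree $P$, $\sigma'$ is a polynomial of degree $P-1$. By the Gaussian moment formula, for each fixed $k\le N$ the quantity $S_s^{(k)}(0)=\E_{G\sim\cN(0,s)}[\sigma^{(k+1)}(G)]$ is an explicit polynomial in the variance $s$ (only even moments survive, and these are $(2r-1)!!\,s^r$). Write it as
\[
S_s^{(k)}(0)=\sum_{r\ge 0} c_{k+1+2r}\,\frac{(k+1+2r)!}{(2r)!}\,(2r-1)!!\,s^r .
\]
Its coefficients depend only on $\sigma$ and its degree is at most $P/2$.

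The first step is a uniform Lipschitz bound on the relevant range. For $s\in[s_0-1,s_0+1]\cap[0,\infty)$, each map $s\mapsto S_s^{(k)}(0)$ has derivative bounded by a constant $L_k$. This constant depends only on the coefficients of $\sigma$, on $P$, and on $s_0$. Put $L:=\max_{k\le N}L_k$ and choose
\[
\delta:=\min\{1,\;c_0/(2L)\}.
\]
Then on the event $|V-\mu|\le\delta$ we have $|(V+v_{\rm sum}/d)-s_0|=|V-\mu|\le\delta$. Assumption~\ref{ass:activation} together with the mean value theorem gives, simultaneously for all $k\le N$,
\[
|S^{(k)}_{V+v_{\rm sum}/d}(0)|\ge |S^{(k)}_{s_0}(0)|-L\delta\ge c_0-c_0/2=c_0/2 .
\]
This holds deterministically on the event, so no union bound over $k$ is needed. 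Note that $\delta$ depends only on $\sigma$, $N$ and $s_0$, as the statement requires.

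The second step bounds the probability of the event. Apply Lemma~\ref{lem:V-concentration} with $\epsilon:=\delta/\mu$, so that $|V-\mu|\ge\delta$ is exactly $|V-\mu|\ge\epsilon\mu$. This gives the failure probability $2\exp(-c\,d\mu\,\delta^2/(\|\Sigma\|_{\rm op}\mu^2))$, which matches the stated bound. One technicality: Lemma~\ref{lem:V-concentration} is stated for $\epsilon\le 1$. If $\delta>\mu$, I would shrink $\delta$ to $\min\{\delta,\mu\}$, which is still a constant since $\mu=\theta_d(1)$, or else note that $V\ge0$ makes the lower deviation trivial and that Hanson--Wright handles large $t$ anyway. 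Since $\mu=\theta_d(1)$ and $\|\Sigma\|_{\rm op}=O(1)$ (diagonal covariance with bounded sub-Gaussian entries), the bound is $1-e^{-\Omega(d)}$.

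I do not expect a serious obstacle. The only care needed is to make the Lipschitz constant uniform, which requires restricting $s$ to a compact neighbourhood of $s_0$ with $s\ge0$ (i.e.\ capping $\delta\le 1$), and to keep $\delta$ independent of $d$. Both hold because $s_0=\Theta_d(1)$ by Assumption~\ref{ass:data} and the model assumptions.
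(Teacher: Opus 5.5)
Your proposal is correct and follows the paper's proof essentially step for step: a Lipschitz bound for $s\mapsto S_s^{(k)}(0)$ near $s_0=\mu+v_{\rm sum}/d$, the choice $\delta=c_0/(2L)$, and Lemma~\ref{lem:V-concentration} with $\epsilon=\delta/\mu$ for the probability. The differences are minor. You write $S_s^{(k)}(0)$ as an explicit polynomial in $s$ via Gaussian moments, where the paper uses the identity $\frac{d}{ds}S_s^{(k)}(0)=\frac12\E[\sigma^{(k+3)}(G_s)]$. Your cap $\delta\le 1$ and your use of $|S^{(k)}_{s_0}(0)|$, as in Assumption~\ref{ass:activation}, make the neighbourhood and sign bookkeeping slightly more careful than the paper's.
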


\begin{proof}
For $s\ge 0$ and $k\ge 0$, differentiation under the expectation is justified since $\sigma$ is a polynomial, which gives
\[
S_s^{(k)}(0)
\;=\;
\E_{G\sim\cN(0,s)}[\sigma^{(k+1)}(G)].
\]
Hence $S_s^{(k)}(0)$ depends on $s$ only through the Gaussian variance.

The map $s\mapsto S_s^{(k)}(0)$ is continuously differentiable, and for $G_s\sim\cN(0,s)$ we have the identity
\[
\frac{d}{ds} S_s^{(k)}(0)
\;=\;
\frac12\,\E_{G_s}\!\left[\sigma^{(k+3)}(G_s)\right].
\]
Since $\sigma^{(k+3)}$ is a polynomial, the right-hand side is finite and continuous in $s$. Therefore, for each $k\le N$, there exists a constant $L_k<\infty$ such that
\[
\big|S_s^{(k)}(0)-S_{s_0}^{(k)}(0)\big|
\;\le\;
L_k\,|s-s_0|,
\qquad
s_0:=\mu+v_{\rm sum}/d,
\]
for all $s$ in a neighborhood of $s_0$. Let $L:=\max_{k\le N} L_k$.

By Assumption~\ref{ass:activation}, $S_{s_0}^{(k)}(0)\ge c_0$ for all $k\le N$. Choosing
\(
\delta \;:=\; \frac{c_0}{2L},
\)
we obtain that whenever $|s-s_0|\le \delta$,
\[
S_s^{(k)}(0)
\;\ge\;
S_{s_0}^{(k)}(0)-L|s-s_0|
\;\ge\;
\frac{c_0}{2},
\qquad
\forall\,k\le N.
\]

Finally, since $s=V+v_{\rm sum}/d$ and $s_0=\mu+v_{\rm sum}/d$, the condition $|s-s_0|\le\delta$ is equivalent to $|V-\mu|\le\delta$. The probability bound follows directly from Lemma~\ref{lem:V-concentration}.
\end{proof}

\begin{remark}
Lemma~\ref{lem:V-concentration} above shows that $V$ concentrates at scale
$\sqrt{(C/(d\mu))}$ around $\mu$. In particular, for fixed $C$ and $\mu$ bounded away from $0$, the dependence on the specific initialization vanishes exponentially fast in $d$.
\end{remark}

\subsection{Proof of Lemma~\ref{lem:pop_grad}}
\label{app:proof_pop_grad}

Fix $j\in C_i$. Write $a_k:=s_{c(k)}w_k$ and $X_k:=\xi_k-m_k$. Define
\[
\mu := \sum_{k=1}^d a_k m_k = s^\top \Delta_m(w),
\quad
U := \sum_{k=1}^d a_k X_k,
\quad
U_{-j} := \sum_{k\ne j} a_k X_k.
\]
By independence of the $\xi_k$, conditional on $s$,
$\Var(U\mid s) = \sum_k w_k^2 \sigma_k^2 = V^2$, which does not depend on $s$.

Let $\cE_w$ denote the high-probability event of Lemma~\ref{lem:preliminaries}
together with the chi-squared concentration estimate
$\|\Delta_m(w)\|_2^2 \le 2\,v_{\rm sum}/d \le C$
(which holds with probability $1-e^{-cd}$ since
$\E\|\Delta_m(w)\|_2^2 = v_{\rm sum}/d = O_d(1)$).
On $\cE_w$:
\[
\max_k |w_k| \le C\sqrt{\log d}/\sqrt d,
\quad
\|w\|_2 \le 3/2,
\quad
\|\Delta_m(w)\|_2 \le C',
\quad
V^2 \asymp 1.
\]
In particular, for every $s \in \{\pm 1\}^N$,
\[
|\mu| = |s^\top \Delta_m(w)| \le \sqrt{N}\,\|\Delta_m(w)\|_2 \le C\sqrt N = O_d(1),
\]
since $N = O_d(1)$. Throughout the proof we work on $\cE_w$ and the constants
$C_1,C_2,C_3$ depend only on $\sigma$, on the sub-Gaussian parameters and third
moments of $\xi$, and on $N$ — but not on $d$.

Writing $\xi_j = m_j + X_j$,
\begin{equation}\label{eq:popgrad-split}
\E_\xi[\xi_j\,\sigma'(\mu+U)]
\;=\; m_j\,\E_\xi[\sigma'(\mu+U)]
\;+\; \E_\xi[X_j\,\sigma'(\mu+U)].
\end{equation}

Since $\sigma'$ is a polynomial of degree $P-1$, expand $\sigma'(\mu+x) = \sum_{r=0}^{P-1} c_r(\mu)\,x^r$ with $|c_r(\mu)|\le C(1+|\mu|^{P-1-r}) = O_d(1)$ on $\cE_w$. Then
\[
\E_\xi[\sigma'(\mu+U)] - \E_G[\sigma'(\mu+G)]
\;=\; \sum_{r=3}^{P-1} c_r(\mu)\,\big(\E[U^r] - \E[G^r]\big),
\]
since the $r=0,1,2$ moments match. By the standard moment-comparison bound for sums of independent centered sub-Gaussian variables,
\[
\big|\E[U^r] - \E[G^r]\big| \;\le\; C_r\,\sum_k |a_k|^3\,\E|X_k|^3
\;\le\; C\,\frac{\sqrt{\log d}}{\sqrt d},
\]
using $\sum_k |a_k|^3 \le \max_k|a_k|\cdot\|w\|_2^2 \le C\sqrt{\log d}/\sqrt d$ on $\cE_w$ and bounded third moments. Summing over $r$,
\begin{equation}\label{eq:BE_step}
\big|\E_\xi[\sigma'(\mu+U)] - \E_G[\sigma'(\mu+G)]\big|
\;\le\; C_\sigma\,\varepsilon_d,
\qquad
\varepsilon_d \;:=\; \frac{\sqrt{\log d}}{\sqrt d}.
\end{equation}

Since $\xi_j \perp \xi_{-j}$, we have $X_j \perp U_{-j}$. Taylor-expand
$\sigma'$ around $\mu + U_{-j}$:
\[
\sigma'(\mu+U_{-j}+a_jX_j)
\;=\; \sigma'(\mu+U_{-j})
+ a_j X_j\,\sigma''(\mu+U_{-j})
+ \tfrac{(a_jX_j)^2}{2}\,\sigma^{(3)}(\zeta_j),
\]
for some intermediate point $\zeta_j$ between $\mu+U_{-j}$ and $\mu+U_{-j}+a_jX_j$.
Multiplying by $X_j$ and taking expectation over $\xi$, then using
$X_j \perp U_{-j}$ and $\E[X_j]=0$,
\[
\E_\xi[X_j\,\sigma'(\mu+U)]
\;=\; a_j\,\sigma_j^2\,\E[\sigma''(\mu+U_{-j})]
\;+\; \tfrac{a_j^2}{2}\,\E\!\big[X_j^3\,\sigma^{(3)}(\zeta_j)\big].
\]
Each $\sigma^{(k)}(\mu+U_{-j})$ is a polynomial of degree $P-k$ in $U_{-j}$
with coefficients that are $O_d(1)$ on $\cE_w$ (since $|\mu|=O_d(1)$). As
$U_{-j}$ is sub-Gaussian with variance $V_{-j}^2 \asymp 1$, all its moments
are $O_d(1)$. Hence
\[
\big|\E[\sigma''(\mu+U_{-j})]\big| \;\le\; C_\sigma,
\qquad
\big|\E[X_j^3\,\sigma^{(3)}(\zeta_j)]\big|
\;\le\; (\E X_j^6)^{1/2}\,(\E\sigma^{(3)}(\zeta_j)^2)^{1/2}
\;\le\; C_\sigma',
\]
where the second bound uses Cauchy--Schwarz, bounded sub-Gaussian moments of
$X_j$, and the fact that $\zeta_j$ is sub-Gaussian (being bounded by two
sub-Gaussian random variables). Therefore
\begin{equation}\label{eq:secondterm}
\big|\E_\xi[X_j\,\sigma'(\mu+U)]\big|
\;\le\; C_1\,|w_j| + C_2\,w_j^2.
\end{equation}

Multiplying \eqref{eq:popgrad-split} by $f(s)\,s_i$ and averaging over $s$,
\[
\overline G(w_j)
\;=\; m_j\,\E_s[f(s)\,s_i\,\E_\xi[\sigma'(\mu+U)]]
\;+\; \E_s[f(s)\,s_i\,\E_\xi[X_j\,\sigma'(\mu+U)]].
\]
Applying \eqref{eq:BE_step} inside the first $s$-expectation and
\eqref{eq:secondterm} for the second, both bounds being uniform in $s \in
\{\pm 1\}^N$ on $\cE_w$, we conclude
\[
\overline G(w_j) \;=\; m_j\,\alpha_i(w) + \eta_j,
\qquad
|\eta_j| \;\le\; C_1\,|w_j| + C_2\,w_j^2 + C_3\,\varepsilon_d. \qed
\]

\section{Proof of Theorem~\ref{thm:homogeneous_clusters}}
\label{app:proof_homogeneous}
The proof of Theorem~\ref{thm:homogeneous_clusters} follows the same path as the proof of Theorem~\ref{thm:clusters}. The main differences are in the computation of the initial population gradient (derived in Lemma~\ref{lem:pop_grad}, which is replaced by Lemma~\ref{lem:bound_p} below), and in the construction of the certificate (Lemma~\ref{lem:certificate}, replaced by Lemma~\ref{lem:certificate_homogeneous}). Moreover, the non-degeneracy (derived in the previous proof in Lemma~\ref{lem:non_degeneracy}) is no longer derived from the randomness of the initialization, and it is instead assumed (Assumption~\ref{ass:nondegeneracy_f}). 

\paragraph{Initial population gradient.} In this Section, we consider learning with a 2-layer network with ReLU activation: $\NN(x;\theta^t) = \sum_{h=1}^n a_h \ReLU((w_h^t)^\top x + b_h)$. We assume that the target function satisfies Assumption~\ref{ass:nondegeneracy_f}. We initialize $w_{hj}^{(0)}=1/\sqrt{d}$ (deterministically) and $a_h^{(0)}=\tau$, with $\tau>0$, specified later. Similarly to the previous proof, we initialize the bias neurons to $0$ and keep them frozen during the first phase of training. Before the second phase of training, we draw $b_h \overset{iid}{\sim} \Unif [-A,A]$, for a fixed $A>0$, specified later.
Because $w_h^0 =1/\sqrt{d}$ for all $h \in [n]$, all hidden neurons behave analogously. Thus, in the following we consider a single vector $w \in \bR^d$, dropping the subscript $h$ and upscript $0$ for simplicity. For simplicity, we assume $N$ to be odd. Let $\bar \xi = (\bar \xi_i)_{i \in [N]}$, where $ \bar \xi_i =\frac{1}{k} \sum_{j \in C_i} \xi_{j}$. For $j \in C_i$, let us compute the initial population gradient.

\begin{align}
\bar G(w_{j}):&= \E_{x,y} [ y x_{j} \sigma'(wx)] \\
& = \E_{s,\xi} \left[ f(s) s_i \xi_{j} \sigma'\left( s^T \bar \xi k/\sqrt{d} \right)\right] \\
& = \E_{s} \left[ f(s) s_i \E_{\xi|s} \left[ \xi_{j} \sigma'\left( s^T \bar \xi k/\sqrt{d} \right)\right] \right].
\end{align}
Let us consider the inner expectation. Recall that we assumed $\ReLU$ activation, thus $\sigma'(x) =  \mathds{1}(x \geq 0)$. Thus,
\begin{align} \label{eq:p1_pm1}
    \E_{\xi|s} [\xi_{j} \mathds{1}\left( s^T \bar \xi \geq 0 \right) ] & = (1-\delta) p_{1} - \delta p_{-1},
\end{align}
where, for $b \in \{ \pm 1 \}$, $p_b : = \pr ( s^T \bar \xi \geq 0 \mid \xi_{j} = b)$. We make use of the following Lemma.

\begin{lemma} \label{lem:bound_p}
    For $b \in \{ \pm 1 \}$, 
    \begin{align}
       \left| p_{b} - \mathds{1}((1-2\delta) s^T 1 > 0) \right| \leq \exp(-C k N (1-2\delta)^2 \bar s^2),
    \end{align}
    where $C>0$ is a constant and $\bar s :=\frac{1}{N}\sum_{i=1}^N s_i $.
\end{lemma}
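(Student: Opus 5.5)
The plan is to reduce $p_b$ to the tail probability of a sum of independent bounded variables and apply Hoeffding's inequality. Write $k\, s^\top \bar\xi = \sum_{l\in[d]} s_{c(l)}\xi_l$, where $c(l)$ denotes the cluster containing $l$. Conditionally on $s$ and on $\xi_j=b$ (recall $j\in C_i$), this equals
\[
S_{-j} + s_i b, \qquad S_{-j}:=\sum_{l\neq j} s_{c(l)}\xi_l .
\]
Here $S_{-j}$ is a sum of $Nk-1$ independent terms with values in $\{\pm1\}$, independent of $\xi_j$. Hence $p_b=\pr(S_{-j}\ge -s_i b)$. Its mean is
\[
\E[S_{-j}]=(1-2\delta)\big(k\, s^\top 1 - s_i\big)=(1-2\delta)\,kN\bar s-(1-2\delta)s_i .
\]
Since $N$ is odd, $s^\top 1\neq 0$, so $|\bar s|\ge 1/N$ and the sign of $(1-2\delta)s^\top 1$ is well defined (for $\delta<1/2$, the case of interest).

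Next split by the sign of $(1-2\delta)s^\top 1$. If it is positive, then $1-p_b=\pr\big(S_{-j}-\E S_{-j}<-t\big)$ with
\[
t:=(1-2\delta)kN\bar s-(1-2\delta)s_i+s_ib\ \ge\ (1-2\delta)kN|\bar s|-2 .
\]
If it is negative, $p_b=\pr\big(S_{-j}-\E S_{-j}\ge t'\big)$ with $t'$ satisfying the same lower bound. In both cases Hoeffding's inequality (each summand has range $2$) gives
\[
\left|p_b-\mathds 1\big((1-2\delta)s^\top 1>0\big)\right|\le \exp\!\Big(-\frac{2t^2}{4(Nk-1)}\Big)\le \exp\!\Big(-\frac{t^2}{2Nk}\Big).
\]

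The main obstacle is absorbing the additive shift of $2$, which comes from the conditioned coordinate and the mean correction, into the leading term $(1-2\delta)kN|\bar s|$. If $(1-2\delta)kN|\bar s|\ge 4$, then $t\ge \tfrac12(1-2\delta)kN|\bar s|$, so
\[
\frac{t^2}{2Nk}\ge \tfrac18(1-2\delta)^2kN\bar s^2 ,
\]
which is the claimed bound with $C=1/8$. Because $|\bar s|\ge 1/N$, this condition holds whenever $(1-2\delta)k\ge 4$. In the regime of Theorem~\ref{thm:homogeneous_clusters}, $k=d/N$ and $1-2\delta=\Omega(\log(d)^2/\sqrt d)$, so $(1-2\delta)k=\Omega(\sqrt d\log(d)^2/N)\to\infty$ and the condition holds for $d$ large enough.

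I would state this largeness condition explicitly as a hypothesis of the lemma. When it fails, the exponent $(1-2\delta)^2kN\bar s^2$ is $O(1/(kN))$, and the inequality follows only by taking $C$ small enough, which can be done uniformly only if $kN$ is bounded below. I therefore expect this bookkeeping, rather than the concentration step itself, to need the most care when writing the proof.
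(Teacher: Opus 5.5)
Your proposal is correct and follows essentially the same route as the paper: condition on $\xi_j=b$, apply Hoeffding to the remaining $Nk-1$ independent $\pm1$ summands, and split on the sign of $(1-2\delta)s^\top 1$, which is nonzero because $N$ is odd. Your explicit largeness hypothesis (e.g.\ $(1-2\delta)k\ge 4$) is a legitimate refinement rather than a detour: the paper's step $p_b\le \Pr(S_{-ij}-\mu\ge-\mu/2)$ silently needs $|\mu|\ge 2$, and without some such condition the inequality can fail for small $k$, but it holds automatically for large $d$ in the regime of Theorem~\ref{thm:homogeneous_clusters}.
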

\begin{proof}[Proof of Lemma~\ref{lem:bound_p}]
    First, let us write $ s^T \bar \xi = s_i \xi_{j} + S_{-j}$, where $S_{-ij}: = \sum_{(l,h) \neq (i,j)} s_l \xi_{h}$. Let $\mu: = k (1-2\delta) s^T 1 - s_i(1-2\delta)  $ 
    denote the mean of $ S_{-ij}$.  
    Note that $ p_b = \pr(S_{-ij} \geq - b \cdot s_i  )$. Moreover, because we assumed $N$ to be odd, $s^T 1 \neq 0$. Assume first that $(1-2\delta)s^T 1 <0$. Then, by Hoeffding's inequality,
    \begin{align}
        p_b & = \pr(S_{-ij} \geq - b \cdot s_i  )\\
        & \leq \pr ( S_{-ij} - \mu \geq - \mu/2 ) \\
        & \leq \exp\left( -\frac{k N (1-2\delta)^2 \bar s^2}{8} \right).
    \end{align}
 On the other hand, if $ (1-2\delta) s^T 1 >0$,
\begin{align}
    p_b & =1- \pr(S_{-ij}  \leq - b \cdot s_i  ) \\
    & \leq \pr ( \mu - S_{-ij} \geq \mu/2 ) \\
    & \leq \exp\left( -\frac{k N (1-2\delta)^2 \bar s^2}{8} \right).
\end{align}
\end{proof}

\noindent
Thus, applying Lemma~\ref{lem:bound_p} to~\eqref{eq:p1_pm1},
\begin{align}
    \E_{\xi|s} [\xi_{j} \mathds{1}\left( s^T \bar \xi \geq 0 \right) ] = (1-2\delta) \mathds{1}((1-2\delta) s^T1 >0) + \eta_s,
\end{align}
where $\eta_s$ is an error term such that $|\eta_s| \leq \exp(-C k N (1-2\delta)^2 \bar s^2)\leq \exp(-C k (1-2\delta)^2/N )$, since $|\bar s|\geq 1/N $, because $N$ is odd. Note that since by assumption $d(1-2\delta)^2 =\Omega(\frac{\log(d)}{1-2\delta})$, we have $|\eta_s| =\Omega(d^{-c})$, for some $c>0$. Plugging this into the outer expectation over $s$, we obtain, using $\sign(1-2\delta) =1$, since $\delta \in (0,1/2)$:
\begin{align*}
   \bar G(w_j) & = \E_s\left[f(s) s_i \E_{\xi|s} [\xi_{j} \mathds{1}\left( s^T \bar \xi \geq 0 \right) ]\right]\\
   & = (1-2\delta)\E[f(s) s_i \mathds{1}((1-2\delta)s^T 1 >0) ] + \eta' \\
   & = \frac{1}{2}(1-2\delta) \left( \E[f(s) s_i] + \sign(1-2\delta)\E[f(s) s_i \Maj(s)] \right)+ \eta'\\
   & = \frac{1-2\delta}{2} \hat f(\{ i \}) + \frac{1}{2}\Big(\sum_{T: i \in T} \hat f(T) \hat \Maj(|T|-1) +\sum_{T : i \not\in T} \hat f(T) \hat \Maj(|T|+1) \Big) + \eta'\\
   & = \frac 12 (1-2 \delta) \sum_{T \subseteq [N]} \hat f(T) c_{T,i} +\eta'\\
   & = (1-2\delta) \alpha_i + \eta',
\end{align*}
where, $|\eta'| = |\E_s[ f(s) s_i \eta] \leq \E_s[|f(s)|] \cdot \sup_s |\eta_s| $, $c_{T,i}$ are defined in~\eqref{eq:cTi} and where we defined $\alpha_i := \frac 12 \sum_{T \subseteq [N]} \hat f(T) c_{T,i} $.




\paragraph{Initial estimated gradient.} Let us now consider the gradient estimated through $B$ i.i.d. samples.
\begin{align}
    G_B(w) :& = \frac 1B \sum_{b=1}^B y^b \sigma'(w x^b)x^b .
\end{align}
If $B$ is large enough, the estimated gradient is close to the population gradient $\bar G(w_h^0)$, as formalized by the following Lemma.  
\begin{lemma} \label{lem:estimated_G}
    Let us denote by $M_{f,N} = \max_{s \in \{ \pm 1 \}^N} |f(s)|$. If the batch size $B \geq C \log(dn) M_{f,N}^2 /2 \zeta^2$, for $\zeta,C>0$, then with probability $1-d^{-C^*}$, for $C^*>0$, for all $ h \in [n]$,
    \begin{align}
        \| G_B(w_{h}^0) - \bar G(w_{h}^0) \|_{\infty} \leq \zeta.
    \end{align}
\end{lemma}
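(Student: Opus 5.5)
The plan is to reproduce the argument of Lemma~\ref{lem:estimated_G_poly}. The bounded setting makes it simpler, since Hoeffding's inequality can replace Bernstein's inequality. Fix a hidden unit $h\in[n]$ and a coordinate $j\in[d]$, and define
\[
Z_b := y^b\,\sigma'\big((w_h^0)^\top x^b\big)\,x_j^b, \qquad b\in[B].
\]
For the ReLU activation we have $\sigma'(z)=\mathds{1}(z\ge 0)\in\{0,1\}$. In the homogeneous BSC model $x_j^b=s_i\xi_j\in\{\pm1\}$, and $|y^b|=|f(s^b)|\le M_{f,N}$. Hence every $Z_b$ is almost surely bounded, $|Z_b|\le M_{f,N}$. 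Fresh samples are drawn at each step, so the $Z_b$ are i.i.d. By the definition of $\bar G$, their common mean is $\E[Z_b]=\bar G(w_{h,j}^0)$, i.e. the $j$-th coordinate of the population gradient computed above.

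Each $Z_b$ lies in the interval $[-M_{f,N},M_{f,N}]$, whose length is $2M_{f,N}$. Hoeffding's inequality then gives
\[
\Pr\Big(\Big|\tfrac1B\textstyle\sum_{b=1}^B Z_b-\E Z_b\Big|>\zeta\Big)\le 2\exp\!\Big(-\frac{2B^2\zeta^2}{B(2M_{f,N})^2}\Big)=2\exp\!\Big(-\frac{B\zeta^2}{2M_{f,N}^2}\Big).
\]
Suppose $B\ge C\log(dn)M_{f,N}^2/(2\zeta^2)$, reading the hypothesis with the natural parenthesization. Then the right-hand side is at most $2\exp(-(C/4)\log(dn))=2(dn)^{-C/4}$.

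Next, take a union bound over the $dn$ pairs $(h,j)$. This gives a failure probability of at most $2(dn)^{1-C/4}$. For $C>4$ this is at most $d^{-C^*}$ for some $C^*>0$, after absorbing the factor $2$ and $n\ge1$ into the exponent. Note that the initialization is deterministic, $w_h^0=1/\sqrt d$, so all neurons share the same gradient. The union bound over $h$ is therefore essentially trivial, and one could even drop the $\log n$; keeping it costs nothing.

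I do not expect a real obstacle here. The only points needing care are two. First, $\bar G$ must be taken as the unscaled correlation $\E[y\,\sigma'(w x)\,x_j]$, matching $G_B$ (without the factor $\tau$ or the interaction term). Second, the constant in the batch-size condition must be chosen large enough that the exponent exceeds $1$ after the union bound. Unlike the polynomial case, no sub-exponential tail control is needed, because boundedness of $\sigma'$ and of $x_j$ holds exactly.
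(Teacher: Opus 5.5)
Your proposal is correct and follows the paper's proof. Like the paper, you observe that each coordinate of $G_B(w_h^0)$ is an average of i.i.d. terms bounded by $M_{f,N}$ (ReLU derivative in $\{0,1\}$, Boolean $x_j$), apply Hoeffding's inequality per coordinate, and finish with a union bound over the $dn$ pairs $(h,j)$. Your explicit bookkeeping shows that $C$ must exceed $4$ for the failure probability to be polynomially small. This sharpens the paper's looser ``for $C>0$'' phrasing.
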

\begin{proof}
Let us notice that for a $j \in [d]$, and for $\sigma = \ReLU$ activation,
\begin{align}
    |G_B(w_h^0)_j| 
    & \leq M_{f,N}.
\end{align}
By Hoeffding's inequality, there exists $C^*>0$ such that with probability $1-d^{-C^*}/dn$,
\begin{align}
    \Big| G_B(w_h^0)_j- \bar G(w_h^0)_j  \Big| \leq \zeta.
\end{align}
The result follows by union bound.
\end{proof}

\begin{lemma}
    Let $z : = w^1 x $. Assume $ |1-2\delta| = \Omega(\log(d)^2/\sqrt{d})$. Assume $B \geq C \log(dn) M_{f,N}^2/2 \zeta^2$, with $\zeta =\theta(\frac{1-2\delta}{\log(d)})$. Assume the initialization scale $\tau =\theta(\frac{(1-2\delta)^2}{\log(d)d})$, the learning rate $\gamma =\theta(\frac{\log(d)}{(1-2\delta)^4})$ and $N=\theta(1)$. Then, with probability $1-o_d(1)$,
    \begin{align}
       \E_{x}| z - \frac{1}{N} \sum_{i=1}^N \alpha_i s_i | = O\left(\frac{1}{\log(d)}\right).
    \end{align}
\end{lemma}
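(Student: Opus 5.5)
Throughout, $k=d/N$ is the common cluster size, $n$ the number of hidden units, and $\zeta$ the gradient-estimation accuracy from Lemma~\ref{lem:estimated_G}. The plan is to write the post-step weights explicitly and decompose $z=(w^1)^\top x$ into a signal term plus error terms, each bounded in $L^1$ (via $L^2$ and Cauchy--Schwarz). Because $w^0=1/\sqrt d$ deterministically and the interaction term is negligible for small $\tau$, Lemma~\ref{lem:estimated_G} together with the population computation preceding it gives, for $j\in C_i$,
\[
w^1_j=\frac{1}{\sqrt d}+\gamma\tau\big((1-2\delta)\alpha_i+\eta'+\omega_j+\zeta_j\big),
\]
with $|\zeta_j|\le\zeta$, $|\omega_j|=O(n\tau\,\mathrm{polylog}(d))$, and $|\eta'|\le M_{f,N}\exp(-Ck(1-2\delta)^2/N)$. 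Hence
\[
z=\gamma\tau(1-2\delta)\sum_i\alpha_i s_i\sum_{j\in C_i}\xi_j+\frac{1}{\sqrt d}\sum_j x_j+\gamma\tau\sum_j(\eta'+\omega_j+\zeta_j)x_j .
\]
I would write $\sum_{j\in C_i}\xi_j=k(1-2\delta)+\sum_{j\in C_i}(\xi_j-(1-2\delta))$. The leading term is then $\gamma\tau k(1-2\delta)^2\sum_i\alpha_i s_i$. With $k=d/N$, the parameter choices are meant to make the prefactor $\gamma\tau d(1-2\delta)^2/N$ equal $1/N$ up to constants. I would tune the $\theta(\cdot)$ constants in $\gamma,\tau$ so the match is exact, since the stated scalings give $\log(d)^2/(1-2\delta)^2$, which suggests some bookkeeping slack that must be fixed first.

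Next I bound each remaining term.

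\textbf{Fluctuation term.} $\gamma\tau(1-2\delta)\sum_i\alpha_i s_i\sum_{j\in C_i}(\xi_j-\E\xi_j)$ is a sum of independent centered bounded variables. Its $L^2$ norm is at most $\gamma\tau(1-2\delta)\|\alpha\|_\infty\sqrt{d}$, which after matching the prefactor equals $O\big(1/(N\sqrt d\,(1-2\delta))\big)$. This is $O(1/\log d)$ because $(1-2\delta)\sqrt d\ge\log(d)^2$.

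\textbf{Initialization term.} $\frac{1}{\sqrt d}\sum_j x_j=\frac{1}{\sqrt d}\sum_i s_i\sum_{j\in C_i}\xi_j$ has mean magnitude of order $\sqrt d(1-2\delta)|\bar s|$, which is not small. This is the main obstacle. I would try to absorb it into the signal, since it is the degree-one Majority direction that already appears in $\alpha$. Failing that, I would use the freedom that $\gamma\tau$ multiplies a quantity of size $d(1-2\delta)^2$, which dominates $\sqrt d(1-2\delta)$ by the factor $\sqrt d(1-2\delta)\ge\log^2 d$. After rescaling, the initialization contribution then becomes $O(1/\log^2 d)$ relative to the signal. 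Concretely, I would normalize the claim by the prefactor, or pick $\gamma$ so that the signal coefficient exceeds the $1/\sqrt d$ init scale by a $\log d$ factor.

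\textbf{Error terms.} For the gradient-error terms I would use $\|\eta'+\omega+\zeta\|_\infty\lesssim\zeta$ and $\E|\sum_j c_jx_j|\le\sqrt{\E(\sum_j c_jx_j)^2}$. The covariance of $x$ is block-diagonal, with blocks $I+(1-2\delta)^2\mathbf 1\mathbf 1^\top$ restricted to each cluster. For a constant vector $c$ this gives $\E(\sum_j c_jx_j)^2\lesssim\zeta^2\big(d+d^2(1-2\delta)^2\big)$. Multiplying by $\gamma\tau$ yields $O(\zeta/(1-2\delta))$ relative to the unit signal, which is $O(1/\log d)$ by the choice $\zeta=\theta((1-2\delta)/\log d)$. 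The term $\eta'$ is polynomially small in $d$, and $\omega$ is small by the choice of $\tau$.

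A union bound over Lemma~\ref{lem:estimated_G}'s event gives probability $1-o_d(1)$. Summing the $L^1$ bounds gives $O(1/\log d)$.
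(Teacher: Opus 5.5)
Your decomposition is the same as the paper's. You write $w^1_j$ explicitly, split $\sum_{j\in C_i}\xi_j=k(1-2\delta)+\Delta_i$, and bound the fluctuation and gradient-error terms separately. Your fluctuation bound is correct.

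Your treatment of the gradient-error terms is sharper than the paper's. Going through $\|\Sigma\|_{\mathrm{op}}\lesssim 1+(1-2\delta)^2k$ gives $O(\zeta/|1-2\delta|)=O(1/\log d)$. The paper's crude bound $|\gamma\tau d\,\eta''|$ only gives $O\big(1/(|1-2\delta|\log d)\big)$, since $\gamma\tau d=\Theta((1-2\delta)^{-2})$; its ``$=O(|1-2\delta|/\log d)$'' drops this factor.

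Two small corrections:
\begin{itemize}
\item There is no slack in the prefactor. The stated $\gamma,\tau$ give $\gamma\tau=\Theta\big(1/(d(1-2\delta)^2)\big)$ exactly, so $\gamma\tau k(1-2\delta)^2=\Theta(1/N)$ already.
\item With this deterministic initialization, $|w^{0\top}x|$ can be of order $\sqrt d$. The interaction term is therefore $|\omega_j|\lesssim n\tau\sqrt d$, not $n\tau\,\mathrm{polylog}(d)$. This is still $\lesssim\zeta$ for the stated $\tau$ as long as $n\lesssim\sqrt d$.
\end{itemize}

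The genuine gap is the initialization term you flag yourself, and it cannot be closed: the statement as written is false. With $w^0_j=1/\sqrt d$,
\[
\frac{1}{\sqrt d}\sum_{j=1}^d x_j=\frac{\sqrt d\,(1-2\delta)}{N}\sum_{i=1}^N s_i+\frac{1}{\sqrt d}\sum_{i=1}^N s_i\Delta_i .
\]
Since $N$ is odd, $|\sum_i s_i|\ge 1$. Under $|1-2\delta|=\Omega(\log^2 d/\sqrt d)$, the first piece therefore has magnitude $\gtrsim\log^2 d/N$. On your high-probability event, every other contribution to $z-\frac1N\sum_i\alpha_i s_i$ is $O(1)$ in $L^1$. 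Hence $\E_x|z-\frac1N\sum_i\alpha_is_i|\gtrsim\log^2 d/N$.

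None of your remedies rescues the stated claim:
\begin{itemize}
\item Absorbing the first piece into the signal changes the target. It also cannot handle the second piece, which has $L^2$ norm $2\sqrt{\delta(1-\delta)}=\Theta(1)$ and is not a function of $s$.
\item Enlarging $\gamma$ and normalizing by the prefactor proves a different statement. That statement is about $z/\Lambda$, with $\gamma$ larger than stated by a factor $\gtrsim\sqrt d\,|1-2\delta|\log d$.
\end{itemize}

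The paper's proof does not handle this term either. In its step (a) it places $w^0_j=1/\sqrt d$ inside the bracket $\gamma\tau(\eta'+\zeta+1/\sqrt d+\cdots)$, i.e.\ it treats the initialization as $\gamma\tau/\sqrt d$. This undercounts it by the factor $(\gamma\tau)^{-1}=\Theta(d(1-2\delta)^2)\gtrsim\log^4 d$.

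The clean repair changes the initialization scale rather than the target. The ReLU derivative at initialization depends only on the sign of $w^{0\top}x$. So any constant positive initialization $w^0_j=\epsilon$ gives the same population gradient and hence the same $\alpha$. Choosing $\epsilon=O\big(1/(d\,|1-2\delta|\log d)\big)$ makes both pieces above $O(1/\log d)$. In particular $\epsilon=\gamma\tau/\sqrt d$ qualifies, and this is what the paper's computation implicitly uses. With that change, your remaining bounds complete the argument.
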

\begin{proof}
For $j \in C_i$, we have:
    \begin{align}
        w^{1}_j &= w^0_j +\gamma \tau\bar G(w^0) + \gamma \tau( G(w^0) - \bar G(w^0)) + O \left(\gamma \tau^2 d M_{f,N} \right)\\
        & \overset{(a)}{=}  \tau \gamma (1-2\delta) \alpha_i + \gamma \tau (\eta' +\zeta+ 1/\sqrt{d} + O( \tau d M_{f,N}))\\
        & \overset{(b)}{=} \tau \gamma \Big((1-2\delta) \alpha_i + \eta''\Big),
    \end{align}
where in $(a)$ we used the assumption on the initialization and Lemma~\ref{lem:estimated_G}, and in $(b)$ we denote $\eta'':=\eta' +\zeta+ 1/\sqrt{d} + O( \tau d M_{f,N}) $, and note that by the assumptions on the batch size, learning rate and initialization scale:
\begin{align}
    \eta'' = O\Big( \max\Big\{ \frac{|1-2\delta|}{\log(d)}, d^{-1/2}\Big\} \Big) = O\Big(\frac{|1-2\delta|}{\log(d)}\Big) .
\end{align}

Let $\mathcal E_\xi$ be the event on which, for all $i\in[N]$,
\begin{align}
\Big|\sum_{l=1}^k \xi_{il} - k(1-2\delta)\Big| \le \sqrt{k}\,\log d .
\end{align}
By Hoeffding's inequality and a union bound over $i$, $\pr(\mathcal E_\xi)\ge 1-d^{-C}$
for some constant $C>0$. 

On $\mathcal E_\xi$, using the expression for $w^1_j$ derived above, we have
\begin{align}
z = (w^1)^\top x
&= \gamma\tau\Big((1-2\delta)\sum_{i=1}^N \alpha_i s_i \sum_{j\in C_i}\xi_j
\;+\; d\,\eta''\Big).
\end{align}
Write
\[
\sum_{j\in C_i}\xi_j = k(1-2\delta) + \Delta_i,
\qquad |\Delta_i|\le \sqrt{k}\log d ,
\]
which holds on $\mathcal E_\xi$. Substituting gives
\begin{align}
z
&= \gamma\tau\Big(
(1-2\delta)^2 k \sum_{i=1}^N \alpha_i s_i
+ (1-2\delta)\sum_{i=1}^N \alpha_i s_i\,\Delta_i
+ d\,\eta''
\Big)\\
& \overset{(a)}{=} c_0\cdot \frac{1}{N}\sum_{i=1}^N \alpha_i s_i
+ \gamma\tau(1-2\delta)\sum_{i=1}^N \alpha_i s_i\,\Delta_i
+ \gamma\tau\, d\,\eta'',
\end{align}
for some $c_0>0$, where in $(a)$ we used the assumptions on $\gamma,\tau$ and $k =d/N$.
We now bound the remainder terms. 
\begin{align}
\Big|\gamma\tau(1-2\delta)\sum_{i=1}^N \alpha_i s_i\,\Delta_i\Big|
&\le O\!\left(\gamma\tau\,|1-2\delta|\,\sqrt d\,\log d\right)\\
& \leq O\!\left(\frac{\log(d)}{\sqrt{d}|1-2\delta|}\right),
\end{align}
where the second inequality uses $\gamma\tau=\Theta((1-2\delta)^{-2}d^{-1})$.
Using the assumption $|1-2\delta|=\Omega(\log (d)^2/\sqrt d)$, this is
$O(1/\log d)$.

Next, recall the bound $\eta'' = O(|1-2\delta|/\log(d)$, which implies
\begin{align}
|\gamma\tau\, d\,\eta''|
= O\!\left(
\gamma\tau d\frac{|1-2\delta|}{\log d}\right) = O\left(\frac{|1-2\delta|}{\log(d)}
\right).
\end{align}
Since
$|1-2\delta|\leq 1$, this yields
$|\gamma\tau\, d\,\eta''| = O(1/\log d)$.

Combining the above bounds, on the event $\mathcal E_\xi$ we obtain
\[
\Big| z - \frac{1}{N}\sum_{i=1}^N \alpha_i s_i \Big|
= O\!\left(\frac{1}{\log d}\right).
\]
Since $\pr(\mathcal E)=1-o_d(1)$, taking expectation over a fresh draw of $x$
gives
\[
\E_x\Big| z - \frac{1}{N}\sum_{i=1}^N \alpha_i s_i \Big|
= O\!\left(\frac{1}{\log d}\right).
\]
\end{proof}

\begin{lemma} \label{lem:certificate_homogeneous}
    Let $f$ satisfy the non-degeneracy Assumption~\ref{ass:nondegeneracy_f} and let the biases be sampled as $b_h \overset{iid}{\sim} \Unif[-A,A] $, with $A :=2N 2^{N/2} \sqrt{\Var(f)}$, for $h \in [n]$. Then, there exist a constant $\Delta>0$ such that if $n\geq \frac{C \log(d) A}{\Delta N} $, there exists 
    a vector $a^* \in \bR^{n}$ such that $\| a^* \|_{\infty} \leq \frac{2}{\Delta} M_{f,N} $ and
    \begin{align} \label{eq:astar}
        f(s) = \sum_{h=1}^n  a^*_h \sigma( s^T \alpha + b_h), 
    \end{align}
    for all $s \in \{ \pm 1 \}^N$.
    \end{lemma}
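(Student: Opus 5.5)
We build the certificate explicitly from ReLU units, using the projection values $u(s):=s^\top\alpha$. Let $U=\{u_1<\dots<u_M\}$ be the distinct values of $u(s)$ over $s\in\{\pm1\}^N$, so $M\le 2^N$.

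\emph{Gap.} By Assumption~\ref{ass:nondegeneracy_f}, whenever $f(s)\neq f(t)$,
\[
|u(s)-u(t)| = \tfrac12\Big|\sum_i (s_i-t_i)\sum_T \hat f(T)c_{T,i}\Big|\ge \Delta/2 .
\]
Consequently $f$ is projection-consistent, i.e.\ $f(s)=F(u(s))$ for a well-defined $F:U\to\bR$. Values of $U$ closer than $\Delta/2$ carry the same label, so they can be merged. We therefore keep only representatives $u_{m_1}<\dots<u_{m_L}$ at which $F$ changes, with consecutive gaps at least $\Delta/2$.

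\emph{Range.} We bound $|u(s)|\le\sum_i|\alpha_i|$. Since $|c_{T,i}|\le 1$, Cauchy--Schwarz and Parseval give $|\alpha_i|\le \tfrac12\sum_T|\hat f(T)|\le \tfrac12 2^{N/2}\|f\|_2$. Hence $|u(s)|\le N2^{N/2}\sqrt{\E f^2}$, which is at most $A/2$ once $f$ is centered or the constant term is handled separately (see below). Thus all $u$ values lie in $[-A/2,A/2]$.

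\emph{Interval hitting.} Partition $[-A,A]$ into $O(A/\Delta)$ intervals of length $\Delta/8$. Each interval misses all $n$ uniform biases with probability $(1-\Delta/(16A))^n\le e^{-n\Delta/(16A)}$. A union bound shows that with $n\ge C\log(d)A/\Delta$ every interval contains a bias, with probability $1-d^{-C}$. In particular, for each breakpoint there is a bias $b_{h_\ell}$ with $-b_{h_\ell}$ inside the open gap between consecutive distinct values $u_{m_{\ell-1}}$ and $u_{m_\ell}$, at distance at least $\Delta/8$ from both. There is also a bias with $-b_{h_0}$ below $\min U$, which is possible because $U\subset[-A/2,A/2]$.

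\emph{Piecewise-linear construction.} On the finite set $U$, write $F$ as the continuous piecewise-linear interpolant $g$ with kinks only at the chosen points $-b_{h_\ell}$, taking value $F(u_{m_\ell})$ on the $\ell$-th plateau. Specifically, $g$ is constant on each plateau and makes a linear transition inside each gap. Each transition is a difference of two ReLUs, $\frac{F_\ell-F_{\ell-1}}{r_\ell-l_\ell}\big(\ReLU(u-l_\ell)-\ReLU(u-r_\ell)\big)$, where $l_\ell<r_\ell$ are two selected kink points inside the gap. The hitting argument gives two such biases per gap by halving the interval length. The base value $F(u_{m_1})$ comes from the unit with $-b_{h_0}<\min U$, using $\ReLU(u+b_{h_0})$, which is linear on $U$.

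The main obstacle is making the base term a constant. We handle it with a pair of low biases, $\ReLU(u+b)-\ReLU(u+b')=b-b'$ on $U$, which gives a constant of size at least $\Delta/8$ and hence coefficient at most $O(M_{f,N}/\Delta)$. Every other coefficient is bounded by $|F_\ell-F_{\ell-1}|/(r_\ell-l_\ell)\le 2M_{f,N}/(\Delta/8)$; after adjusting constants, for example by choosing $r_\ell-l_\ell\ge\Delta/4$ through finer hitting, this becomes $\le 2M_{f,N}/\Delta$. Setting all remaining $a^*_h=0$ yields exact equality \eqref{eq:astar} on $\{\pm1\}^N$ together with the stated $\ell_\infty$ bound.
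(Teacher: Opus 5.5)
Your construction is correct, modulo the two points in the second paragraph, and it takes a genuinely different route from the paper.

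\textbf{Comparison with the paper.} The paper orders all $2^N$ points so that $v_l=s_l^\top\alpha$ is strictly increasing, asserting that the margin assumption makes $s\mapsto s^\top\alpha$ injective, and takes $\Delta$ to be the minimal gap. It then draws one bias with $-b_{h_j}$ slightly below each $v_j$, notes that $M_{l,j}=\ReLU(v_l+b_{h_j})$ is lower triangular with positive diagonal, and sets the output weights to $M^{-1}F$. You instead use only projection-consistency and group the values of $U$ into label plateaus. You then write the interpolant as a telescoping sum of ramps $\ReLU(u-l_\ell)-\ReLU(u-r_\ell)$ placed inside the inter-plateau gaps, plus a constant produced by two units that are active on all of $U$.

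Your route buys two things. First, it does not need injectivity, which the margin assumption does not provide. For $N=3$ and $f(s)=s_1$ one gets $\alpha=(1/2,0,0)$ because $\hat\Maj_3(2)=0$, yet the margin condition holds with $\Delta=2$; your merging step is exactly what is needed there. Second, each of your coefficients is bounded locally by (label jump)/(ramp width), independently of $2^N$ and of the spread of $U$. In the triangular inversion, the diagonal entries of $M$ are below $\Delta/2$, so already $\|M^{-1}\|_\infty>2/\Delta$, and the entries of $M^{-1}$ compound along the chain. An explicit $O(M_{f,N}/\Delta)$ weight bound is therefore much easier to certify your way. The paper's route, in exchange, is a one-line linear-algebra argument mirroring the polynomial certificate, but it pays with the injectivity requirement and only implicit control of the weights.

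\textbf{Two points to tidy.} First, your final constant is off. The choice $r_\ell-l_\ell\ge\Delta/4$ gives $2M_{f,N}/(\Delta/4)=8M_{f,N}/\Delta$, not $2M_{f,N}/\Delta$. Moreover, since a gap can be as short as $\Delta/2$ when $\Delta$ is the margin, a single ramp cannot do better than roughly $4M_{f,N}/\Delta$. This is harmless because the $\Delta$ in the statement is existentially quantified. You can take it to be a fixed fraction of the margin, which only changes $C$ in the condition on $n$, or split each ramp across several units landing in the same tile.

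Second, the inclusion $U\subseteq[-A/2,A/2]$ really requires $\hat f(\emptyset)=0$, or $\E f^2$ in place of $\Var(f)$ in the definition of $A$. The constant Fourier coefficient enters every $\alpha_i$ through $c_{\emptyset,i}=\hat\Maj_N(1)\neq 0$. Your pair of always-active units fixes the output constant, not the range of $u$, so the constant term cannot be ``handled separately'' that way. Without centering the lemma can actually fail. For example, take $N=1$ and $f(s)=10+s_1$: this gives $U=\{\pm 5.5\}$ and $A=2\sqrt2$, so every unit vanishes at $u=-5.5$, where $f=9$. The paper's proof simply asserts $I\subseteq[-A,A]$, so it rests on the same implicit centering assumption.
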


\begin{proof}
    Let us first enumerate the vectors $s_l \in \{ \pm 1 \}^N$, $l \in [2^N]$, so that the real numbers 
    \begin{align}
        v_l = \sum_{i=1}^N s_{l,i} \alpha_i
    \end{align}
    are in strictly increasing order, i.e. $v_1 < v_2<...<v_{2^N}$. This is possible since by Assumption~\ref{ass:nondegeneracy_f} the map $s \mapsto \sum_i s_i \alpha_i$ is injective on $\{ \pm 1 \}^N$. 

    Let $\Delta = \min_{1 \leq l <2^N} (v_{l+1}-v_l) >0$. Fix any $\delta \in (0,\Delta/2)$ and consider the interval $I= [-v_{2^N} -\delta, - v_1 +\delta ] $. Note that $I \subseteq [-A,A].$ Partition $I$ into $2^N$ disjoint sub-intervals of width $\Delta/2$: $I_j = [-v_j + \delta/2, -v_j + \delta] $. Now, if $n$ biases are sampled uniformly from $[-A,A]$, for each $j\in [2^N]$, the probability that none of the $n$ samples falls into $I_j$ is 
    \begin{align}
        \left( 1- \frac{\Delta/2}{2A} \right)^{n} \leq \exp\left(- \frac{\Delta}{2A}n \right).
    \end{align}
    By union bound, the probability that every interval $I_j$ is hit at least once is 
    \begin{align}
        1- 2^N \exp\left(- \frac{\Delta}{2A}n \right) \geq 1-d^{-C'},
    \end{align}
    if $n \geq \frac{C \log(d) A}{\Delta N}$, for suitable constants $C,C'>0$. For each $j \in [2^N]$, choose one $b_{h_j} $ such that $b_{h_j}  \in I_j$, and denote by $J = \{ h_j:j\in [2^N]\} \subset [n]$ such set. Let $M \in \bR^{2^N \times 2^N}$ be such that
    \begin{align}
        M_{l,j} :&= \ReLU(v_l +b_j) \\
        & = v_l-v_j + \epsilon,
    \end{align}
    where $\epsilon \in (0,\Delta/2)$.
    Note that, if $l<j$, then $v_l-v_j<\Delta$, thus $M_{l,j} =0$, while if $ l \geq j$, $v_l-v_j\geq0 $, thus $ M_{l,j} >0$. Thus, $M$ is lower triangular, with strictly positive entries on the diagonal, and thus invertible. Let $c^* \in \bR^{2^N}$ be such that
    \begin{align}
        c^* = F M^{-1},
    \end{align}
    where $F \in \bR^{2^N}$ is a vector such that $F_l = f(s_l)$. Then, $a^* \in \bR^n $ defined by $ a^*_{h_j} = c^*_{j} \mathds{1}(h_j \in J) $ satisfies~\eqref{eq:astar}. Moreover, 
    \begin{align}
        \| a^*\|_{\infty} \leq \| M^{-1}\|_\infty \| F\|_{\infty} \leq \frac{2}{\Delta}\cdot M_{f,N}.
    \end{align}
\end{proof}

From the previous Lemmas, it follows that with probability $1-d^{-C}$, for some $C>0$,
\begin{align}
    \E_{x,y} & \left|y- \sum_{h=1}^n a_h^* \sigma(w^1 x + b_h)\right|   \\
    & \leq \E_{x,y} \left| y- \sum_{h=1}^n a_h^* \sigma(\alpha s + b_h)\right| +\E_{x,y} \left| \sum_{h=1}^n a_h^* ( \sigma(\alpha s + b_h) - \sigma(w^1 x + b_h)) \right|  \\
    & \leq n \| a^*\|_{\infty} \E_x \left| \sigma(\alpha s + b_h) - \sigma(w^1 x + b_h) \right| \\
    & \leq n \| a^*\|_{\infty} \E_x \left| \alpha s  - w^1 x \right|\\
    & \leq O \left( \frac{\log d 2^N}{\Delta N} \frac{1}{\Delta} \frac{1}{\log(d)^2} \right) =  O\left(\frac{2^N}{N\Delta^2 \log(d)}\right).
\end{align}

\noindent 
We then use the well known result of Theorem~\ref{thm:convergence_SGD_martingale}. In particular, we note that $\nabla_{a^t} \NN(x;(w^1,a^t,b)) \leq C \sqrt{n}$, for some $C>0$. Thus, choosing $\cB = \sqrt{n} \cdot \| a^*\|_{\infty} = O\Big(\frac{\sqrt{\log(d)}2^{N/2} }{\Delta^{3/2} \sqrt{N}} \Big)$, $\xi = O\left(\frac{\sqrt{\log(d)}}{(\Delta N)^{1/2}}\right)$, $T_2 = O\left(\frac{\log(d)^2 2^N}{\epsilon^2 \Delta^4 N^2}\right)$ and $\gamma_2 = \theta(\frac{\epsilon \Delta N}{\log(d)})$.

\section{Clustering Algorithms} \label{sec:clustering_algo}

\paragraph{Comparison with classical clustering methods.}
Let us briefly compare Theorem~\ref{thm:homogeneous_clusters} with the guarantees of two classical  unsupervised clustering methods. 
In the homogeneous BSC model with equal cluster sizes $|C_i|=k=d/N$ (Example~\ref{exa:treemodel}), the population covariance matrix takes the form
\begin{align*}
\Sigma=(1-m^2)I_d+m^2\sum_{i=1}^N \mathbf{1}_{C_i}\mathbf{1}_{C_i}^\top,
\end{align*} 
where $m=1-2\delta$ is the signal mean. Thus, $\Sigma$ consists of the identity plus $N$ spikes of size
\begin{align*}
    m^2(k-1)\asymp (1-2\delta)^2(d/N).
\end{align*}
\emph{Spectral clustering} succeeds in weakly recovering the clusters once these spikes cross the Baik-Ben Arous-Péché (BBP) threshold, namely when 
$$\frac{d}{N}(1-2\delta)^2 \gtrsim c$$
for a suitable constant $c$~\cite{baik2005phase,benaych2011eigenvalues}. If $\frac{d}{N}(1-2\delta)^2 = \omega_d(1)$ (i.e. if $v_{\rm sum} = \omega_d(1)$), recovery becomes asymptotically exact, see details below. Our \emph{layerwise-SGD} method attains a vanishing error whenever $v_{\rm sum}=d(1-2\delta)^2 = \Omega_d(\log(d)^2)$, which (for fixed $N$) matches the spectral detectability up to logarithmic terms. We remark that spectral recovery uses $B =\theta_d(d)$ samples, as opposed to the $B=\Omega(\log^2(d))$ required by layerwise SGD. 

On the other hand, a simple \emph{covariance–thresholding} baseline, where one estimates $C_{ij}=\tfrac{1}{B}\sum_{b=1}^B x_i^{(b)}x_j^{(b)}$ and threshold at $\tfrac{1}{2}(1-2\delta)^2$, succeeds with high probability using $B\gtrsim (\log d)/(1-2\delta)^4$ samples (this can be shown by simple Hoeffding and union bounds, see below), which is comparable to layerwise SGD (up to $(1-2\delta)^2$ factors).

\paragraph{Spectral Clustering.}
Let $x^1, \dots, x^m \in \mathbb{R}^d$ be $m \sim \Theta(d)$ independent observations of a $d$-dimensional random vector. Assume the covariance matrix $\Sigma \in \mathbb{R}^{d \times d}$ of the distribution generating these observations takes the form
\[
\Sigma = I_d + \Delta,
\]
where $\Delta$ is a rank-$N$ symmetric perturbation with nonzero eigenvalues $\lambda_1, \dots, \lambda_N$. This setting models $r$ clusters in the data, each inducing a structure in the covariance. Under this model, it is possible to recover the eigenvalues $\lambda_1, \dots, \lambda_N$, as well as their associated eigendirections, provided that
\[
|\lambda_i| \geq \theta,
\]
for a threshold $\theta = \Theta(1)$. In the regime where $|\lambda_i| \gg 1$, recovery becomes asymptotically exact. This justifies the use of spectral methods such as PCA for recovering low-rank structure in high-dimensional data, under appropriate signal-to-noise conditions.

Now consider our model where the data consist of $N$ clusters, each of size $d/N$, and the covariance structure satisfies:
\begin{itemize}
  \item For coordinates $i, j$ within the same cluster: $\mathrm{Cov}(x_i, x_j) = (1 - 2\delta)^2$,
  \item For coordinates in different clusters: $\mathrm{Cov}(x_i, x_j) = 0$.
\end{itemize}
\noindent 
In this case, the rank-$N$ perturbation $\Delta$ has eigenvalues of order
\[
\lambda_i \sim \frac{d}{N}(1 - 2\delta)^2.
\]
To ensure that the perturbation is detectable, it suffices that
\[
\lambda_i \gg 1 \quad \Longleftrightarrow \quad \frac{d}{N}(1 - 2\delta)^2 \gg 1 \quad \Longleftrightarrow \quad (1 - 2\delta)^2 \gg \frac{N}{d}.
\]
This translates to the condition
\[
\left| \delta - \frac{1}{2} \right| \gg \frac{1}{\sqrt{d}}.
\]


\paragraph{`Naive' Clustering.}
Let $x^1, \dots, x^m \in \mathbb{R}^d$ be $m$ independent observations of a $d$-dimensional random vectors sampled from a clustered distribution such that coordinates within each cluster have covariance $(1-2\delta)^2$ and coordinates in different clusters are independent. Then, one could think of cluster the coordinates by estimating the pairwise covariance, i.e.:
\begin{align}
C_{i,j} :=\frac 1 T \sum_{t=1}^T x_i^t x_j^t. 
\end{align}
By Hoeffding's inequality:
\begin{align}
    \pr \left( | C_{i,j} - (1-2\delta)^2 \mathds{1}(c(i)=c(j))| \geq (1-2\delta)^2/2 \right) \leq 2 \exp(-(1-2\delta)^4 T/8) \leq 1/d^2,
\end{align}
if $T \geq 16 \log(d) /(1-2\delta)^4 $.

\section{Deferred Proofs}
\label{app:deferred_proofs}

\begin{lemma}[Quantitative nondegeneracy under random activation coefficients]\label{lem:quantitative-beta}
Fix $a>0$ (e.g.\ $a=v_{\rm sum}$) and $\mu>0$. Expand $\sigma'$ in the Hermite basis aligned with variance $a$,
\[
\sigma'(x)\;=\;\sum_{n=0}^{M} \alpha_n\,\He^{(a)}_n(x),
\qquad
\He^{(a)}_n(x):=a^{n/2}H_n\!\big(x/\sqrt{a}\big),
\]
where $H_n$ are the probabilists' Hermite polynomials. Assume the coefficients are independent Gaussians
$\alpha_0,\ldots,\alpha_M \overset{\text{i.i.d.}}{\sim} \cN(0,1)$ and $M\ge N$.
Let
\[
\psi_\mu(x)\;:=\;\E_{G\sim\cN(0,\mu)}\big[\sigma'(x+G)\big],
\qquad
\beta_k(\mu)\;:=\;\frac{1}{k!}\,\E_{X\sim\cN(0,1)}\!\big[\psi_\mu(\sqrt{a}\,X)\,H_k(X)\big],
\quad 0\le k\le N.
\]
Then each $\beta_k(\mu)$ is (marginally) Gaussian, $\beta_k(\mu)\sim \cN(0,s_k^2)$, with variance
\begin{equation}\label{eq:sk2}
s_k^2 \;=\; a^{k}\sum_{r=0}^{\lfloor (M-k)/2\rfloor}
\Bigg[\frac{(k+2r)!}{k!}\,\frac{(\mu/2)^r}{r!}\Bigg]^2 \;>\;0.
\end{equation}
In particular, for any $\tau>0$,
\begin{equation}\label{eq:small-ball}
\Pr\!\Big(\min_{0\le k\le N}|\beta_k(\mu)|\le \tau\Big)
\;\le\; (N+1)\,\sqrt{\frac{2}{\pi}}\;\frac{\tau}{s_{\min}},
\qquad
s_{\min}:=\min_{0\le k\le N}s_k.
\end{equation}
Equivalently, choosing $\tau=\theta\,s_{\min}/(N+1)$ with $\theta\in(0,1)$ yields
\[
\Pr\!\Big(\min_{0\le k\le N}|\beta_k(\mu)|>\tfrac{\theta}{N+1}\,s_{\min}\Big)
\;\ge\; 1-\sqrt{\tfrac{2}{\pi}}\,\theta.
\]
\end{lemma}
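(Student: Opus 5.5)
The plan is to show that $\beta_k(\mu)$ is a linear functional of the Gaussian vector $(\alpha_0,\dots,\alpha_M)$. Marginal Gaussianity is then immediate, and the variance is the squared norm of the coefficient vector. Concretely, by linearity,
\[
\beta_k(\mu)=\sum_{n=0}^M \alpha_n\, \gamma_{n,k},\qquad \gamma_{n,k}:=\frac{1}{k!}\E_X\Big[\E_G\big[\He^{(a)}_n(\sqrt a X+G)\big]H_k(X)\Big],
\]
so $\beta_k(\mu)\sim\cN(0,\sum_n\gamma_{n,k}^2)$. The whole task reduces to computing $\gamma_{n,k}$ explicitly.

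\textbf{Step 1: Gaussian smoothing of a scaled Hermite polynomial.} We compute $\E_{G\sim\cN(0,\mu)}[\He^{(a)}_n(x+G)]$ using generating functions. The scaled polynomials satisfy $\sum_n \He^{(a)}_n(x)t^n/n!=\exp(tx-at^2/2)$. Averaging over $G$ multiplies this by $\exp(\mu t^2/2)$. Expanding gives
\[
\E_G\He^{(a)}_n(x+G)=\sum_{r\le n/2}\frac{n!}{(n-2r)!\,r!}(\mu/2)^r\He^{(a)}_{n-2r}(x).
\]
This is the heat-semigroup identity $e^{(\mu/2)\partial^2}$ applied to $\He^{(a)}_n$, using $\partial\He^{(a)}_n=n\He^{(a)}_{n-1}$.

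\textbf{Step 2: project onto $H_k$.} We have $\He^{(a)}_m(\sqrt a X)=a^{m/2}H_m(X)$ and $\E[H_mH_k]=k!\,\delta_{mk}$. Hence only $n=k+2r$ contributes, giving
\[
\gamma_{k+2r,k}=a^{k/2}\frac{(k+2r)!}{k!\,r!}(\mu/2)^r,
\]
and $\gamma_{n,k}=0$ otherwise. Summing the squares over $r\le\lfloor(M-k)/2\rfloor$ yields exactly \eqref{eq:sk2}. Positivity follows from the $r=0$ term, which equals $a^k>0$ and is present because $k\le N\le M$.

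\textbf{Step 3: small-ball bound.} For $Z\sim\cN(0,s^2)$ the density is at most $1/(s\sqrt{2\pi})$, so $\Pr(|Z|\le\tau)\le 2\tau/(s\sqrt{2\pi})=\sqrt{2/\pi}\,\tau/s\le\sqrt{2/\pi}\,\tau/s_{\min}$. A union bound over the $N+1$ indices $k$ gives \eqref{eq:small-ball}; no independence between the $\beta_k$ is needed, which is why only marginal Gaussianity is claimed. Substituting $\tau=\theta s_{\min}/(N+1)$ gives the final form.

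The only real bookkeeping risk is in Step 1: getting the factorials and the $(\mu/2)^r$ powers right, and keeping the scaling convention of $\He^{(a)}$ consistent so that the projection in Step 2 produces the factor $a^{k}$ after squaring. I would verify Step 1 by checking small cases, $n=2$ and $n=3$, directly.
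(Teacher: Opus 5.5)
Your proposal is correct and matches the paper's proof. Both write $\beta_k(\mu)$ as a linear form in the i.i.d.\ coefficients via the heat-semigroup action of Gaussian smoothing on the scaled Hermite basis, note that projecting onto $H_k$ keeps only the indices $n=k+2r$ (giving coefficients $a^{k/2}\frac{(k+2r)!}{k!\,r!}(\mu/2)^r$), and finish with the Gaussian density bound and a union bound over $k$; the only difference is that you derive the smoothing identity from the generating function $\exp(tx-at^2/2)$, whereas the paper states it directly.
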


\begin{proof}
The Gaussian smoothing operator is the heat semigroup $W_{\mu}:=e^{(\mu/2)\partial_x^2}$, which acts lower-triangularly on the Hermite basis:
\[
W_{\mu}\,\He^{(a)}_{k+2r}
\;=\;\sum_{u=0}^{r} \frac{(k+2r)!}{(k+2r-2u)!}\,\frac{(\mu/2)^u}{u!}\;\He^{(a)}_{k+2r-2u}.
\]
Projecting onto order $k$ and noting that
$\frac{1}{k!}\E[\He^{(a)}_k(\sqrt{a}X)H_k(X)]=a^{k/2}$ and is zero for other orders, we obtain the explicit linear representation
\[
\beta_k(\mu)
\;=\; a^{k/2}\sum_{r=0}^{\lfloor (M-k)/2\rfloor}
\alpha_{k+2r}\,\frac{(k+2r)!}{k!}\,\frac{(\mu/2)^r}{r!},
\qquad 0\le k\le N.
\]
Since the $\alpha_n$ are independent $\cN(0,1)$, each $\beta_k(\mu)$ is Gaussian with mean $0$ and variance given by the sum of squares of the coefficients, which is precisely \eqref{eq:sk2}. In particular $s_k^2>0$ because the $r=0$ term contributes $a^k$.

For the small-ball bound, recall that if $Z\sim \cN(0,s^2)$, then
$\Pr(|Z|\le \tau)=\int_{-\tau}^{\tau}\frac{1}{\sqrt{2\pi}s}e^{-x^2/(2s^2)}dx
\le \frac{2\tau}{\sqrt{2\pi}s}=\sqrt{\frac{2}{\pi}}\frac{\tau}{s}$.
Applying this to each $\beta_k(\mu)\sim \cN(0,s_k^2)$ and taking a union bound over $k=0,\ldots,N$ yields \eqref{eq:small-ball}.
\end{proof}

\begin{remark}
The mapping from the Hermite coefficients $(\alpha_0,\ldots,\alpha_M)$ of $\sigma'$ to $(\beta_0(\mu),\ldots,\beta_N(\mu))$ is linear and upper-triangular in parity blocks with strictly positive diagonal; thus $s_k^2\ge a^k$ and $s_{\min}\ge 1$. The formula \eqref{eq:sk2} makes the dependence on $(\mu,a,M)$ explicit and can be used to pick a quantitative margin $\tau$ with the desired probability level.
\end{remark}

\section{Fourier--Walsh expansion of Majority.}
\label{app:majority_Fourier}
Let $\Maj_N : \{\pm 1\}^N \to \{\pm 1\}$ denote the majority function,
\[
\Maj_N(s) := \sign\!\left(\sum_{i=1}^N s_i\right),
\]
with can be broken arbitrarily (e.g., $\Maj_N(0)=1$ when $N$ is even).
The Fourier--Walsh expansion of $\Maj_N$ is given by
\[
\Maj_N(s) = \sum_{S \subseteq [N]} \hat{\Maj}_N(S)\,\chi_S(s),
\qquad
\chi_k(s) := \prod_{i \in S} s_i,
\]
where the Fourier--Walsh coefficients are defined as
\[
\hat{\Maj}_N(S) := \E_{s \sim \Unif(\{\pm 1\}^N)}\!\left[\Maj_N(s)\,\chi_S(s)\right].
\]
By symmetry, $\hat{\Maj}_N(k)$ depends only on $|S|$, and $\hat{\Maj}_N(S)=0$
whenever $|S|$ is even. For odd $|S|=k$, the coefficients admit the explicit formula
\[
\widehat{\Maj}_N(S)
=
(-1)^{\frac{k-1}{2}}
\frac{\binom{k-1}{\frac{k-1}{2}}}{2^{k-1}}
\cdot
\frac{2\,\binom{N-1}{\frac{N-1}{2}}}{2^N \binom{N-1}{k-1}},
\]

Precise asymptotics and proofs can be found in~\cite{o'donnell_2014}, Chapter 5.

\section{Experiments Details}
\label{app:experiments_details}
In this section, we provide more details about the experiments on real data shown in Section~\ref{sec:experiments}.
\subsection{Architecture and Loss Function}
\paragraph{Architecture.}
We use a two-layer fully-connected neural network with ReLU activations,
$f(x) = W_2\,\mathrm{ReLU}(W_1 x + b_1) + b_2$,
where $W_1 \in \mathbb{R}^{H \times d}$, $W_2 \in \mathbb{R}^{C \times H}$,
$H = 128$ is the hidden width, $d$ is the input dimension, and $C$ is the number
of classes.  Weights are initialized as
$W_1 \sim \mathcal{N}(0, 1/d)^{d \times H}$ and
$W_2 \sim \mathcal{N}(0, 1/H)^{H \times C}$; biases are initialized to zero.

\paragraph{Loss function.}
We minimise the cross-entropy loss with a softmax output layer.
Gradients are clipped to $[-5, 5]$ component-wise at every step.

\subsection{Datasets and Preprocessing}

\paragraph{GSE96583 (single-cell RNA-seq).}
We use batch~2 of the GSE96583 dataset \cite{kang2018multiplexed},
restricting to control (unstimulated) singlet cells.
Cells are assigned to three merged classes:
\emph{B cells}, \emph{Monocytes} (CD14$^+$ and FCGR3A$^+$ merged),
and \emph{T/NK} (CD4 T, CD8 T, and NK cells merged);
Dendritic cells and Megakaryocytes are excluded.
After filtering cells with fewer than 500 UMI counts and genes expressed
in fewer than 25 cells, we obtain $n \approx 11{,}990$ cells over
$\sim\!10{,}500$ genes.
Raw counts are library-size normalised to $10^4$ counts per cell
and log$1\!+\!x$ transformed.
For each value of $d$, the top-$d$ most variable genes are selected
by dispersion (variance/mean), using no class label information.


\subsection{Training Procedure}
We use the Adam optimiser \cite{kingma2014adam} with learning rate
$5 \times 10^{-4}$, $\beta_1 = 0.9$, $\beta_2 = 0.999$,
$\varepsilon = 10^{-8}$, batch size $256$, and $100$ epochs.

\paragraph{Train/test split and seeds.}
For each experiment, we perform a stratified $80/20$ train/test split.
The training set is then subsampled to $n$ examples (stratified by class)
for each value of $n$ considered.
All results are averaged over 5 independent random seeds; shaded bands
report $\pm 1$ standard deviation.

\end{document}